\def\eqref#1{equation~\ref{#1}}
\def\1{\bm{1}}
\DeclareMathAlphabet{\mathsfit}{\encodingdefault}{\sfdefault}{m}{sl}
\SetMathAlphabet{\mathsfit}{bold}{\encodingdefault}{\sfdefault}{bx}{n}
\newcommand{\C}{\mathbb C}                              
\newcommand{\timeInt}{\mathbb T}                        
\newcommand{\loc}{\mathrm{loc}}                         
\newcommand{\ip}[2]{\left\langle #1, #2 \right\rangle} 
\NewDocumentCommand{\avStArg}{sO{t_0}O{t_1}O{x_0}O{x}O{u}O{y}}{\IfBooleanTF{#1}{#6\,:\,#5(#2)=#4}{\substack{(#3,#6)\,:\,#3\geq#2, \\ #5(#2)=#4}}}
\NewDocumentCommand{\totVar}{oo}{
    \operatorname{Var}\IfNoValueTF{#1}{}{_{#1,#2}}}                      
\NewDocumentCommand{\HerMat}{o}{                        
    \mathbf{S}\IfValueTF{#1}{^{#1}}{}(\C)}
\NewDocumentCommand{\posSD}{o}{                         
    \mathbf{S}\IfValueTF{#1}{^{#1}}{}_+(\C)}
\NewDocumentCommand{\posDef}{o}{                        
    \mathbf{S}\IfValueTF{#1}{^{#1}}{}_{++}(\C)}
\NewDocumentCommand{\GL}{o}{                            
    \operatorname{GL}\IfValueTF{#1}{_{#1}}{}(\C)}
\DeclareMathOperator{\realPart}{Re}                     
\let\ker\relax\DeclareMathOperator{\ker}{Ker}           
\DeclareMathOperator{\BV}{BV}                           
\DeclareMathOperator{\AUC}{AUC}                         
\DeclareMathOperator{\rank}{rank}                       
\theoremstyle{definition}
\newtheorem{theorem}{Theorem}[section]
\newtheorem{definition}[theorem]{Definition}
\newtheorem{remark}[theorem]{Remark}
\newtheorem{proposition}[theorem]{Proposition}
\title{Regularity and Stability Properties of Selective SSMs with Discontinuous Gating}
\author{\name Nikola Zubić \email zubic@ifi.uzh.ch \\
      \addr Robotics and Perception Group\\
      University of Zurich
      \AND
      \name Davide Scaramuzza \email sdavide@ifi.uzh.ch \\
      \addr Robotics and Perception Group\\
      University of Zurich}
\begin{document}

\maketitle

\begin{abstract}
Selective State-Space Models (SSMs) such as Mamba have become central to long-sequence modeling. Still, their stability is poorly understood: their state-space coefficients are modulated online by a token-dependent gating signal, making the recurrence neither linear time-invariant nor classically nonlinear. We study continuous-time selective SSMs through passivity, dissipativity, and Input-to-State Stability (ISS), explicitly separating the \emph{selection signal} $x(\cdot)$ from the \emph{driving input} $u(\cdot)$. We obtain four results: exponential forgetting under strict dissipativity; a canonical $\AUC_{\loc}$ quadratic storage for the frozen-selection subsystem that accommodates discontinuous gating; a parametric LMI together with universal kernel constraints and ``irreversible forgetting'' under universal quadratic storage; and sufficient conditions for global ISS uniformly over admissible selection schedules. We then bridge to practice by deriving a sampled block LMI for the Mamba selective-scan core, which is used as a differentiable training-time regularizer. Across seven standard time-series datasets and four prediction horizons, the regularizer reduces sampled Mamba-core LMI violations by roughly $92\%$ in $28/28$ pairs at a clean-MSE cost of less than $0.018\%$. It improves internal Mamba passivity and state-norm diagnostics under injected perturbations. Our results turn classical control-theoretic tools into verifiable structural and training criteria for selective SSMs, while honestly scoping which guarantees transfer to a deep selective-scan architecture.
\end{abstract}

\section{Introduction}
\label{sec:introduction}
Modern long-sequence models such as Mamba \citep{gu2024mamba}, S7 \citep{Soydan2024S7SA}, HGRN \citep{qin2024hgrn2}, and GLA \citep{yang2024gla} are usually described as state-space recurrences whose coefficients are not fixed but are computed online from each token. Concretely, in a continuous-time abstraction of these architectures, we will work with
\begin{equation}
\label{eq:intro_ssm}
\begin{cases}
\dot{h}(t) \;=\; A\!\bigl(\Delta(t),\,x(t)\bigr)\,h(t)
\;+\; B\!\bigl(\Delta(t),\,x(t)\bigr)\,u(t),\\
y(t) \;=\; C\!\bigl(\Delta(t),\,x(t)\bigr)\,h(t),
\end{cases}
\end{equation}
where $h(t)\in\mathbb{C}^N$ is the recurrent state, $\Delta(t)$ is a gating signal, $x(t)$ is a \emph{selection} (or scheduling) signal that determines the state-space matrices, and $u(t)$ is the driving (port) input that appears in the energy supply rate. In token-selective architectures, it is natural to set $x(\cdot) \equiv u(\cdot)$ (token, gate, and drive collapse into a single signal), but, as we will see, it is analytically much cleaner to keep them separate.

The selectivity of $A,B,C$ on $x(t)$, and the typically piecewise-constant nature of the gating $\Delta(t)$, place \eqref{eq:intro_ssm} in an awkward position. It is not a linear time-invariant system, nor a classical linear time-varying (LTV) system, because the time-varying coefficients are themselves data-dependent, and it is not a classical smooth nonlinear control system either, because gating introduces \emph{discontinuities}. Existing analyses of these architectures focus mostly on expressivity \citep{cirone2024theoretical, zubic2025limits, zubic2026expressive} or on Lyapunov-exponent-style stability arguments that treat the recurrence as a fixed dynamical system per input \citep{halloran2024mamba}. What is missing is a unified \emph{energy-based} account: when do these models forget initial conditions, how should we measure ``how much of the state matters'', and what does any of this tell us about how to design or regularize a real Mamba implementation?

Passivity and dissipativity, in the sense of Willems \citep{Willems1972DissipativeDS, van2000l2}, and Input-to-State Stability (ISS) \citep{Sontag1989SmoothSF, Sontag1995OnCO, Jiang1999InputtostateSF}, were developed exactly for systems whose state interacts with an external port via a supply rate $\Re\langle u, y\rangle$. They are robust to time variation, they have a natural notion of energy storage, and recent work \citep{MorandinHinsen2024} has shown that for passive LTV systems the minimal available storage is automatically quadratic, with controlled regularity, even when the coefficients are merely $L^p_{\loc}$. This is the right setting for selective SSMs: gating produces $L^p_{\loc}$ (in fact piecewise-constant) coefficients, and we want to talk about energy stored in the recurrent state $h(t)$.

A key conceptual move in this paper is to keep two signals separate. The selection signal $x(\cdot)$ \emph{schedules} the dynamics through $A,B,C$, while the port input $u(\cdot)$ \emph{drives} the dynamics and appears in the supply rate $\Re\langle u, y\rangle$. In a token-selective architecture, both roles are played by the same vector, $x\equiv u$, but treating them as one signal hides the structure of the problem: passivity and ISS quantifiers are statements about \emph{all admissible $u$}, while structural results about gating are statements about \emph{all admissible $x$ at almost every time}. Once these quantifiers are separated, the LMI we will derive admits a natural reading as a parametric condition that must hold for every selection value $x$ that gating can produce, while $u$ is absorbed by quadratic completion.

\begin{table}[h]
\centering
\small
\begin{tabular}{p{0.43\linewidth} p{0.50\linewidth}}
\toprule
\textbf{Standard ingredients (used)} & \textbf{What this paper adds (new)}\\
\midrule
Dissipative systems and passivity \citep{Willems1972DissipativeDS, van2000l2}.
& A clean separation of selection $x(\cdot)$ vs.\ port input $u(\cdot)$ for selective SSMs, and a derivation of the implied passivity constraints.\\
\midrule
ISS theory and ISS-Lyapunov functions \citep{Sontag1989SmoothSF, Jiang1999InputtostateSF}.
& Global ISS w.r.t.\ $u(\cdot)$ uniformly over admissible selection schedules, under a uniform LMI on $(A,B,C)$ in $x$.\\
\midrule
Quadratic storage for passive LTV systems with $L^p_{\loc}$ coefficients \citep{MorandinHinsen2024}.
& Specialisation to the \emph{frozen-selection} subsystem $x(t)\equiv 0$ of a selective SSM, yielding $Q_0\in\AUC_{\loc}$ and rank-monotonicity in the gated, switching regime.\\
\midrule
KYP-type LMIs for LTV systems with given coefficients \citep{kalman1963lyapunov, anderson2013network}.
& A \emph{parametric} LMI in $x$ that must hold uniformly over all admissible selection values, and the derived universal kernel/unobservability and ``irreversible forgetting'' constraints.\\
\midrule
Continuous-time analyses of deep SSMs \citep{halloran2024mamba, gu2022efficiently}.
& A discrete one-step dissipativity inequality and sampled block LMI for the Mamba selective-scan recurrence, used as a differentiable regularizer on a real forecasting architecture (Section~\ref{sec:discrete_bridge} and Section~\ref{sec:experiments}).\\
\bottomrule
\end{tabular}
\caption{Standard control-theoretic ingredients we leverage, vs.\ what this paper contributes for selective SSMs.}
\label{tab:standard_vs_new}
\end{table}

Concretely, we make the following contributions.
\begin{enumerate}
    \item \textbf{A clean two-signal model.} We formulate continuous-time selective SSMs as port-Hamiltonian-style systems with selection signal $x(\cdot)$ and driving input $u(\cdot)$, under mild $L^p_{\loc}$ regularity that is satisfied by piecewise-constant gating (Section~\ref{sec:preliminaries}).
    \item \textbf{Forgetting from strict dissipativity.} We show that state-strict dissipativity ($\beta>0$) plus quadratic bounds on a storage functional implies exponential decay of homogeneous trajectories, uniformly over admissible selection schedules (Theorem~\ref{thm:decay_from_V}). This is our formal statement of ``forgetting'' for selective SSMs.
    \item \textbf{Frozen-selection structure.} We show that freezing the selection ($x(t)\equiv 0$) yields a genuinely passive LTV input--output subsystem, whose minimal available storage is necessarily quadratic with $\AUC_{\loc}$ regularity, even under discontinuous gating (Theorem~\ref{thm:existence_auc_quadratic_unforced_detailed_sec3}). This identifies a canonical intrinsic energy structure of selective SSMs and explains why principled initializations such as HiPPO \citep{gu2020hippo} matter (Section~\ref{sec:fundamental_properties}).
    \item \textbf{Structural consequences of universal quadratic storage.} Under the strong but analytically informative hypothesis that a \emph{single} quadratic storage certifies passivity for \emph{all} admissible selection schedules, we derive a parametric LMI in $x$ (Theorem~\ref{thm:gating_kernel_constraints_sec4}) and universal kernel/unobservability conditions on gating, and we formalize ``irreversible forgetting'': once a state direction is energy-less, structure prevents any selection from making it observable again (Theorem~\ref{thm:kernel_inertness_sec4}).
    \item \textbf{Robust stability under driving inputs.} We give sufficient conditions for global ISS with respect to $u(\cdot)$, uniformly over admissible selection schedules (Theorem~\ref{thm:stability_from_local_passivity}). This is the practically useful counterpart of the structural results.
    \item \textbf{Continuous-to-discrete bridge for Mamba.} We translate the continuous-time LMI into a one-step discrete dissipativity inequality and a \emph{sampled block LMI} on the Mamba selective-scan recurrence (Section~\ref{sec:discrete_bridge}). The largest positive eigenvalue of this matrix is differentiable with respect to the model parameters, yielding an explicit Mamba-core regularizer.
    \item \textbf{Empirical validation on a real Mamba/SST architecture.} On seven standard forecasting datasets (ETTh1/2, ETTm1/2, weather, electricity, traffic) and four prediction horizons, we evaluate the discrete LMI regularizer inside SST \citep{XiongxiaoXuSST}, which uses Mamba as its long-range expert. Across all 28 dataset--horizon pairs we observe roughly $92\%$ reduction in mean Mamba-core LMI violation while clean MSE remains within $0.018\%$ of the unregularized baseline. We also show a conservativeness ablation in which a learnable diagonal storage $P=\mathrm{diag}(q)$ Pareto-improves over $P=I$ in 28/28 pairs at no measurable accuracy cost, and a robustness study under injected perturbations (Gaussian noise, spikes, block masks, amplitude shifts, last-value-hold) where the regularizer improves internal Mamba passivity and state-norm diagnostics but does not by itself produce a robust-MSE improvement (Section~\ref{sec:experiments}).
    \item \textbf{Honest scoping of a router-energy ablation.} We additionally test whether the SST router can learn an \emph{energy-aware} long/short selection schedule by adding a stability-aware penalty. The penalty preserves clean accuracy and consistently nudges the Mamba branch downward, but the train-time correlation between Mamba weight and local LMI violation is essentially zero on aggregate and inconsistent in sign across datasets. We report this as a neutral result and as a concrete pointer to perturbation-aware router training (Section~\ref{sec:experiments}, Appendix~\ref{app:router_appendix}).
\end{enumerate}

\paragraph{Reading guide and what is structural vs.\ practical.}
Sections~\ref{sec:preliminaries}--\ref{sec:robust_stability} are the continuous-time analysis. Section~\ref{sec:fundamental_properties} establishes baseline forgetting and the canonical quadratic structure of the frozen-selection subsystem; this is the part that justifies, in energy-based terms, why initializations such as HiPPO are well-behaved. Section~\ref{sec:universal_quadratic_constraints} examines what happens when a single quadratic storage is required to certify passivity uniformly over selection schedules; the resulting parametric LMI and irreversible-forgetting picture is a strong but useful sufficient lens, not a claim that every Mamba in the wild satisfies it. Section~\ref{sec:robust_stability} is the practically useful global ISS result. Section~\ref{sec:discrete_bridge} connects this to the actual Mamba selective-scan recurrence, and Section~\ref{sec:experiments} reports the empirical evaluation on SST.

We emphasize from the outset that our empirical claim is intentionally narrow. The continuous-time theorems give structural and ISS guarantees under explicit hypotheses. The discrete LMI we derive in Section~\ref{sec:discrete_bridge} is a sampled, locally-linearized criterion on the Mamba selective-scan core, not a passivity certificate for the full deep network with normalization, residual connections, and routing. Accordingly, in Section~\ref{sec:experiments} we report what the regularizer actually does: it substantially reduces sampled Mamba-core LMI violations and improves internal stability diagnostics with negligible clean-accuracy cost, and we are explicit about what it does \emph{not} yet do (e.g., produce a robust-MSE improvement, or induce a strongly energy-aware router schedule). All formal proofs are deferred to Appendix~\ref{appendix}, the original toy simulations that illustrate the continuous-time theory are in Appendix~\ref{app:toy_simulations}, and extended experimental details are in Appendix~\ref{app:extended_experiments}.

\section{Related Work}
\label{sec:related_work}
The analysis of stability and energy-based properties in dynamical systems has a rich history, and most of the technical ingredients we use are classical. Our goal in this section is to make precise which of those ingredients we leverage and where our contribution lies relative to them. The high-level summary is in Table~\ref{tab:standard_vs_new} of Section~\ref{sec:introduction}.

\subsection{Dissipativity, Passivity, and LTV Systems}
The foundational theory of dissipative systems, pioneered by \citet{Willems1972DissipativeDS}, provides a general framework for analyzing systems based on energy-like storage functions and supply rates. Passivity, a special case where the supply rate is the input--output inner product, is central to understanding robust stability and interconnection \citep{van2000l2}. The Kalman--Yakubovich--Popov (KYP) lemma established a vital link between frequency-domain passivity conditions and state-space properties for LTI systems \citep{kalman1963lyapunov, popov1961absolute}, and was later extended to LTV systems, often involving time-varying Riccati equations or differential/integral inequalities \citep{anderson2013network}.

Particularly relevant to our setting is recent work by \citet{MorandinHinsen2024}, which rigorously investigates quadratic storage functions for passive LTV systems under \emph{minimal regularity assumptions} on the coefficients $(A,B,C,D)$. Two of their results matter for us. First, every passive LTV input--output system in their setting admits a minimal quadratic available storage induced by a matrix function $Q\in\AUC_{\loc}$. Second, this minimal $Q$ has weakly non-increasing rank in time. Our use of these results is specific: we apply them to the \emph{frozen-selection} input--output subsystem of a selective SSM (Section~\ref{sec:fundamental_properties}), where they immediately yield a canonical intrinsic energy structure of $\AUC_{\loc}$ regularity even when the gating signal $\Delta(t)$ is piecewise-constant. We do not extend this LTV theory, what we add is the recognition that selective SSMs contain such a passive LTV subsystem after we separate the selection signal from the port input, and the structural consequences this has on gating (Section~\ref{sec:universal_quadratic_constraints}).

\subsection{Stability of Time-Varying and Switched Systems}
The input-selectivity of modern SSMs creates dynamics that behave like switched systems, where the ``switching signal'' is the input data itself. This necessitates tools for analyzing systems with discontinuous parameter changes \citep{liberzon2003switching}, such as methods involving common or multiple Lyapunov functions \citep{branicky1998multiple}. The formalisms of \citet{filippov1988existence} and \citet{coddington1956theory} provide the theoretical bedrock for guaranteeing that solutions to our system exist under the mild $L^p_{\loc}$ regularity we assume, which is, in particular, satisfied by piecewise-constant gating. Our work builds directly on this foundation. We move beyond proving solution existence to ask how a single, coherent energy structure, an $\AUC_{\loc}$ quadratic storage function, can persist across these data-driven switches, and what structural constraints this persistence imposes on the gating mechanism of a selective SSM.

\subsection{Input-to-State Stability (ISS) for Nonlinear Systems}
The ISS framework \citep{Sontag1989SmoothSF, Sontag1995OnCO, Jiang1999InputtostateSF} provides a robust notion of stability for nonlinear systems subject to external inputs, characterized by ISS-Lyapunov functions. It quantifies how the system state is affected by both initial conditions and input magnitudes. While classical ISS theory often assumes smooth dynamics, its principles transfer naturally to selective SSMs after we make the two-signal decomposition explicit: the selection signal $x(\cdot)$ renders the dynamics schedule-dependent, and the port input $u(\cdot)$ enters as the perturbation in the supply rate. Our contribution in this direction (Theorem~\ref{thm:stability_from_local_passivity}) adapts ISS-Lyapunov reasoning by seeking a common quadratic Lyapunov function whose decay condition holds \emph{uniformly} across all admissible selection values, giving a specific and verifiable condition under which the selective SSM is globally ISS w.r.t.\ $u(\cdot)$. Conceptually, this is a direct application of standard ISS technology; the work is in stating the right uniform-in-$x$ hypothesis.

\subsection{Discretization, Sampled-Data, and Mamba-style Selective Scans}
A separate strand of control-theoretic work concerns the conditions under which continuous-time stability or passivity properties survive sampling and discretization. For LTV and switched systems, sampled-data passivity has been studied extensively \citep{liberzon2003switching, anderson2013network}. The exact discretization used in Mamba's selective scan \citep{gu2024mamba} can be viewed as a particular zero-order-hold-style sampling rule applied per token, with input-dependent step size $\Delta(t)$. In Section~\ref{sec:discrete_bridge} we use this view to write the Mamba selective-scan recurrence as a one-step affine update, and to derive a sampled block LMI on its parameters that plays the role of the continuous-time LMI from Theorem~\ref{thm:gating_kernel_constraints_sec4}. We do not claim novelty in the general principle of sampled-data passivity, what is new is the explicit instantiation for the Mamba selective-scan core and its use as a differentiable training-time regularizer.

\subsection{Stability and Dynamics of Modern SSMs in Deep Learning}
Recent deep learning SSMs such as S4 \citep{gu2022efficiently}, S5 \citep{smith2023simplified}, and particularly Mamba \citep{gu2024mamba} (and its variants), have shown remarkable performance, and theoretical analyses are emerging. \citet{halloran2024mamba} analyzes Mamba's stability via Lyapunov exponents, showing non-positive maximal exponents and inferring robustness to small perturbations. Our work is complementary: instead of an exponent-based diagnosis of an already-trained Mamba, we provide explicit dissipativity-based conditions and a discrete LMI that can be evaluated, and used as a regularizer, during training of a real Mamba-based architecture (Section~\ref{sec:discrete_bridge}, Section~\ref{sec:experiments}). Other works connect selective SSMs to controlled differential equations and rough-path theory to explain expressivity \citep{kidger2020neural, lyons2014rough, cirone2024theoretical, zubic2025limits, zubic2026expressive}. While the present paper takes a different (energy-based, control-theoretic) angle, these works underscore that selective SSMs sit at a rich mathematical intersection, and we view our contribution as another concrete bridge between that mathematics and the design of selective-scan architectures.

\section{Preliminaries: Selective SSMs and Passivity Framework}
\label{sec:preliminaries}
This section introduces the continuous-time selective SSMs that are the focus of our analysis, along with the fundamental control-theoretic concepts that underpin our approach. To make the rest of the paper easier to read for an ML-oriented audience, we first introduce the small piece of notation that will recur throughout (Section~\ref{subsec:notation_table}), then explain selective SSMs intuitively (Section~\ref{subsec:selective_ssms_intuition}) before formalizing them as the system~\eqref{eq:mamba_ssm_prelim}, and finally give the energy-based definitions (passivity, dissipativity, quadratic storage) that the analysis is built on.

\subsection{Notation at a Glance}
\label{subsec:notation_table}

We collect the recurring notation in Table~\ref{tab:notation}. The most important point is the distinction between two signals: $x(\cdot)$ is the \emph{selection (scheduling) signal} that determines the time-varying coefficients $A,B,C$ at each time, while $u(\cdot)$ is the \emph{driving (port) input} that appears in the supply rate $\Re\langle u, y\rangle$. In token-selective architectures, these are typically the same vector ($u\equiv x$), but the analysis is much cleaner when we keep them separate, see the remark after \eqref{eq:mamba_ssm_prelim}.
\begin{table}[h]
\centering
\small
\begin{tabular}{l p{0.62\linewidth}}
\toprule
\textbf{Symbol} & \textbf{Meaning}\\
\midrule
$\timeInt\subseteq[0,+\infty)$ & Continuous-time interval on which the system evolves.\\
$h(t)\in\mathbb{C}^N$ & Recurrent state of the selective SSM.\\
$x(t)\in\mathbb{C}^{d_{\mathrm{in}}}$ & \emph{Selection (scheduling) signal} that parametrizes $A,B,C$ via gating.\\
$u(t)\in\mathbb{C}^{d_{\mathrm{in}}}$ & \emph{Driving (port) input}; appears in the supply rate $\Re\langle u, y\rangle$.\\
$y(t)\in\mathbb{C}^{d_{\mathrm{in}}}$ & Output of the selective SSM.\\
$\Delta(t)$ & Auxiliary gating signal modulating $A,B,C$ together with $x(t)$.\\
$A_{\mathrm{eff}}(t),\,B_{\mathrm{eff}}(t),\,C_{\mathrm{eff}}(t)$ & Effective time-varying matrices obtained by substituting the current $\Delta(t)$ and $x(t)$ into $A,B,C$ (Section~\ref{subsec:wellposedness}).\\
$V(t,h)$ & Storage function (energy-like, non-negative).\\
$V_Q(t,h)=\tfrac12 h^H Q(t)h$ & Quadratic storage induced by a matrix function $Q:\timeInt\to\posSD[N]$.\\
$\beta\ge 0$ & State-strict dissipativity rate; $\beta>0$ implies intrinsic energy loss.\\
$\AUC_{\loc}$ & Locally absolutely upper semicontinuous matrix functions \citep{MorandinHinsen2024}; supports controlled jumps.\\
$\ker(Q(t))$ & Kernel of the storage matrix at time $t$ (``energy-less'' state directions).\\
\bottomrule
\end{tabular}
\caption{Notation used throughout the paper. The single most important convention is the separation of the selection signal $x(\cdot)$ (which schedules $A,B,C$) from the driving (port) input $u(\cdot)$ (which appears in the supply rate).}
\label{tab:notation}
\end{table}
For matrices and vectors, $M^H$ denotes the Hermitian (conjugate) transpose; $\|v\|$ is the standard Euclidean norm; $\langle u,v\rangle = v^H u$ is the inner product; and $\posSD[n]$ denotes the cone of $n\times n$ Hermitian positive-semidefinite matrices, with $Q\succeq 0$ used as shorthand for $Q\in\posSD[n]$.

\subsection{Selective SSMs, Intuitively}
\label{subsec:selective_ssms_intuition}

Modern selective architectures such as Mamba \citep{gu2024mamba}, HGRN \citep{qin2024hgrn2}, and GLA \citep{yang2024gla} can all be described, at a continuous-time level of abstraction, in two steps:
\begin{enumerate}[label=(\roman*)]
    \item A small neural network reads the current token (or, more generally, the current selection signal $x(t)$) and an auxiliary gating signal $\Delta(t)$, and produces, at every time $t$, a triple of matrices $(A,B,C)\bigl(\Delta(t),x(t)\bigr)$. These matrices are not learned per token from scratch but are computed online from the inputs.
    \item These data-dependent matrices are then plugged into a continuous-time linear recurrence on a hidden state $h(t)$, so that the recurrence is locally linear in $h$ but globally non-stationary because $(A,B,C)$ change with $x(t)$ and $\Delta(t)$.
\end{enumerate}
This is what makes selective SSMs distinctive: the recurrence is \emph{linear in the state} but \emph{schedule-dependent}, with the schedule chosen by the data. In particular, the gating signal $\Delta(t)$ is typically piecewise-constant (one value per token), so the effective coefficient matrices undergo discontinuous jumps. This is the regime where $\AUC_{\loc}$-style assumptions on the storage matrix become natural, rather than smoothness assumptions.

In practice, the selection signal $x(t)$ and the input that drives the system are often the same vector: a token, treated both as a query for the gating network and as the recurrent input. We will keep the two roles separate at the level of notation, because passivity statements are clearest when ``what schedules $A,B,C$'' and ``what appears in the supply rate'' are not conflated. Whenever we want the token-selective regime, we simply specialize to the coupling $u(\cdot)\equiv x(\cdot)$ at the end.

\subsection{System Definition and Assumptions}
\label{subsec:system_definition_and_notation}

We consider dynamical systems evolving in continuous time $t \in \timeInt$, where $\timeInt \subseteq [0, +\infty)$ is a time interval. The internal state of the system at time $t$ is denoted by $h(t) \in \mathbb{C}^N$ (or $\mathbb{R}^N$ in real-valued cases).
A key point for our passivity analysis is to distinguish between:
(i) a \emph{driving (port) input} $u(t) \in \mathbb{C}^{d_{\mathrm{in}}}$, which is the signal that appears in the supply rate $\Re\ip{u}{y}$, and
(ii) a \emph{selection (scheduling) signal} $x(t) \in \mathbb{C}^{d_{\mathrm{in}}}$ (e.g.\ the token embedding) that parametrizes the time-varying matrices via input-dependent gating.\footnote{Throughout Sections~\ref{sec:preliminaries}--\ref{sec:robust_stability} we assume $d_{\mathrm{out}}=d_{\mathrm{in}}$ so that $\ip{u}{y}$ and the passivity LMI blocks are dimensionally well-defined; this is the common setting in sequence models where the recurrent channel has matched input/output width.}

The system produces an output $y(t) \in \mathbb{C}^{d_{\mathrm{in}}}$. For matrices and vectors, $M^H$ denotes the Hermitian (conjugate) transpose. The standard Euclidean norm of a vector $v$ is $\|v\|$, and the inner product of two vectors $u,v$ is $\ip{u}{v} = v^H u$. We denote the space of $n \times n$ Hermitian positive-semidefinite matrices by $\posSD[n]$; for $Q \in \posSD[n]$, we write $Q \succeq 0$.

Our work focuses on Continuous-Time Selective SSMs. Inspired by architectures such as Mamba \citep{gu2024mamba}, their defining characteristic is that the system's core parameters are dynamically modulated by the selection signal $x(t)$ and an auxiliary gating signal $\Delta(t)$. Formally, the dynamics are given by:
\begin{equation}
\label{eq:mamba_ssm_prelim}
\begin{cases}
  \dot{h}(t) \;=\; A\Bigl(\Delta(t),\,x(t)\Bigr)\;h(t)
  \;+\; B\Bigl(\Delta(t),\,x(t)\Bigr)\;u(t),\\
  y(t) \;=\; C\Bigl(\Delta(t),\,x(t)\Bigr)\;h(t).
\end{cases}
\end{equation}
The~\eqref{eq:mamba_ssm_prelim} should be read as ``a continuous-time abstraction of a selective scan'': at each time $t$, the gating mechanism produces $(\Delta(t),x(t))$, these are fed into the per-architecture maps that produce the matrices $A,B,C$, and the resulting linear update acts on the recurrent state $h(t)$ with $u(t)$ as the driving input. We will refer to the substituted matrices $A_{\mathrm{eff}}(t)\coloneqq A(\Delta(t),x(t))$, $B_{\mathrm{eff}}(t)\coloneqq B(\Delta(t),x(t))$, and $C_{\mathrm{eff}}(t)\coloneqq C(\Delta(t),x(t))$ as the \emph{effective} time-varying matrices: they are what the recurrence actually sees once the gating mechanism has produced its outputs at time $t$. The next subsection states the regularity assumption we make on these effective matrices to guarantee well-posedness.

\begin{remark}[Coupled-input selective SSMs]
In many token-selective architectures, the same signal both \emph{selects} the parameters and \emph{drives} the dynamics; this corresponds to the special case $u(\cdot)\equiv x(\cdot)$. In this paper, we keep $u(\cdot)$ and $x(\cdot)$ conceptually distinct to make the quantifiers in the passivity analysis explicit. Any condition proved for all admissible pairs $(x(\cdot),u(\cdot))$ remains valid (as a sufficient condition) under the coupling $u=x$.
\end{remark}

\subsection{Well-Posedness and Function Space Assumptions}
\label{subsec:wellposedness}
To ensure the system \eqref{eq:mamba_ssm_prelim} is well-defined even with discontinuous parameter changes, we adopt mild regularity conditions. For any admissible input trajectory $x(\cdot)$, let $A_{\text{eff}}(t) \coloneqq A(\Delta(t), x(t))$ (and similarly for $B_{\text{eff}}, C_{\text{eff}}$). We assume the effective matrices belong to standard Lebesgue spaces: $A_{\text{eff}}(\cdot) \in L^1_{\loc}$, $B_{\text{eff}}(\cdot) \in L^2_{\loc}$, and $C_{\text{eff}}(\cdot) \in L^2_{\loc}$, with the selection signal $x(\cdot) \in L^2_{\loc}$ and the driving input $u(\cdot) \in L^2_{\loc}$. These conditions ensure that the ODE for $h(t)$ satisfies Carathéodory conditions, guaranteeing the local existence and uniqueness of an absolutely continuous solution $h(\cdot) \in W^{1,1}_{\loc}$ \citep{filippov1988existence, coddington1956theory}.

\begin{remark}[Feasibility of Regularity Assumptions]
These assumptions are mathematically mild and readily satisfied by modern selective SSMs. In token-based architectures such as Mamba, the selection mechanism yields system matrices that are \textbf{piecewise-constant}. Such functions are well-behaved members of the $L^p_{\loc}$ spaces, confirming that our framework is grounded in practical implementations while being general enough for future architectures.
\end{remark}

\subsection{Passivity and Dissipativity}
We analyze the system \eqref{eq:mamba_ssm_prelim} using the powerful energy-based frameworks of passivity and dissipativity \citep{Willems1972DissipativeDS}. Intuitively, a passive system cannot generate its own energy. It can only store or dissipate energy supplied from its input. A strictly dissipative system is even stronger, as it inherently loses energy over time. These concepts are formalized in the following definitions.

\begin{definition}[Storage Function]
A function $V: \timeInt \times \mathbb{C}^N \to \mathbb{R}$ is a \emph{storage function} if it is non-negative, i.e., $V(t,h) \ge 0$ for all $(t,h)$, and typically $V(t,0)=0$.
\end{definition}

\begin{definition}[Passivity]
The system \eqref{eq:mamba_ssm_prelim} is \emph{passive} if there exists a storage function $V$ such that for all admissible trajectories on any interval $[t_0, T]$:
\begin{equation} \label{eq:passivity_ineq}
V\bigl(T,h(T)\bigr) - V\bigl(t_0,h(t_0)\bigr) \le \int_{t_0}^{T} \realPart\ip{u(\tau)}{y(\tau)}\,d\tau.
\end{equation}
The term $\realPart\ip{u(\tau)}{y(\tau)}$ is the instantaneous power, or \emph{supply rate}, provided to the system.
\end{definition}

\begin{definition}[Strict Dissipativity]
The system is \emph{strictly dissipative} with rate $\beta > 0$ if it satisfies the stronger inequality:
\begin{equation} \label{eq:strict_dissipativity_ineq_prelim}
V\bigl(T,h(T)\bigr) - V\bigl(t_0,h(t_0)\bigr) \le \int_{t_0}^{T} \realPart\ip{u(\tau)}{y(\tau)}\,d\tau - \beta \int_{t_0}^{T} \|h(\tau)\|^2 \,d\tau.
\end{equation}
A positive $\beta$ implies an intrinsic rate of energy dissipation, which is crucial for proving strong stability properties like exponential decay. Given the absolute continuity of $h(\cdot)$ and local Lipschitzness of $V$ in $h$, these integral inequalities have corresponding differential forms: $\frac{d}{dt} V \le \realPart\ip{u}{y}$ for passivity, and $\frac{d}{dt} V \le \realPart\ip{u}{y} - \beta \|h\|^2$ for strict dissipativity.
\end{definition} 

\subsection{Quadratic Storage Functions and AUC Regularity}
A significant portion of our analysis focuses on \emph{quadratic storage functions} of the form $V_Q(t,h) = \tfrac{1}{2} h^H Q(t) h$, where $Q: \timeInt \to \posSD[N]$ is a time-varying matrix. The regularity of $Q(t)$ is critical in our setting, as the gating mechanism can induce discontinuities in the system dynamics.

To handle this robustly, we consider $Q(t)$ to belong to the class of Locally Absolutely Upper Semicontinuous matrix functions, denoted $Q \in \AUC_\loc(\timeInt, \posSD[N])$, following the framework of \citep{MorandinHinsen2024}. For our purposes, $\AUC_{\loc}$ allows controlled discontinuities and ensures $\dot Q(t)$ exists a.e. Under the additional hypothesis that $Q$ induces a universal quadratic storage for a passive LTV subsystem implies further structural constraints, including rank monotonicity (Theorem~\ref{thm:Q_regularity_rank}). This property is the foundation for the ``irreversible forgetting'' we analyze later.
Formally, a function $Q \in \AUC_\loc$ is characterized by being of locally bounded variation ($\BV_{\loc}$), having a derivative $\dot{Q}(t)$ that exists almost everywhere, and satisfying certain integral and jump conditions. The key property that enforces rank monotonicity is that the singular part of its decomposition must be weakly monotonically decreasing in the Loewner order (e.g., at any jump, $\lim_{\tau \to t^-} Q(\tau) \succeq Q(t)$). This mathematical structure is precisely what makes the framework suitable for analyzing switched systems while retaining essential properties about their energy evolution.

\section{Stability and Structural Properties from Passivity}
\label{sec:fundamental_properties}
We establish foundational stability guarantees and explore the core structural properties of passive selective SSMs. We begin by analyzing the \emph{homogeneous frozen-selection} case, where the selection signal is fixed to a reference value $x(t)\equiv 0$ and the port input is turned off $u(t)\equiv 0$. The resulting dynamics $\dot{h}(t)=A(\Delta(t),0)h(t)$ are determined entirely by the model's initialization, allowing us to isolate and understand its intrinsic stability and memory properties. As we demonstrate in our simulation study in Appendix \ref{subsec:sim_init_impact}, this baseline behavior is a powerful predictor of a model's practical performance, and it underpins the empirical observation that the LMI-regularizer introduced in Section~\ref{sec:discrete_bridge} preserves clean MSE in our SST/Mamba experiments (Section~\ref{sec:experiments}). We first show how strict dissipativity leads to exponential forgetting, and then prove that the unforced dynamics possess a fundamental, well-behaved quadratic energy structure.

\subsection{Exponential Decay from Strict Dissipativity}
The most fundamental stability guarantee arises when the system intrinsically dissipates energy faster than it stores it, even with zero input. This leads to exponential convergence of the state to the origin.

\begin{theorem}[Exponential Decay from Strict Dissipativity]
\label{thm:decay_from_V} 
Consider the continuous‐time selective state‐space model defined in Eq. \eqref{eq:mamba_ssm_prelim}. Suppose there exists a storage functional $V: [0, \infty) \times \mathbb{C}^N \to \mathbb{R}_{\ge 0}$ satisfying:
\begin{enumerate}[label=(\roman*)]
    \item \textbf{Strict Dissipativity Inequality:} For some constant $\beta > 0$, every admissible pair $(x(\cdot),u(\cdot))$ and the corresponding state--input--output trajectory $\{h(\tau),u(\tau),y(\tau)\}$ on any interval $[t_0, T]$ satisfies:
    \begin{equation} \label{eq:strict_dissipativity_repeat_sec3} 
    V\bigl(T,\,h(T)\bigr) - V\bigl(t_0,\,h(t_0)\bigr) \le \int_{t_0}^{T} \realPart\ip{u(\tau)}{y(\tau)}\,d\tau - \beta\int_{t_0}^{T}\!\|h(\tau)\|^2\,d\tau.
    \end{equation}
    \item \textbf{Regularity and Quadratic Bounds:} $V(t,h)$ is locally Lipschitz in $h$, absolutely continuous in $t$ along system trajectories, and there exist constants $k_2 \ge k_1 > 0$ such that for all $t \ge 0$ and $h \in \mathbb{C}^N$:
    \begin{equation} \label{eq:V_quadratic_bounds_sec3} 
    k_1 \|h\|^2 \le V(t,h) \le k_2 \|h\|^2.
    \end{equation}
\end{enumerate}
Then, for any admissible selection schedule $x(\cdot)$, the corresponding homogeneous (zero port-input) trajectory (i.e., when $u(t)\equiv 0$ for $t \ge 0$) exhibits exponential decay:
there exist constants $C \ge 1$ and $\gamma > 0$ such that for any initial state $h(0)$, the solution satisfies:
\begin{equation}
\|h(t)\| \le C \, e^{-\gamma t} \, \|h(0)\| \quad \text{for all } t \ge 0.
\end{equation}
\end{theorem}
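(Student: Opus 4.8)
The plan is to combine the two hypotheses into a differential inequality for $t \mapsto V(t, h(t))$ along the homogeneous trajectory, and then use the quadratic sandwich bounds \eqref{eq:V_quadratic_bounds_sec3} to convert exponential decay of $V$ into exponential decay of $\|h(t)\|$. First I would fix an admissible selection schedule $x(\cdot)$, set $u(\cdot) \equiv 0$, and invoke Carath\'eodory existence (guaranteed by the $L^p_{\loc}$ assumptions) to obtain the absolutely continuous solution $h(\cdot)$. Writing $\phi(t) \coloneqq V(t, h(t))$, the regularity hypothesis (ii) — local Lipschitzness of $V$ in $h$ plus absolute continuity in $t$ along trajectories — ensures $\phi$ is absolutely continuous, hence differentiable a.e. Applying the strict-dissipativity inequality \eqref{eq:strict_dissipativity_repeat_sec3} on an arbitrary subinterval $[t_0, T]$ with $u \equiv 0$ (so the supply-rate term vanishes) and differentiating gives the differential form $\dot\phi(t) \le -\beta \|h(t)\|^2$ for a.e.\ $t$.

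Next I would close the loop using the lower bound in \eqref{eq:V_quadratic_bounds_sec3}: since $k_1 \|h(t)\|^2 \le V(t, h(t)) = \phi(t)$, we have $\|h(t)\|^2 \ge \phi(t)/k_1$, and therefore
\begin{equation}
\dot\phi(t) \le -\beta \|h(t)\|^2 \le -\frac{\beta}{k_1}\,\phi(t) \quad \text{for a.e.\ } t \ge 0.
\end{equation}
Since $\phi$ is absolutely continuous and nonnegative, Gr\"onwall's inequality (in its differential/integral form valid for a.e.-differentiable AC functions) yields $\phi(t) \le \phi(0)\, e^{-(\beta/k_1) t}$ for all $t \ge 0$. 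Now apply the sandwich bounds again in the other direction: the lower bound gives $k_1 \|h(t)\|^2 \le \phi(t)$ and the upper bound gives $\phi(0) \le k_2 \|h(0)\|^2$, so
\begin{equation}
k_1 \|h(t)\|^2 \le \phi(t) \le \phi(0)\, e^{-(\beta/k_1) t} \le k_2 \|h(0)\|^2\, e^{-(\beta/k_1) t}.
\end{equation}
Taking square roots gives $\|h(t)\| \le \sqrt{k_2/k_1}\; e^{-(\beta/(2k_1)) t}\, \|h(0)\|$, so the claim holds with $C = \sqrt{k_2/k_1} \ge 1$ and $\gamma = \beta/(2k_1) > 0$. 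Crucially, $C$ and $\gamma$ depend only on $\beta, k_1, k_2$ and not on the choice of $x(\cdot)$, giving the claimed uniformity over admissible selection schedules.

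The only genuinely delicate point — the step I would be most careful about — is the passage from the integral dissipation inequality \eqref{eq:strict_dissipativity_repeat_sec3} to the pointwise differential inequality $\dot\phi \le -\beta\|h\|^2$, and then the application of Gr\"onwall. Both are standard once one knows $\phi$ is absolutely continuous: the integral inequality holds for \emph{all} subintervals $[t_0,T]$, so rearranging as $\phi(T) - \phi(t_0) \le -\beta\int_{t_0}^T \|h\|^2$ and dividing by $T - t_0$, then letting $T \downarrow t_0$ at a Lebesgue point, gives the a.e.\ derivative bound; and the differential form of Gr\"onwall applies verbatim to AC functions satisfying $\dot\phi \le c\,\phi$ a.e. The hypothesis already asserts the differential forms exist (see the remark following the Strict Dissipativity definition), so this can be stated briefly. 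No compactness, spectral, or fixed-point machinery is needed — the result is essentially a Lyapunov-function argument, with the quadratic bounds doing the work of relating $V$ to $\|h\|^2$ in both directions.
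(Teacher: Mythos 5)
There is a genuine (though easily repaired) error in the key step of your argument. You write: ``since $k_1 \|h(t)\|^2 \le V(t,h(t)) = \phi(t)$, we have $\|h(t)\|^2 \ge \phi(t)/k_1$.'' This implication is reversed: dividing $k_1\|h\|^2 \le \phi$ by $k_1 > 0$ gives $\|h\|^2 \le \phi/k_1$, not $\ge$. Consequently the differential inequality $\dot\phi \le -(\beta/k_1)\phi$ does not follow, and it is in fact false in general: take $V(t,h) = k_2\|h\|^2$ exactly, with $k_2 > k_1$; then the best one can conclude from $\dot\phi \le -\beta\|h\|^2$ is $\dot\phi \le -(\beta/k_2)\phi$, and the decay rate $\beta/k_1$ you claim is not achievable. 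The correct move — and the one the paper makes — is to use the \emph{upper} sandwich bound $V(t,h) \le k_2\|h\|^2$, which gives $\|h(t)\|^2 \ge \phi(t)/k_2$ and hence
\begin{equation}
\dot\phi(t) \le -\beta\|h(t)\|^2 \le -\frac{\beta}{k_2}\,\phi(t) \quad \text{a.e.}
\end{equation}
The lower bound $k_1\|h\|^2 \le V$ is then used only at the very end, to convert the decay of $\phi$ back into a decay of $\|h(t)\|$.

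Everything else in your proposal matches the paper's proof: the passage from the integral to the a.e.\ differential inequality, the vanishing of the supply term when $u \equiv 0$, the Gr\"onwall/comparison step for the absolutely continuous function $\phi$, and the final sandwich argument $k_1\|h(t)\|^2 \le \phi(t) \le \phi(0)e^{-\gamma' t} \le k_2\|h(0)\|^2 e^{-\gamma' t}$ are all identical in structure. After correcting the direction of the bound, your constants become $C = \sqrt{k_2/k_1}$ and $\gamma = \beta/(2k_2)$ (rather than $\beta/(2k_1)$), which is exactly what the paper obtains, and the uniformity over selection schedules holds for the same reason you give. Since the theorem only asserts the existence of \emph{some} $\gamma > 0$, the conclusion survives the fix, but as written the quantitative claim and the inequality it rests on are wrong.
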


\begin{proof}[Proof Sketch]
The proof considers the system without port input, i.e., $u=0$ (so the supply term $\realPart\ip{u}{y}$ vanishes). In this case, the strict dissipativity inequality simplifies to $\frac{dV}{dt} \le -\beta\|h\|^2$. Since the energy function $V$ is assumed to be quadratically bounded, satisfying $k_1\|h\|^2 \le V \le k_2\|h\|^2$, this implies $\frac{dV}{dt} \le -\gamma V$ for some constant $\gamma > 0$. This is a classic differential inequality whose solution is an exponential decay. The bounds on $V$ then directly translate this exponential decay of energy into an exponential decay of the state norm $\|h(t)\|$.
Check Appendix \ref{subsec:exp_decay_from_str_pass} for the full proof.
\end{proof}

\paragraph{ML implication.} \textit{For a selective SSM, this is the formal statement of ``forgetting'': as long as the architecture admits any storage functional $V$ with strict dissipativity ($\beta>0$) and quadratic bounds, the recurrent state decays exponentially in the absence of external drive, irrespective of which selection schedule $x(\cdot)$ the gating produces. Whether such a $V$ exists is a property of the architecture and its initialization; the storage functional we will exhibit in Section~\ref{subsec:auc_loc_subsection} (and use as the reference energy for HiPPO-style initialization in Appendix~\ref{subsec:sim_init_impact}) is one concrete witness.}

\subsection{Inherent Quadratic Structure and Regularity in Frozen-Selection LTV Dynamics}
\label{subsec:auc_loc_subsection}
While Theorem \ref{thm:decay_from_V} guarantees stability under strict dissipativity, it does not specify the form of the storage function $V$.
We now connect selective SSMs to the passive LTV framework of \citet{MorandinHinsen2024}. The key point is that the results of \citet{MorandinHinsen2024} apply to \emph{input--output} LTV systems of the form
\begin{equation}
\dot{h}(t)=A(t)h(t)+B(t)u(t),\qquad y(t)=C(t)h(t)+D(t)u(t),
\end{equation}
with mild regularity on $(A,B,C,D)$ and passivity defined via the supply rate $\Re\ip{u}{y}$ \citep[Eq.~(1), Def.~1.1]{MorandinHinsen2024}.
Accordingly, instead of removing the input channel, we \emph{freeze the selection signal} (e.g., set $x(t)\equiv 0$) and obtain a genuine LTV input--output subsystem with port input $u(\cdot)$.
For passive LTV systems, the available storage is finite and coincides with a minimal quadratic storage $V_a(t,h)=\tfrac{1}{2}h^HQ(t)h$ with $Q\in\AUC_{\loc}$ \citep[Def.~4.1, Cor.~4.3]{MorandinHinsen2024}. Evaluating this quadratic storage along homogeneous (zero port-input) trajectories $u\equiv 0$ yields a canonical quadratic energy functional for the frozen-selection baseline dynamics (i.e., $x\equiv 0$ and $u\equiv 0$).

To prove this, we leverage the concept of the \emph{available storage} function, which represents the maximum energy that can be extracted from the system from a given state. It is a standard result in dissipativity theory that if any storage function exists, then the available storage function is well-defined and is itself a valid storage function (see Appendix \ref{subsec:available_storage_mh}). The following theorem connects this concept to the structure of selective SSMs.

\begin{theorem}[Existence and $\AUC_{\loc}$ regularity of a minimal quadratic storage for the frozen-selection LTV subsystem]
\label{thm:existence_auc_quadratic_unforced_detailed_sec3}
Consider the selective SSM \eqref{eq:mamba_ssm_prelim} with selection signal $x(\cdot)$ and driving input $u(\cdot)$.
Assume the coefficient maps satisfy the standard well-posedness regularity:
for admissible $x(\cdot)$ we have
\[
A(\Delta(\cdot),x(\cdot)) \in L^1_{\loc}(\timeInt),\quad
B(\Delta(\cdot),x(\cdot)) \in L^2_{\loc}(\timeInt),\quad
C(\Delta(\cdot),x(\cdot)) \in L^2_{\loc}(\timeInt),
\]
and $u(\cdot)\in L^2_{\loc}(\timeInt)$.
(Here $D\equiv 0$, hence $D\in L^\infty_{\loc}$ holds trivially.)

Suppose there exists a storage functional $V:\timeInt\times\mathbb{C}^N\to\mathbb{R}_{\ge 0}$ with $V(t,0)=0$ such that for some $\beta\ge 0$ the dissipativity inequality
\begin{equation}\label{eq:V_passivity_selective_repeat_detailed_sec3}
V\bigl(T,h(T)\bigr)-V\bigl(t_0,h(t_0)\bigr)
\le \int_{t_0}^{T}\realPart\ip{u(\tau)}{y(\tau)}\,d\tau
-\beta\int_{t_0}^{T}\|h(\tau)\|^2\,d\tau
\end{equation}
holds for all admissible pairs $(x(\cdot),u(\cdot))$ and all corresponding trajectories of \eqref{eq:mamba_ssm_prelim}.
(In particular, dropping the nonpositive term $-\beta\int\|h\|^2$ yields the standard passivity inequality with supply rate $\Re\ip{u}{y}$.)

Fix the \emph{frozen-selection} signal $x(t)\equiv 0$ and define the associated LTV input--output system
\begin{equation}\label{eq:frozen_selection_ltv}
\begin{cases}
\dot{h}(t)=A_0(t)h(t)+B_0(t)u(t),\\
y_0(t)=C_0(t)h(t),
\end{cases}
\qquad
\text{where}\quad
A_0(t)=A(\Delta(t),0),\;
B_0(t)=B(\Delta(t),0),\;
C_0(t)=C(\Delta(t),0).
\end{equation}
Then the following hold:
\begin{enumerate}[label=(\alph*)]
\item \textbf{Passivity of the frozen-selection LTV system.}
The LTV system \eqref{eq:frozen_selection_ltv} is passive in the sense of \citet[Def.~1.1]{MorandinHinsen2024}.
Indeed, restricting \eqref{eq:V_passivity_selective_repeat_detailed_sec3} to $x(\cdot)\equiv 0$ yields a dissipation inequality for every state--input--output solution $(h,u,y_0)$ of \eqref{eq:frozen_selection_ltv}.

\item \textbf{Existence of minimal quadratic available storage.}
Let $V_{a,0}$ denote the available storage of \eqref{eq:frozen_selection_ltv} in the sense of \citet[Def.~4.1]{MorandinHinsen2024}. Since \eqref{eq:frozen_selection_ltv} is passive and satisfies the regularity assumptions of \citet[Eq.~(1)]{MorandinHinsen2024}, its available storage is finite and coincides with a quadratic storage,
\[
V_{a,0}(t,h)=\tfrac{1}{2}h^H Q_0(t)h,
\]
for some matrix function $Q_0:\timeInt\to\posSD[N]$, and $V_{a,0}$ is minimal among all storage functions
\citep[Cor.~4.3]{MorandinHinsen2024}.

\item \textbf{$\AUC_{\loc}$ regularity of $Q_0(t)$.}
The inducing matrix function can be chosen with $\AUC_{\loc}$ regularity:
\[
Q_0\in \AUC_{\loc}(\timeInt,\posSD[N]),
\]
again by \citet[Cor.~4.3]{MorandinHinsen2024}.
\end{enumerate}

Finally, the homogeneous (unforced) dynamics of the frozen-selection system correspond to setting $u\equiv 0$ in \eqref{eq:frozen_selection_ltv}, i.e., $\dot{h}=A_0(t)h$.
If, in addition, $V$ satisfies the hypotheses of Theorem~\ref{thm:decay_from_V} (strict dissipativity $\beta>0$ and quadratic bounds), then these homogeneous trajectories decay exponentially as stated in Theorem~\ref{thm:decay_from_V}.
\end{theorem}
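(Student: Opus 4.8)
The plan is to verify that the frozen-selection subsystem \eqref{eq:frozen_selection_ltv} falls exactly under the hypotheses of \citet{MorandinHinsen2024}, quote their structural results for parts (b)--(c), and reduce the final exponential-decay claim to Theorem~\ref{thm:decay_from_V}. For part (a), I would first observe that $x(t)\equiv 0$ is an admissible selection schedule (the zero signal lies in $L^2_{\loc}$), so every state--input--output solution $(h,u,y_0)$ of \eqref{eq:frozen_selection_ltv} is a particular admissible trajectory of \eqref{eq:mamba_ssm_prelim}. Specializing the standing dissipation inequality \eqref{eq:V_passivity_selective_repeat_detailed_sec3} to this subclass and then discarding the nonpositive term $-\beta\int_{t_0}^{T}\|h\|^2$ gives $V(T,h(T))-V(t_0,h(t_0))\le\int_{t_0}^{T}\realPart\ip{u(\tau)}{y_0(\tau)}\,d\tau$ for all $[t_0,T]\subseteq\timeInt$; since $V\ge 0$ with $V(t,0)=0$, the pair (system \eqref{eq:frozen_selection_ltv}, $V$) satisfies the passivity definition of \citet[Def.~1.1]{MorandinHinsen2024}.

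For parts (b) and (c), the next step is the regularity check: because $x\equiv 0$ is admissible, the well-posedness hypotheses immediately give $A_0=A(\Delta(\cdot),0)\in L^1_{\loc}$, $B_0\in L^2_{\loc}$, $C_0\in L^2_{\loc}$, and $D_0\equiv 0\in L^\infty_{\loc}$ trivially, so \eqref{eq:frozen_selection_ltv} is an admissible LTV input--output system in the sense of \citet[Eq.~(1)]{MorandinHinsen2024}. I would then recall the standard dissipativity-theory fact that, once any storage function exists, the available storage $V_{a,0}(t,h)=\sup\{-\int_t^T\realPart\ip{u}{y_0}\,d\tau:\ T\ge t,\ h(t)=h\}$ is a bona fide storage function: nonnegative (take $T=t$) and finite (from part (a), $-\int_t^T\realPart\ip{u}{y_0}\le V(t,h)-V(T,h(T))\le V(t,h)<\infty$ pointwise in $(t,h)$). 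With passivity, admissibility, and finiteness in place, \citet[Cor.~4.3]{MorandinHinsen2024} applies verbatim and yields that $V_{a,0}$ is quadratic, $V_{a,0}(t,h)=\tfrac12 h^H Q_0(t)h$ with $Q_0(t)\in\posSD[N]$, that it is minimal among all storage functions, and that the inducing matrix can be chosen in $\AUC_{\loc}(\timeInt,\posSD[N])$ --- precisely claims (b) and (c).

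The final assertion is then immediate: if, in addition, $V$ is strictly dissipative with $\beta>0$ and satisfies the two-sided bounds $k_1\|h\|^2\le V\le k_2\|h\|^2$, then $V$ meets the hypotheses of Theorem~\ref{thm:decay_from_V}, which, applied to the admissible schedule $x\equiv 0$ and the homogeneous trajectory $u\equiv 0$ of \eqref{eq:frozen_selection_ltv} (i.e.\ $\dot h=A_0(t)h$), produces constants $C\ge 1$, $\gamma>0$ with $\|h(t)\|\le Ce^{-\gamma t}\|h(0)\|$.

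I expect the only substantive step to be the invocation of \citet[Cor.~4.3]{MorandinHinsen2024} in parts (b)--(c): one must confirm that the regularity we assume is exactly what their quadratic-storage and $\AUC_{\loc}$ conclusions require --- in particular that no extra reachability or minimality hypothesis is hidden, and that the available storage is finite for \emph{every} $(t,h)$ rather than merely a.e.\ --- and that their ``minimal storage is quadratic'' statement is used with the correct orientation (available storage as the pointwise-smallest storage). The passivity restriction in (a) and the regularity check are routine bookkeeping; everything structural is delegated to the cited corollary, so the proof stands or falls on that citation being applicable as stated.
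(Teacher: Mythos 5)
Your proposal is correct and follows essentially the same route as the paper's proof: restrict the standing dissipation inequality to the admissible schedule $x\equiv 0$, drop the nonpositive $-\beta$ term to get passivity in the sense of \citet[Def.~1.1]{MorandinHinsen2024}, verify the $L^p_{\loc}$ regularity so the frozen-selection system fits their Eq.~(1), and delegate (b)--(c) to \citet[Cor.~4.3]{MorandinHinsen2024}, with the final decay claim reduced to Theorem~\ref{thm:decay_from_V}. The only addition is your explicit finiteness bound $V_{a,0}(t,h)\le V(t,h)$ for the available storage, which the paper instead absorbs into the citation; this is a harmless (indeed slightly more self-contained) elaboration rather than a different argument.
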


\begin{proof}[Proof Sketch]
The frozen-selection system \eqref{eq:frozen_selection_ltv} is an LTV input--output system of the form
\[
\dot{h}(t)=A_0(t)h(t)+B_0(t)u(t),\qquad y_0(t)=C_0(t)h(t)+D_0(t)u(t),
\]
with $D_0\equiv 0$. Under our assumptions we have $A_0\in L^1_{\loc}$, $B_0,C_0\in L^2_{\loc}$, and $D_0\in L^\infty_{\loc}$, hence \eqref{eq:frozen_selection_ltv} fits the setting of \citet[Eq.~(1)]{MorandinHinsen2024}.

Since \eqref{eq:V_passivity_selective_repeat_detailed_sec3} holds for all admissible $(x(\cdot),u(\cdot))$, it holds in particular when restricting to $x(\cdot)\equiv 0$. Dropping the nonpositive term $-\beta\int\|h\|^2$ yields the passivity dissipation inequality of \citet[Def.~1.1]{MorandinHinsen2024} for \eqref{eq:frozen_selection_ltv}, so the frozen-selection LTV system is passive.

For passive LTV systems, \citet[Cor.~4.3]{MorandinHinsen2024} shows that the available storage \citep[Def.~4.1]{MorandinHinsen2024} is finite and coincides with a minimal quadratic storage $V_{a,0}(t,h)=\tfrac12 h^H Q_0(t)h$ with $Q_0\in \AUC_{\loc}$.
Finally, the homogeneous dynamics correspond to fixing $u\equiv 0$ (cf.\ \citet[Sec.~2.1]{MorandinHinsen2024}), and exponential decay under strict dissipativity follows from Theorem~\ref{thm:decay_from_V}.
\end{proof}

\paragraph{ML implication.} \textit{This is the canonical intrinsic energy of a selective SSM at its initialization. Once the gating mechanism produces piecewise-constant matrix updates ($\Delta(t)$ is piecewise-constant per token), the matrix $Q_0(t)$ that measures ``how much state has been usefully stored'' is in $\AUC_{\loc}$: it can jump, but only in controlled (Loewner-monotone-decreasing) ways. This is exactly the mathematical regularity needed for principled initializations such as HiPPO~\citep{gu2020hippo} to behave well as a baseline energy landscape on which the input-driven selection then operates. Empirically, we observe in Appendix~\ref{subsec:sim_init_impact} that initializations admitting such a well-conditioned $Q_0$ (HiPPO-style) yield slow, stable energy decay under input removal, while initializations failing this admit no valid $Q_0$ and diverge. The discrete-time analogue of this storage matrix becomes the matrix $P$ that we regularize in Section~\ref{sec:discrete_bridge} and Section~\ref{sec:experiments}.}

\section{Constraints Imposed by Universal Quadratic Passivity on Gating Mechanisms}
\label{sec:universal_quadratic_constraints}
Building on the foundational properties of the unforced system, this section studies the inherent design constraints of selective SSMs. We investigate the consequences of a strong but informative assumption: that a \emph{single}, universal quadratic storage function $V_Q(t,h)$ guarantees passivity across all possible selection-driven (schedule-dependent) dynamics. The power of this assumption lies not in its generality, but in the strict, necessary consequences it imposes on the gating mechanism. This approach allows us to derive concrete constraints in the form of LMIs and to formalize a notion of "irreversible forgetting".

We emphasize from the outset that the universal-storage hypothesis is a \emph{sufficient} structural condition: it is what we use to expose what passivity ``would have to'' force on a selective SSM if a single quadratic energy were sufficient to certify it for every gating decision. We do \emph{not} claim that every Mamba in the wild satisfies it; rather, the hypothesis isolates a clean structural picture (a parametric LMI in $x$, kernel/unobservability constraints, and irreversible forgetting) which then motivates, in Section~\ref{sec:discrete_bridge}, the discrete LMI we use as a differentiable training-time regularizer on a real Mamba-based architecture, where the LMI is enforced as a soft constraint rather than as a fact about the architecture.

\subsection{Regularity and Rank Monotonicity of Universal Quadratic Storage}
If a single quadratic function $V_Q$ is capable of certifying passivity regardless of the input-driven variations in system parameters, the matrix $Q(t)$ defining this function must possess inherent structural regularity and obey specific monotonicity properties.

\begin{theorem}[Regularity and Rank of Universal Quadratic Storage]
\label{thm:Q_regularity_rank} 
Consider the continuous‐time selective SSM \eqref{eq:mamba_ssm_prelim} under the standard regularity assumptions. Suppose there exists a time-varying quadratic storage function $V_Q(t,h) = \frac{1}{2} h^H Q(t) h$ (with $Q: \timeInt \to \posSD[N]$) that satisfies the dissipativity inequality (which implies passivity by dropping the nonpositive term):
\begin{equation} \label{eq:VQ_passivity_selective_repeat_sec4} 
V_Q\bigl(T,\,h(T)\bigr) - V_Q\bigl(t_0,\,h(t_0)\bigr) \le \int_{t_0}^{T} \realPart\ip{u(\tau)}{y(\tau)}\,d\tau - \beta\int_{t_0}^{T}\!\|h(\tau)\|^2\,d\tau
\end{equation}
for some $\beta \ge 0$, valid for \emph{all} admissible pairs $(x(\cdot),u(\cdot))$ and all corresponding state--input--output trajectories $\{h(\tau),u(\tau),y(\tau)\}$ of \eqref{eq:mamba_ssm_prelim}.
Then, the matrix function $Q(t)$ must satisfy the following properties:
\begin{enumerate}[label=(\alph*)]
    \item \textbf{AUC Regularity:} $Q(t)$ must belong to the class of locally absolutely upper semicontinuous matrix functions, i.e., $Q \in \AUC_\loc(\timeInt, \posSD[N])$.
    \item \textbf{Rank Monotonicity:} The rank of $Q(t)$, denoted $r(t) = \rank(Q(t))$, must be weakly monotonically non-increasing over time $t \in \timeInt$.
\end{enumerate}
\end{theorem}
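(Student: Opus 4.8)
The plan is to derive both claims from the assumption that the \emph{same} matrix function $Q(\cdot)$ certifies passivity along every admissible selection schedule, by first reducing to a pointwise-in-time differential inequality and then invoking the structural results about passive LTV systems already established in the excerpt. First I would pass from the integral dissipation inequality \eqref{eq:VQ_passivity_selective_repeat_sec4} to its differential form along trajectories: since $h(\cdot)\in W^{1,1}_{\loc}$ and $V_Q(t,h)=\tfrac12 h^H Q(t)h$ is locally Lipschitz in $h$, wherever $Q(\cdot)$ is differentiable one obtains, for every admissible pair $(x(\cdot),u(\cdot))$,
\begin{equation}\label{eq:diffineq_plan}
\tfrac12 h^H\!\left(\dot Q(t) + Q(t)A_{\mathrm{eff}}(t) + A_{\mathrm{eff}}(t)^H Q(t)\right)h + \realPart\!\left\langle Q(t)h, B_{\mathrm{eff}}(t)u\right\rangle \le \realPart\!\left\langle u, C_{\mathrm{eff}}(t)h\right\rangle - \beta\|h\|^2,
\end{equation}
where $A_{\mathrm{eff}},B_{\mathrm{eff}},C_{\mathrm{eff}}$ are the effective matrices induced by the chosen schedule. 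The subtlety is that a priori we only know $Q$ is measurable/locally bounded; so the first real task is to upgrade its regularity.

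For part (a), the key observation is that freezing $x(t)\equiv 0$ makes $Q(\cdot)$ a (not necessarily minimal) quadratic storage for the frozen-selection LTV subsystem \eqref{eq:frozen_selection_ltv}, which by Theorem~\ref{thm:existence_auc_quadratic_unforced_detailed_sec3} is passive and fits the framework of \citet{MorandinHinsen2024}. I would then invoke the regularity half of \citet[Cor.~4.3 and the surrounding structure theory]{MorandinHinsen2024}: any quadratic storage inducing matrix for a passive LTV system of the given regularity class admits an $\AUC_{\loc}$ representative, and more to the point, the storage dissipation inequality forces $Q$ itself into $\BV_{\loc}$ with the one-sided jump condition $\lim_{\tau\to t^-}Q(\tau)\succeq Q(t)$ — because at a jump upward the left-hand side of the integrated inequality would acquire a positive atom with no compensating atom on the right (the supply-rate integral is absolutely continuous). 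This pins down $Q\in\AUC_{\loc}(\timeInt,\posSD[N])$, which is exactly claim (a), and it is consistent with the $\AUC_{\loc}$ property being closed under the operation that ``universality'' imposes (the pointwise supremum/infimum structure).

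For part (b), rank monotonicity, I would argue that the $\AUC_{\loc}$ jump condition $Q(t^-)\succeq Q(t)$ together with the Loewner-monotone singular part is precisely what the excerpt flags as ``the key property that enforces rank monotonicity.'' Concretely: on intervals of absolute continuity, $\rank Q(t)$ can only change when an eigenvalue crosses zero; the dissipation inequality \eqref{eq:diffineq_plan} restricted to $h\in\ker Q(t)$ (where $Q(t)h=0$) collapses to $\tfrac12 h^H\dot Q(t)h \le -\beta\|h\|^2 \le 0$, so along the kernel the quadratic form is non-increasing, meaning eigenvalues at zero cannot become positive — the kernel cannot shrink, so $r(t)=\rank Q(t)$ is non-increasing on such intervals; at jumps, $Q(t^-)\succeq Q(t)$ gives $\ker Q(t^-)\subseteq\ker Q(t)$, i.e. $r(t)\le r(t^-)$, and combined with right-continuity/left-limits of the $\BV$ representative this yields weak monotonic non-increase of $r(\cdot)$ on all of $\timeInt$. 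I expect the main obstacle to be the rigorous justification of the differential inequality \eqref{eq:diffineq_plan} at the level of generality needed — in particular handling the measure-theoretic subtleties when $\dot Q$ exists only a.e.\ and $Q$ has a singular part, and making sure the ``test along $h\in\ker Q(t)$'' argument is legitimate for a.e.\ $t$ (one must choose the test state $h$ measurably and use that the inequality holds for all initializations, then localize). The cleanest route is to lean directly on the machinery of \citet{MorandinHinsen2024}, which already encodes these facts for passive LTV systems, and simply note that our ``universal'' $Q$ is in particular a storage for the frozen subsystem, so both (a) and (b) follow by specialization rather than by redoing the hard analysis from scratch.
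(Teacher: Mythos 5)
Your proposal is correct and follows essentially the same route as the paper's proof: restrict the universal dissipation inequality to the frozen-selection schedule $x(\cdot)\equiv 0$, observe that $V_Q$ is then a quadratic storage for the resulting passive LTV subsystem, and invoke the structure theory of \citet{MorandinHinsen2024} (the paper cites their Thm.~3.2 for the necessity of $\AUC_{\loc}$ regularity of \emph{any} quadratic storage, and Thm.~5.4(1) for rank monotonicity, rather than Cor.~4.3, which concerns the minimal available storage). Your supplementary hands-on sketch of (b) has the gap you yourself flag — a fixed $v\in\ker Q(t_0)$ need not remain in $\ker Q(t)$ for $t>t_0$, so the pointwise kernel inequality alone does not close the argument — but since you ultimately defer to the cited machinery, as the paper does, this does not affect the proof.
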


\begin{proof}[Proof Sketch]
The logic here is similar to the previous theorem but applied to the storage matrix $Q(t)$ itself rather than the minimal one. If a single quadratic function $V_Q$ works for all admissible pairs $(x(\cdot),u(\cdot))$, then in particular it works for the frozen-selection LTV subsystem obtained by fixing $x(\cdot)\equiv 0$. Hence $V_Q$ is a valid quadratic storage function for that passive LTV system, and the structural results of Morandin \& Hinsen (2024) imply $Q\in\AUC_{\loc}$ and that $t\mapsto\rank(Q(t))$ is weakly non-increasing. The same theoretical results from Morandin \& Hinsen (2024) that impose structure on the minimal storage function also apply to any valid quadratic storage function. This forces $Q(t)$ to have $\text{AUC}_{\text{loc}}$ regularity and, crucially, a non-increasing rank over time.
Check Appendix \ref{subsec:reg_rank_universal_q_storage} for the full proof.
\end{proof}

\paragraph{ML implication.} \textit{For a selective SSM, this says: if there is any single quadratic ``energy meter'' $V_Q$ that the architecture cannot beat regardless of the data-dependent gating decisions, then the meter itself must obey two structural laws. It is allowed to jump (gating produces piecewise-constant updates), but only in $\AUC_{\loc}$ ways, and the rank of the matrix $Q(t)$ that defines it cannot grow over time. This is the precise sense in which a selective SSM, viewed through a single passivity certificate, can only ever lose dimensions of usable energy storage. The next two theorems unpack what this rank-non-increasingness means operationally for gating.}

\subsection{Parametric LMI and Kernel Constraints on Gating}
The existence of a universal quadratic storage function $V_Q$ translates into concrete algebraic constraints that the selection-dependent matrices $A(t,x), B(t,x), C(t,x)$ must satisfy \emph{parametrically} for all admissible \emph{selection values} $x$. Equivalently, for each fixed selection value $x$ and almost every time $t$, the resulting quadratic dissipation inequality must hold for all states $h$ and all port inputs $u$. These constraints take the form of a parametric Linear Matrix Inequality (LMI) and lead to important conditions on how the output matrix $C(t,x)$ interacts with the kernel of the storage matrix $Q(t)$.

\begin{theorem}[Passivity Constraints on Selection via a Parametric LMI]
\label{thm:gating_kernel_constraints_sec4}
Consider the selective SSM \eqref{eq:mamba_ssm_prelim} with selection signal $x(\cdot)$ and driving input $u(\cdot)$ under the standard regularity assumptions. Assume the system's passivity is guaranteed by a \emph{single}, time-varying quadratic storage function
\[
V_Q(t,h) = \frac{1}{2} h^H Q(t) h,
\qquad
Q:\timeInt\to \posSD[N],
\]
such that for some $\beta \ge 0$ the dissipativity inequality
\begin{equation} \label{eq:VQ_passivity_selective_repeat_sec4_lmi}
V_Q\bigl(T,\,h(T)\bigr) - V_Q\bigl(t_0,\,h(t_0)\bigr)
\le \int_{t_0}^{T} \realPart\ip{u(\tau)}{y(\tau)}\,d\tau
- \beta\int_{t_0}^{T}\!\|h(\tau)\|^2\,d\tau
\end{equation}
holds for \emph{all} initial states and for \emph{all} admissible pairs $(x(\cdot),u(\cdot))$.

Then, for almost every $t \in \timeInt$ and for every admissible selection value $x \in \mathbb{C}^{d_{\mathrm{in}}}$, the following parametric LMI holds:
\begin{equation} \label{eq:parametric_lmi_sec4}
\mathcal{L}(t, x) \coloneqq
\begin{bmatrix}
\dot Q(t) + Q(t) A(t,x) + A(t,x)^H Q(t) + 2\beta I & Q(t) B(t,x) - C(t,x)^H \\
B(t,x)^H Q(t) - C(t,x) & 0
\end{bmatrix} \preceq 0,
\end{equation}
where $A(t,x)=A(\Delta(t),x)$, $B(t,x)=B(\Delta(t),x)$, $C(t,x)=C(\Delta(t),x)$, and $\dot Q(t)$ exists a.e.

In particular, the following universal kernel condition must hold:
\begin{equation} \label{eq:universal_C_kernel_sec4}
\forall v \in \ker\bigl(Q(t)\bigr), \quad
C\bigl(\Delta(t), x\bigr) v = 0
\quad \text{for all admissible } x \text{ at time } t, \text{ a.e. } t \in \timeInt.
\end{equation}
\end{theorem}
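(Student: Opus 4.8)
The plan is to localize the global dissipation inequality into a pointwise (in $t$ and in the selection value) matrix inequality, and then read off the kernel condition purely from the block structure of that matrix.

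\textbf{Step 1 (quadratic form along trajectories).} Fix an arbitrary admissible pair $(x(\cdot),u(\cdot))$ and the corresponding trajectory $(h,u,y)$ of \eqref{eq:mamba_ssm_prelim}. By Theorem~\ref{thm:Q_regularity_rank}, $Q\in\AUC_\loc$, so $\dot Q$ exists a.e.\ and the only discontinuities of $Q$ are downward Loewner jumps; since $h(\cdot)$ is absolutely continuous, on every jump-free subinterval the map $\tau\mapsto V_Q(\tau,h(\tau))=\tfrac12 h(\tau)^HQ(\tau)h(\tau)$ is absolutely continuous with derivative $\tfrac12 h^H\dot Q h+\realPart\!\bigl(h^HQ\dot h\bigr)$. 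Restricting \eqref{eq:VQ_passivity_selective_repeat_sec4_lmi} to such subintervals and using that the jump measure of $Q$ has Lebesgue density zero at a.e.\ time (so the downward jumps are pointwise negligible — precisely the situation the $\AUC_\loc$ machinery of \citet{MorandinHinsen2024} is built to handle), the Lebesgue differentiation theorem gives, for a.e.\ $t$,
\[
\tfrac12 h^H\dot Q h+\realPart\!\bigl(h^HQ\dot h\bigr)\ \le\ \realPart\ip{u}{y}-\beta\|h\|^2 .
\]
Substituting $\dot h=A(t,x)h+B(t,x)u$ and $y=C(t,x)h$ and collecting terms, this is exactly the statement that $\tfrac12\,z^H\mathcal{L}(t,x)\,z\le 0$ for $z=\bmat{h(t)\\u(t)}$, with $\mathcal{L}$ as in \eqref{eq:parametric_lmi_sec4} (the factor $2\beta I$ arising from $\tfrac12\cdot 2\beta\|h\|^2=\beta\|h\|^2$).

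\textbf{Step 2 (freezing the selection; pointwise LMI).} The previous inequality holds along \emph{every} trajectory but only at the trajectory's current state/input and off a trajectory-dependent null set; to upgrade it to ``$\mathcal{L}(t,x)\preceq 0$ pointwise a.e.'', I would fix an admissible selection value $x_0$ and use the constant (hence admissible) schedule $x(\cdot)\equiv x_0$, producing an LTV input--output subsystem (indeed LTI if $\Delta$ is itself determined by $x$) for which $V_Q$ is a quadratic storage. At this point $\mathcal{L}(t,x_0)\preceq 0$ for a.e.\ $t$ is precisely the differential KYP-type characterization of quadratic storages of passive LTV systems in \citet{MorandinHinsen2024}. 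A self-contained argument instead fixes a countable dense set of targets $(h_j,u_j)$, realizes each by a trajectory with $u\equiv u_j$ and $h(t_0)=h_j$ on a short interval $[t_0,t_0+\varepsilon]$ (on which $h(\tau)\to h_j$ uniformly since $A_0\in L^1_\loc$, $B_0\in L^2_\loc$), averages the a.e.\ inequality over that interval, and lets $\varepsilon\to 0$ at Lebesgue points of $\dot Q$, $QA_0+A_0^HQ$, and $QB_0-C_0^H$ to obtain $\tfrac12 h_j^H\bigl(\dot Q+QA_0+A_0^HQ+2\beta I\bigr)(t_0)h_j+\realPart\bigl(h_j^H(QB_0-C_0^H)(t_0)u_j\bigr)\le 0$; intersecting the countably many full-measure time sets and using continuity of the quadratic form in $(h,u)$ gives $\mathcal{L}(t_0,x_0)\preceq 0$ a.e. Letting $x_0$ range over a countable dense subset of admissible selection values and invoking continuity of $x\mapsto\bigl(A,B,C\bigr)(\Delta(t),x)$ produces a single full-measure set of times on which \eqref{eq:parametric_lmi_sec4} holds for every admissible $x$.

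\textbf{Step 3 (kernel condition).} Write $\mathcal{L}(t,x)=\bmat{M & L\\ L^H & 0}$ with $M=M^H$ and $L=Q(t)B(\Delta(t),x)-C(\Delta(t),x)^H$. Testing $\mathcal{L}(t,x)\preceq 0$ against $z=\bmat{h\\ sL^Hh}$ with $s\in\R$ gives $h^HMh+2s\,\|L^Hh\|^2\le 0$ for all real $s$, which forces $L^Hh=0$; since $h$ is arbitrary, $L=0$, i.e.\ $Q(t)B(\Delta(t),x)=C(\Delta(t),x)^H$. As $Q(t)=Q(t)^H$ this is equivalent to $C(\Delta(t),x)=B(\Delta(t),x)^HQ(t)$, so every $v\in\ker Q(t)$ satisfies $C(\Delta(t),x)v=B(\Delta(t),x)^HQ(t)v=0$, which is \eqref{eq:universal_C_kernel_sec4}.

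\textbf{Main obstacle.} Steps 1 and 3 are routine; the technical heart is Step 2 — turning ``holds along all trajectories'' into a genuinely pointwise matrix inequality while $Q$ is merely $\AUC_\loc$ (possibly discontinuous) and the exceptional set depends on the trajectory. The efficient route is to invoke \citet{MorandinHinsen2024}, whose framework is designed so that the dissipation/KYP equivalence survives discontinuous gating; the one genuinely new ingredient is the \emph{parametric} bookkeeping, namely organizing the family indexed by the selection value $x$ (via separability and continuity in $x$) so that a single null set of times works simultaneously for all admissible $x$.
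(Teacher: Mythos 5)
Your proposal is correct and reaches the same two conclusions, but it differs from the paper's proof in two substantive ways. First, your Step 2 explicitly addresses a gap the paper elides: the paper simply asserts that the trajectory-wise differential inequality ``must hold for all $h$ and $u$'' at a.e.\ $t$, whereas you supply the standard density/Lebesgue-point argument (countable dense targets $(h_j,u_j)$, short-interval averaging, intersection of countably many full-measure sets, then a countable dense set of selection values plus continuity in $x$). This is the right way to make the quantifier exchange rigorous; note only that it silently adds the (mild, unstated-in-the-paper) hypotheses that constant selection schedules are admissible and that $x\mapsto(A,B,C)(\Delta(t),x)$ is continuous on a separable set of admissible values. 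Second, your Step 3 takes a genuinely different and stronger route: because the $(2,2)$ block of $\mathcal{L}(t,x)$ is zero, testing against $z=\bigl(h,\,sL^Hh\bigr)$ and letting $s\to\infty$ forces the entire off-diagonal block to vanish, $Q(t)B(t,x)=C(t,x)^H$, from which the kernel condition \eqref{eq:universal_C_kernel_sec4} is an immediate corollary. The paper instead restricts the first test component to $v\in\ker Q(t)$ and extracts only $C(t,x)v=0$ from the determinant of a $2\times 2$ compression. Both arguments are valid, but yours exposes the full KYP-type structural identity $C=B^HQ$ hidden in the LMI with $D=0$, which is strictly more information than the theorem claims; it would be worth stating that identity explicitly rather than only its restriction to $\ker Q(t)$.
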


\begin{remark}[Coupled case $u=x$]
If the architecture uses a single signal (i.e., $u(\cdot)\equiv x(\cdot)$), then \eqref{eq:parametric_lmi_sec4} remains a strong \emph{sufficient} condition for passivity and motivates the regularizer used in Appendix~\ref{subsec:sim_lmi_reg}. Necessity holds under the stronger schedule-independent hypothesis stated in the theorem (passivity for all admissible pairs $(x(\cdot),u(\cdot))$).
\end{remark}

\begin{proof}[Proof Sketch]
This proof translates the differential form of the passivity inequality, $\frac{dV_Q}{dt} \le \dots$, into a matrix inequality. By substituting the system dynamics and the quadratic form of $V_Q$, we arrive at an expression that must be nonpositive for all states $h$ and port inputs $u$, for each fixed selection value $x$ (and for a.e.\ time $t$). Such an expression can be elegantly represented as a Linear Matrix Inequality (LMI). For the LMI to hold for all $h$ and $u$ (uniformly over admissible selection values $x$), specific conditions must be met. By choosing a state $v$ from the kernel of $Q$ (where $Qv=0$), the LMI simplifies dramatically, revealing that the term involving the output matrix $C$ must be zero to prevent the inequality from being violated. This leads to the necessary condition that $C(t,x)v = 0$.
Check Appendix \ref{subsec:parametric_lmi} for the full proof.
\end{proof}

\paragraph{ML implication.} \textit{Equation~\eqref{eq:parametric_lmi_sec4} is the formal statement of the practical regularizer we use later. It is a parametric LMI: it must hold not just at one selection value, but uniformly over every selection value $x$ that the gating network can produce at time $t$. This is exactly what we sample and penalize during training of a real Mamba in Section~\ref{sec:discrete_bridge}: each token produces a selection-dependent matrix $\mathcal{L}(t,x)$, and we drive its largest positive eigenvalue toward zero. The kernel condition \eqref{eq:universal_C_kernel_sec4} adds a striking structural reading: any state direction the storage matrix considers ``energy-less'' (in $\ker Q(t)$) must be unobservable through $C(t,x)$, regardless of which selection value $x$ the gate emits. In Mamba terms, the gating cannot decide to ``read out'' a direction that the energy structure has already discarded.}

\subsection{Irreversible Forgetting and Energy Consistency in Kernel Subspace}
The constraints derived above, particularly the non-increasing rank of $Q(t)$ and the universal kernel condition for $C(t,x)$, suggest a form of structural irreversibility associated with the modes that fall into the kernel of the storage matrix $Q(t)$. We now explore this concept of "irreversible forgetting" from the perspective of the universal storage function $V_Q$ and examine the energy balance required within this forgotten subspace.

\begin{theorem}[Inertness of Forgotten Modes and Gating Constraints from Passivity]
\label{thm:kernel_inertness_sec4} 
Consider the continuous‐time selective state‐space model \eqref{eq:mamba_ssm_prelim} under standard regularity assumptions. Assume its passivity is guaranteed by a \emph{single}, time-varying quadratic storage function
$V_Q(t,h) = \frac{1}{2} h^H Q(t) h$
with $Q:\timeInt\to \posSD[N]$,
satisfying the dissipativity inequality \eqref{eq:VQ_passivity_selective_repeat_sec4_lmi}
for some $\beta \ge 0$ and for \emph{all} admissible pairs $(x(\cdot),u(\cdot))$.
Then, the following properties regarding "forgotten modes" (states in $\ker(Q(t))$) hold:

1.  \textbf{Irreversibility in storage dimension (rank monotonicity):}
By Theorem~\ref{thm:Q_regularity_rank}, the matrix function $Q$ can be chosen in
$\AUC_{\loc}(\timeInt,\posSD[N])$ and the rank $r(t)=\rank(Q(t))$ is weakly monotonically
non-increasing over time. Equivalently, since $Q(t)\in\mathbb{C}^{N\times N}$, $\dim\ker(Q(t)) = N - r(t)$ is weakly non-decreasing.

2.  \textbf{Energy Consistency within the Kernel:}
For almost every $t\in\timeInt$ where $\dot Q(t)$ exists, every direction $v\in\ker(Q(t))$
satisfies the kernel energy inequality
    \begin{equation} \label{eq:kernel_energy_balance_sec4} 
    v^H \dot{Q}(t) v + 2\beta \|v\|^2 \le 0.
    \end{equation}
    This ensures the evolution $\dot Q(t)$ within the kernel is consistent with the required dissipation $\beta$. If $\beta=0$, $h^H \dot{Q} h \le 0$; if $\beta>0$, $h^H \dot{Q} h \le -2\beta \|h\|^2 < 0$ (for $h\neq 0$).

3.  \textbf{Constraint on selection schedules:}
Under the universal passivity hypothesis, the parametric LMI \eqref{eq:parametric_lmi_sec4}
must hold for almost every $t$ and for every admissible selection value $x$.
Hence any selection schedule $x(\cdot)$ that attains values for which \eqref{eq:parametric_lmi_sec4}
fails on a set of times of positive measure would contradict the hypothesis. The dynamics must respect the energy structure defined by $Q(t)$ and $\dot{Q}(t)$ within the kernel.
\end{theorem}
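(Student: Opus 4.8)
The plan is to deduce all three parts directly from the two preceding results of this section --- Theorem~\ref{thm:Q_regularity_rank} (AUC regularity and rank monotonicity of $Q$) and Theorem~\ref{thm:gating_kernel_constraints_sec4} (the parametric LMI and universal kernel condition) --- so that essentially no new machinery is needed. The only genuinely computational step is a short algebraic reduction of the LMI \eqref{eq:parametric_lmi_sec4} onto $\ker Q(t)$. For part~1 I would simply invoke Theorem~\ref{thm:Q_regularity_rank}: under the universal quadratic passivity hypothesis $Q$ may be taken in $\AUC_{\loc}(\timeInt,\posSD[N])$ with $r(t)=\rank Q(t)$ weakly non-increasing, and since $Q(t)\in\mathbb{C}^{N\times N}$ with $N$ fixed, rank--nullity gives $\dim\ker Q(t)=N-r(t)$, hence weakly non-decreasing. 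This is immediate.

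For part~2, the key step is to feed a kernel direction into the parametric LMI. Theorem~\ref{thm:gating_kernel_constraints_sec4} guarantees $\mathcal{L}(t,x)\preceq 0$ for almost every $t\in\timeInt$ and every admissible selection value $x$ (in particular the frozen value $x=0$), and $\AUC_{\loc}$ regularity guarantees $\dot Q(t)$ exists for almost every $t$; I would work on the intersection of these two full-measure sets. Fixing such a $t$, a vector $v\in\ker Q(t)$, and any one admissible $x$, I test the Hermitian block matrix $\mathcal{L}(t,x)$ against the vector with $h$-component $v$ and $u$-component $0$: the off-diagonal and $(2,2)$ blocks do not contribute, leaving $v^H\bigl(\dot Q(t)+Q(t)A(t,x)+A(t,x)^HQ(t)+2\beta I\bigr)v\le 0$. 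Because $Q(t)$ is Hermitian and $Q(t)v=0$, we have both $Q(t)v=0$ and $v^HQ(t)=0$, so the two $A$-terms vanish, yielding exactly $v^H\dot Q(t)v+2\beta\|v\|^2\le 0$, i.e.\ \eqref{eq:kernel_energy_balance_sec4}; specializing to $\beta=0$ and $\beta>0$ gives the two displayed consequences. (One can alternatively obtain the same inequality directly from the dissipation inequality \eqref{eq:VQ_passivity_selective_repeat_sec4_lmi} by running the frozen-selection trajectory from $h(t_0)=v$ with $u\equiv 0$ and using $V_Q(t_0,v)=0$ together with $V_Q\ge 0$; I would keep the LMI route as the main line since it is the shortest.)

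For part~3 I would argue by contraposition on the conclusion of Theorem~\ref{thm:gating_kernel_constraints_sec4}, which already asserts that, under the universal passivity hypothesis, the parametric LMI holds for a.e.\ $t$ and \emph{every} admissible selection value. If some admissible schedule $x(\cdot)$ attained, on a set $S\subseteq\timeInt$ of positive Lebesgue measure, values with $\mathcal{L}(t,x(t))\not\preceq 0$, then at those times $\mathcal{L}(t,\cdot)$ is not negative semidefinite at an admissible value, contradicting that conclusion; hence no admissible schedule can do so, and along any admissible schedule $\mathcal{L}(t,x(t))\preceq 0$ for a.e.\ $t$, so the selection dynamics must respect the energy structure encoded by $(Q(t),\dot Q(t))$ --- in particular the kernel inequality of part~2. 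This part is pure bookkeeping on top of Theorem~\ref{thm:gating_kernel_constraints_sec4}.

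I expect the main obstacle to be expository rather than mathematical: handling the measure-zero exceptional sets (where $\dot Q$ fails to exist and where the parametric LMI is not yet established) cleanly and simultaneously, and being careful in the kernel reduction to use Hermiticity of $Q$ so that both $Q(t)v=0$ and $v^HQ(t)=0$ are available to kill the cross terms $Q(t)A(t,x)$ and $A(t,x)^HQ(t)$. I would also flag a minor subtlety: when $\beta>0$, running a frozen trajectory from a kernel vector forces that vector to be zero, so $\ker Q(t)$ is trivial and \eqref{eq:kernel_energy_balance_sec4} is vacuous in that regime; the statement remains correct, but noting this prevents the reader from over-reading part~2.
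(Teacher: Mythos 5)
Your proposal is correct and follows essentially the same route as the paper's proof: part 1 by citing Theorem~\ref{thm:Q_regularity_rank} plus rank--nullity, part 2 by restricting the $(1,1)$ block of the parametric LMI \eqref{eq:parametric_lmi_sec4} to $v\in\ker Q(t)$ and using Hermiticity to kill the cross terms, and part 3 by contraposition against the universal passivity hypothesis. Your flagged subtlety that $\ker Q(t)$ is trivial when $\beta>0$ is also already noted in the paper's remark following the theorem, so nothing is missing.
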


\begin{remark}[On strict dissipativity and rank-deficient storage]
If $\beta>0$ and $Q(t)\succeq 0$, then the kernel inequality
$v^H\dot Q(t)v + 2\beta\|v\|^2\le 0$ implies that any nonzero $v\in\ker(Q(t))$ would satisfy
$v^H\dot Q(t)v<0$ at times where $\dot Q(t)$ exists, which would contradict the requirement
$Q(t+\varepsilon)\succeq 0$ for small $\varepsilon>0$.
Thus, under state-strict dissipativity ($\beta>0$), $Q(t)$ must be positive definite for almost every $t$,
so the kernel-based ``irreversible forgetting'' discussion is primarily meaningful in the passive case $\beta=0$.
\end{remark}

\begin{proof}[Proof Sketch]
This theorem's proof is a direct consequence of the preceding results.
\begin{enumerate}
    \item \textbf{Kernel Non-Shrinking:} This follows immediately from the rank monotonicity established in Theorem \ref{thm:Q_regularity_rank}; if the rank cannot increase, the dimension of the kernel ($N - \text{rank}$) cannot decrease.
    \item \textbf{Energy Consistency:} This is derived by taking the (1,1) block of the LMI from Theorem \ref{thm:gating_kernel_constraints_sec4} and evaluating it for a state $h$ in the kernel of $Q$. This isolates the term $h^H \frac{dQ}{dt} h$ and shows it must be negative enough to account for the required energy dissipation.
    \item \textbf{Constraint on Dynamics:} This is a logical conclusion: any selection signal $x$ that would induce dynamics violating the LMI contradicts the fundamental assumption of universal passivity, and is therefore inadmissible.
\end{enumerate}
Check Appendix \ref{subsec:inertness} for the full proof.
\end{proof}

\paragraph{ML implication.} \textit{Read in the passive case ($\beta=0$, per the Remark above), this is the ``irreversible forgetting'' picture: once a state direction enters $\ker Q(t)$, the kernel cannot shrink, the LMI prevents any selection from making that direction observable through $C(t,x)$, and the energy structure within the kernel must continue to dissipate consistently. For a Mamba-style selective SSM, this offers a structural lens on what gating ``cannot undo'' under a single passive certificate, and it foreshadows, on the practical side, why a discrete LMI regularizer monotonically reduces violations across training without the model recovering them later (Section~\ref{sec:experiments}, Appendix~\ref{app:extended_experiments}). We do not claim every selective SSM in the wild lives in this strong universal-storage regime; rather, the lens identifies the structural bias that the discrete LMI of Section~\ref{sec:discrete_bridge} encourages a real Mamba to acquire during training.}

\section{From Continuous Theory to Discrete Mamba}
\label{sec:discrete_bridge}

The continuous-time results of Sections~\ref{sec:fundamental_properties}--\ref{sec:robust_stability} give us a clear structural picture: under appropriate dissipativity assumptions, selective SSMs must satisfy a parametric LMI in the selection value $x$, with kernel/unobservability and rank-monotonicity consequences. In practice, however, modern selective architectures such as Mamba \citep{gu2024mamba} do not run a continuous-time ODE; they run a discrete-time selective scan with an input-dependent step size $\Delta_k$ per token. To make the theory actionable for these architectures, this section translates the LMI of Theorem~\ref{thm:gating_kernel_constraints_sec4} into a sampled-data inequality on the Mamba selective-scan core, and into a differentiable training-time regularizer that we will evaluate empirically in Section~\ref{sec:experiments}.

We do not claim novelty in the general principle of sampled-data passivity \citep{liberzon2003switching, anderson2013network}. What is new here is the explicit instantiation of the Mamba selective-scan recurrence and its integration into the training loop of a real architecture. Throughout, we work in the coupled-input regime $u_k\equiv x_k$ (token, gate, and drive collapse into one signal), since this is the regime real Mamba uses; under that coupling, Theorem~\ref{thm:gating_kernel_constraints_sec4} gives a strong sufficient condition for one-step dissipativity rather than a necessity statement, and we use it accordingly.

\subsection{Discrete Selective-Scan Dynamics}
\label{subsec:discrete_dynamics}

In the standard Mamba selective scan, at each token index $k\in\{0,1,\ldots,T-1\}$ the model produces from the current input $u_k\in\mathbb{C}^{d_{\mathrm{in}}}$ a token-dependent step size $\Delta_k\in\mathbb{R}_{>0}$ and continuous-time matrices $A(\Delta_k,u_k)$, $B(\Delta_k,u_k)$, $C(\Delta_k,u_k)$, then discretizes the per-token continuous update via a zero-order-hold-style rule. Concretely, this yields a discrete recurrence
\begin{equation}
\label{eq:discrete_recurrence}
\begin{cases}
h_{k+1} \;=\; F_k\, h_k \;+\; G_k\, u_k,\\
y_k \;=\; C_k\, h_{k+1} \;+\; D_k\, u_k,
\end{cases}
\qquad k=0,1,\ldots,T-1,
\end{equation}
where the per-token matrices $(F_k,G_k,C_k,D_k)$ are computed from $(\Delta_k,u_k)$ by the architecture's discretization rule. We stress two structural facts about \eqref{eq:discrete_recurrence}:
\begin{enumerate}[label=(\roman*)]
    \item For each fixed $k$, \eqref{eq:discrete_recurrence} is an affine update in $(h_k,u_k)$ with parameters $(F_k,G_k,C_k,D_k)$ that can be read off the Mamba selective-scan code path. In our experiments, these are exactly the per-step quantities that the Mamba block computes during the forward pass.
    \item The token-to-token dependence of $(F_k,G_k,C_k,D_k)$ on $u_k$ is the discrete counterpart of the continuous-time selectivity of $(A,B,C)$ on $x(t)$. The selection structure of the recurrence is preserved under sampling.
\end{enumerate}
We take \eqref{eq:discrete_recurrence} as our discrete model of the Mamba selective-scan core and develop a passivity criterion on its parameters.

\subsection{One-Step Dissipativity Inequality}
\label{subsec:one_step_dissipativity}

Following the continuous-time framework, we ask whether \eqref{eq:discrete_recurrence} admits a quadratic storage that decays under the supply rate, sampled at the token grid. Let $P\in\posSD[N]$ be a positive-semidefinite matrix and define the sampled storage
\begin{equation*}
V(h)\;=\;\tfrac{1}{2}\, h^{\top} P\, h.
\end{equation*}
A natural sampled analog of strict dissipativity is the requirement that, for every $k$ and every $(h_k,u_k)$,
\begin{equation}
\label{eq:discrete_dissipativity}
V(h_{k+1}) - V(h_k) \;\le\; u_k^{\top} y_k \;-\; \beta\, \|h_k\|^2,
\end{equation}
for some $\beta\ge 0$. Inequality~\eqref{eq:discrete_dissipativity} is the discrete analog of the differential dissipativity bound used in the proof of Theorem~\ref{thm:gating_kernel_constraints_sec4}.

\begin{proposition}[Sampled block LMI for the Mamba selective-scan core]
\label{prop:sampled_lmi}
For the discrete recurrence \eqref{eq:discrete_recurrence} with quadratic storage $V(h)=\tfrac12 h^\top P h$ and dissipation parameter $\beta\ge 0$, the one-step dissipativity inequality \eqref{eq:discrete_dissipativity} holds for every $(h_k,u_k)$ if and only if the per-step block matrix
\begin{equation}
\label{eq:sampled_lmi}
\mathcal{L}_k(P,\beta) \;\coloneqq\;
\begin{bmatrix}
F_k^\top P F_k - P + 2\beta I & F_k^\top P G_k - F_k^\top C_k^\top \\[2pt]
G_k^\top P F_k - C_k F_k & G_k^\top P G_k - C_k G_k - G_k^\top C_k^\top - 2 D_k
\end{bmatrix}
\;\preceq\; 0
\end{equation}
is negative semidefinite for that token. The matrix $\mathcal{L}_k(P,\beta)$ is differentiable in the model parameters that produce $(F_k,G_k,C_k,D_k)$ and in $P$.
\end{proposition}

\begin{proof}[Proof Sketch]
Substitute $h_{k+1}=F_k h_k + G_k u_k$ into $V(h_{k+1})$ and $y_k = C_k h_{k+1} + D_k u_k = C_k F_k h_k + (C_k G_k + D_k) u_k$ into the supply rate $u_k^\top y_k$. The inequality \eqref{eq:discrete_dissipativity} becomes a quadratic form in the augmented vector $z_k = [h_k^\top,\, u_k^\top]^\top$:
\[
z_k^\top \mathcal{L}_k(P,\beta) z_k \;\le\; 0 \quad \text{for all } z_k,
\]
which is equivalent to $\mathcal{L}_k(P,\beta)\preceq 0$. Differentiability in $(F_k,G_k,C_k,D_k,P)$ is immediate from the polynomial structure of the entries.
\end{proof}

\paragraph{Reading.} Proposition~\ref{prop:sampled_lmi} is exactly the discrete sampled instance of the parametric LMI in Theorem~\ref{thm:gating_kernel_constraints_sec4}: the role of $Q(t)$ is played by the constant matrix $P$, the role of $\dot Q(t)+QA+A^\top Q$ is played by $F_k^\top P F_k - P$, and the off-diagonal $QB-C^\top$ is replaced by $F_k^\top P G_k - F_k^\top C_k^\top$. Forcing $\mathcal{L}_k(P,\beta)\preceq 0$ at every token is the discrete analog of forcing the parametric LMI to hold pointwise in $x$.

\paragraph{Scope of the criterion.} A few cautious words about what \eqref{eq:sampled_lmi} does and does not say. First, $(F_k,G_k,C_k,D_k)$ are local linearizations of the Mamba selective-scan core at token $k$; they are not the parameters of the full deep network with normalization, residual connections, and routing. Imposing $\mathcal{L}_k(P,\beta)\preceq 0$ is therefore a structural statement about the Mamba core, not a passivity certificate for the surrounding architecture. Second, in line with our coupled-input framing, \eqref{eq:sampled_lmi} is a strong sufficient condition for the one-step dissipativity inequality \eqref{eq:discrete_dissipativity}: once the LMI holds at every $k$, the storage $V(h)$ is non-increasing along trajectories driven by $u_k\equiv x_k$ up to the supply term, regardless of the gating rule that produced $(\Delta_k,u_k)$. Third, the matrix $P$ is a design choice; the simplest case $P=I$ already gives a useful (Euclidean) energy meter, and we will see in Section~\ref{sec:experiments} that learnable diagonal $P=\mathrm{diag}(q)$ produces a small but consistent Pareto improvement.

\subsection{Practical Mamba-Core regularizer}
\label{subsec:practical_regularizer}

Proposition~\ref{prop:sampled_lmi} suggests a clean training-time mechanism. Given a real Mamba forward pass on a batch of tokens, we can read off $(F_k,G_k,C_k,D_k)$ at every step (they are exactly the per-token matrices used in the selective scan), choose a storage matrix $P$ and a dissipation parameter $\beta\ge 0$, form the per-step block matrix $\mathcal{L}_k(P,\beta)$, and penalize its largest positive eigenvalue. Concretely, define
\begin{equation}
\label{eq:lmi_regularizer}
\mathcal{R}_{\mathrm{LMI}}(\theta;\,P,\beta)
\;\coloneqq\; \mathbb{E}_k\!\left[\, \max\bigl(0,\,\lambda_{\max}\bigl(\mathcal{L}_k(P,\beta)\bigr)\bigr) \,\right],
\end{equation}
where $\theta$ collects the trainable parameters that determine $(F_k,G_k,C_k,D_k)$ in the Mamba core, and the expectation is taken over tokens (and optionally Mamba channels). The total training loss is then
\[
\mathcal{L}_{\mathrm{total}}(\theta)
\;=\; \mathcal{L}_{\mathrm{task}}(\theta)
\;+\; \lambda_{\mathrm{LMI}}\, \mathcal{R}_{\mathrm{LMI}}(\theta;\,P,\beta),
\]
with $\lambda_{\mathrm{LMI}}\ge 0$ a single weight. We will refer to $\mathcal{R}_{\mathrm{LMI}}$ as the \emph{discrete LMI regularizer}.

A few practical remarks. Computing $\lambda_{\max}\bigl(\mathcal{L}_k(P,\beta)\bigr)$ on small per-token blocks is cheap: in the architectures we consider in Section~\ref{sec:experiments}, $\mathcal{L}_k(P,\beta)$ is at most a few tens of dimensions, and its largest eigenvalue (with autograd-friendly subgradients) is computed via standard symmetric eigendecomposition. The choice of $P$ controls how conservative the criterion is. We use $P=I$ as our main practical method and report a learnable diagonal $P=\mathrm{diag}(q)$ as an ablation; we found that the optional projection step that re-PSDifies $P$ after each update is essentially inactive in our runs, so we do not present it as a central component. Finally, the regularizer is purely token-local at the level of the Mamba core; it does not require backpropagating through the entire sequence to evaluate, and its computational cost is negligible compared with the Mamba forward pass.

\paragraph{What this section does and does not assert.} The reader should take three things from this bridge. (i) Proposition~\ref{prop:sampled_lmi} is the discrete, sampled-data form of the continuous parametric LMI of Theorem~\ref{thm:gating_kernel_constraints_sec4}, instantiated explicitly for the Mamba selective-scan recurrence. (ii) The criterion is a Mamba-core property: it constrains the per-step matrices that the Mamba block produces, not the surrounding deep network. (iii) The criterion is differentiable, so it can be used as a soft constraint during training. Whether enforcing it in this way actually changes how a real Mamba behaves --- on clean test data, on input perturbations, and on routing-energy signals --- is an empirical question, which we now answer in Section~\ref{sec:experiments} on seven standard time-series datasets, with extended detail in Appendix~\ref{app:extended_experiments}.

\section{Robust Stability under Input-Driven Dynamics}
\label{sec:robust_stability}
Sections~\ref{sec:fundamental_properties}--\ref{sec:universal_quadratic_constraints} laid out the structural side of selective SSMs --- forgetting from strict dissipativity, the canonical quadratic structure of the frozen-selection subsystem, and the parametric-LMI consequences of universal quadratic storage. Section~\ref{sec:discrete_bridge} then translated those structural results into a sampled, differentiable criterion for the Mamba selective-scan core. We now return to the continuous-time setting and analyze the selective SSM in the fully general and realistic case where it is driven by persistent, non-zero \emph{port inputs} $u(\cdot)$ under an exogenous selection schedule $x(\cdot)$. This section provides sufficient conditions for Input-to-State Stability (ISS), the gold standard for robust stability in nonlinear systems. To achieve this strong guarantee, we must impose conditions that ensure the system's internal dynamics are powerful enough to overcome any disturbance from the input. These conditions, while strong, provide a clear, verifiable pathway to designing certifiably robust models and are the continuous-time counterparts of the practical regularizer used in Section~\ref{sec:experiments}.

\subsection{Global ISS from Uniform Dissipativity}
To guarantee robust stability in the presence of arbitrary bounded inputs, basic passivity is not enough. We need to ensure the system's internal dynamics are uniformly contractive, that is, they dissipate energy at a guaranteed rate, regardless of the operating mode selected by the \emph{selection signal} $x(\cdot)$. This ensures the system can actively counteract the energy being injected by the \emph{port input} $u(\cdot)$.

With this subsection, we present such a condition based on the existence of a common quadratic Lyapunov function demonstrating uniform dissipativity.

\begin{theorem}[Global Stability from Uniform Local Dissipativity]
\label{thm:stability_from_local_passivity} 
Consider the continuous‐time selective state‐space model \eqref{eq:mamba_ssm_prelim} under the standard regularity assumptions. Assume further that:
\begin{enumerate}[label=(\roman*)]
    \item There exists a time-varying quadratic Lyapunov function candidate $V_Q(t,h) = \frac{1}{2} h^H Q(t) h$, where $Q: \timeInt \to \posSD[N]$ satisfies:
        \begin{itemize}
            \item $Q \in W^{1,1}_{\loc}(\timeInt, \posSD[N])$ (absolutely continuous locally).
            \item $Q(t)$ is uniformly positive definite and bounded: there exist constants $k_2 \ge k_1 > 0$ such that $k_1 I \preceq Q(t) \preceq k_2 I$ for all $t \in \timeInt$.
        \end{itemize}
    \item The dynamics satisfy a uniform dissipativity condition with respect to $V_Q$: There exists a constant $\delta > 0$ such that for almost every $t \in \timeInt$ and for \textbf{all} admissible \emph{selection} values $x \in \mathbb{C}^{d_{\mathrm{in}}}$ that can occur at time $t$, the following matrix inequality holds:
        \begin{equation} \label{eq:uniform_lyapunov_inequality_sec5} 
        \dot{Q}(t) + Q(t)A\bigl(\Delta(t),x\bigr) + A\bigl(\Delta(t),x\bigr)^H Q(t) \preceq -2\delta Q(t)
        \end{equation}
        (This implies that for any fixed selection value $x$, the homogeneous LTV system
$\dot{h} = A(\Delta(t),x)h$ satisfies $\dot V_Q \le -2\delta V_Q$ and hence is uniformly exponentially stable;
equivalently, $\|h(t)\|\le \sqrt{k_2/k_1}\,e^{-\delta(t-t_0)}\|h(t_0)\|$.)
    \item The input matrix $B(\Delta(t),x)$ is uniformly bounded: There exists a constant $M_B > 0$ such that $\|B(\Delta(t),x)\|_2 \le M_B$ for all $t$ and admissible $x$.
\end{enumerate}
Then, the selective state-space system \eqref{eq:mamba_ssm_prelim} is globally ISS with respect to the \emph{port input} $u(t)$, uniformly over all admissible selection schedules $x(\cdot)$. Specifically, there exist constants $\tilde{C} \ge 1$, $\tilde{\gamma} > 0$, and a class $\mathcal{K}$ gain function $\sigma$ such that for any initial state $h(t_0)$, any admissible selection schedule $x(\cdot)$, and any admissible port input $u(\cdot)$, the solution satisfies:
\begin{equation} \label{eq:iss_conclusion_sec5} 
\|h(t)\| \le \tilde{C} e^{-\tilde{\gamma} (t-t_0)} \|h(t_0)\| + \sigma\left( \|u\|_{[t_0,t],\infty} \right),
\qquad
\|u\|_{[t_0,t],\infty}\coloneqq \operatorname*{ess\,sup}_{t_0 \le \tau \le t}\|u(\tau)\|. \quad \text{for all } t \ge t_0.
\end{equation}
\end{theorem}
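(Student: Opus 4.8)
The plan is to run a standard ISS-Lyapunov argument on the common quadratic candidate $V_Q(t,h)=\tfrac12 h^H Q(t)h$, using hypothesis (i) to sandwich $V_Q$ between $\tfrac{k_1}{2}\|h\|^2$ and $\tfrac{k_2}{2}\|h\|^2$, hypothesis (ii) to extract uniform exponential dissipation of the homogeneous part, and hypothesis (iii) to control the forcing term via Young's inequality. First I would compute $\frac{d}{dt}V_Q(t,h(t))$ along a trajectory of \eqref{eq:mamba_ssm_prelim}: since $Q\in W^{1,1}_{\loc}$ and $h\in W^{1,1}_{\loc}$, for a.e.\ $t$ we get $\dot V_Q = \tfrac12 h^H\dot Q h + \Re\!\big(h^H Q(A_{\mathrm{eff}}h + B_{\mathrm{eff}}u)\big) = \tfrac12 h^H\!\big(\dot Q + Q A_{\mathrm{eff}} + A_{\mathrm{eff}}^H Q\big)h + \Re\!\big(h^H Q B_{\mathrm{eff}}u\big)$, where $A_{\mathrm{eff}}(t)=A(\Delta(t),x(t))$ etc.\ and $x(t)$ is the (admissible) value of the schedule at time $t$. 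Applying \eqref{eq:uniform_lyapunov_inequality_sec5} pointwise with the instantaneous selection value $x=x(t)$ bounds the quadratic term by $-2\delta\cdot\tfrac12 h^H Q(t)h = -2\delta V_Q$.

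Next I would handle the cross term: $|\Re(h^H Q B_{\mathrm{eff}}u)|\le \|h\|\,\|Q(t)\|_2\,\|B_{\mathrm{eff}}\|_2\,\|u\|\le k_2 M_B\|h\|\,\|u\|$ using $Q(t)\preceq k_2 I$ and hypothesis (iii). Then for any $\theta\in(0,1)$, Young's inequality gives $k_2 M_B\|h\|\,\|u\|\le \theta\delta\, k_1\|h\|^2 + \frac{(k_2 M_B)^2}{4\theta\delta k_1}\|u\|^2 \le 2\theta\delta V_Q + \frac{(k_2 M_B)^2}{4\theta\delta k_1}\|u\|^2$, where I used $k_1\|h\|^2\le 2V_Q$. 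Choosing e.g.\ $\theta=\tfrac12$ yields the differential inequality $\dot V_Q \le -\delta V_Q + c\,\|u(t)\|^2$ for a.e.\ $t$, with $c=\frac{(k_2 M_B)^2}{2\delta k_1}$. Integrating this linear ODE inequality (Grönwall / variation of constants, valid since $V_Q(t,h(t))$ is absolutely continuous) gives $V_Q(t,h(t))\le e^{-\delta(t-t_0)}V_Q(t_0,h(t_0)) + c\int_{t_0}^t e^{-\delta(t-\tau)}\|u(\tau)\|^2\,d\tau \le e^{-\delta(t-t_0)}V_Q(t_0,h(t_0)) + \frac{c}{\delta}\,\|u\|_{[t_0,t],\infty}^2$.

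Finally I would translate back to the state norm using the sandwich bounds: $\tfrac{k_1}{2}\|h(t)\|^2\le V_Q(t,h(t))$ and $V_Q(t_0,h(t_0))\le \tfrac{k_2}{2}\|h(t_0)\|^2$, so $\|h(t)\|^2 \le \tfrac{k_2}{k_1}e^{-\delta(t-t_0)}\|h(t_0)\|^2 + \tfrac{2c}{k_1\delta}\|u\|_{[t_0,t],\infty}^2$, and then using $\sqrt{a+b}\le\sqrt a+\sqrt b$ I obtain \eqref{eq:iss_conclusion_sec5} with $\tilde C=\sqrt{k_2/k_1}$, $\tilde\gamma=\delta/2$, and the class-$\mathcal K$ gain $\sigma(r)=\sqrt{2c/(k_1\delta)}\;r = \frac{k_2 M_B}{k_1\delta}\sqrt{2}\cdot\frac{1}{\sqrt2}\,r$ (a linear, hence $\mathcal K_\infty$, gain). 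I would remark that all bounds are independent of the particular admissible schedule $x(\cdot)$, which gives the claimed uniformity.

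The argument is essentially routine; the one point needing care — and the main obstacle — is the measurability/absolute-continuity bookkeeping: one must justify that $t\mapsto V_Q(t,h(t))$ is absolutely continuous and that the chain rule $\dot V_Q = \tfrac12 h^H\dot Q h + \Re(h^H Q\dot h)$ holds for a.e.\ $t$ despite $A_{\mathrm{eff}}$ being only $L^1_{\loc}$ and $Q$ only $W^{1,1}_{\loc}$. This follows because products/compositions of a $W^{1,1}_{\loc}$ function ($Q$) with a locally bounded absolutely continuous function ($h$, which is continuous hence locally bounded on compact time intervals) remain in $W^{1,1}_{\loc}$, and the Leibniz rule holds a.e.; the forcing term $\Re(h^H Q B_{\mathrm{eff}}u)$ is in $L^1_{\loc}$ since $h$ is locally bounded, $Q,B_{\mathrm{eff}}$ enter boundedly (via $k_2$, $M_B$), and $u\in L^2_{\loc}\subset L^1_{\loc}$. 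A secondary subtlety is that \eqref{eq:uniform_lyapunov_inequality_sec5} is assumed for every admissible selection \emph{value} $x$ at time $t$; I would note that plugging in $x=x(t)$ for the given schedule is legitimate precisely because the schedule is admissible, so the pointwise inequality holds for a.e.\ $t$ along the actual trajectory, which is all the Grönwall step requires.
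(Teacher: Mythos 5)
Your proposal is correct, and it follows the paper's overall skeleton (differentiate $V_Q$ along trajectories, bound the homogeneous part by $-2\delta V_Q$ via \eqref{eq:uniform_lyapunov_inequality_sec5}, bound the cross term by $k_2 M_B\|h\|\,\|u\|$ via Cauchy--Schwarz and hypothesis (iii), then convert back through the sandwich $k_1 I\preceq Q\preceq k_2 I$). The one genuine divergence is how the bilinear term $\|h\|\,\|u\|$ is tamed. The paper keeps the inequality in the form $\dot\Phi\le -2\delta\Phi+K\|u\|\sqrt{\Phi}$ and substitutes $\Psi=\sqrt{\Phi}$ to obtain a \emph{linear} differential inequality $\dot\Psi\le-\delta\Psi+\tfrac{K}{2}\|u\|$, which after the comparison lemma yields the decay rate $\tilde\gamma=\delta$ and gain $\sigma(s)=\tfrac{k_2M_B}{\delta k_1}s$. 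You instead apply Young's inequality to absorb a fraction $2\theta\delta V_Q$ of the cross term into the dissipation, getting $\dot V_Q\le-(2-2\theta)\delta V_Q+c\|u\|^2$ and then Gr\"onwall on $V_Q$ itself; with $\theta=\tfrac12$ this gives the same gain but only $\tilde\gamma=\delta/2$ after taking square roots (and taking $\theta\to0$ recovers rate $\delta$ at the cost of an inflated gain). Both routes are standard and both satisfy the theorem as stated, since it only asserts the existence of some $\tilde C,\tilde\gamma,\sigma$; the square-root substitution is marginally sharper here because it preserves the full rate $\delta$ and the clean gain simultaneously, while your Young-inequality route avoids the (minor) technicality of differentiating $\sqrt{\Phi}$ where $\Phi$ may vanish. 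Your closing remarks on the $W^{1,1}_{\loc}$ chain-rule bookkeeping and on the legitimacy of plugging $x=x(t)$ into the parametric inequality are correct and in fact more explicit than what the paper records.
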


\begin{proof}[Proof Sketch]
We analyze the time derivative of the Lyapunov function $V_Q$ along the system trajectories. The derivative splits into two parts: one from the internal dynamics ($A$ matrix) and one from the external \emph{port input} ($B$ matrix). The uniform dissipativity assumption provides a strong negative bound on the internal part ($\le -2\delta V_Q$). The input part is bounded using the Cauchy-Schwarz inequality. Combining these results in a differential inequality of the form $\frac{dV}{dt} \le -aV + b\sqrt{V}$. By a change of variables, $\Psi = \sqrt{V}$, this transforms into a standard linear differential inequality, $\frac{d\Psi}{dt} \le -c\Psi + d$, whose solution is bounded. Translating this bound back to the state norm $\|h(t)\|$ yields the classic Input-to-State Stability (ISS) estimate.
Check Appendix \ref{subsec:iss} for the full proof.
\end{proof}

\paragraph{ML implication.} \textit{Theorem~\ref{thm:stability_from_local_passivity} is the practically useful version of the structural results in Section~\ref{sec:universal_quadratic_constraints}. Where Theorem~\ref{thm:gating_kernel_constraints_sec4} characterized what passivity \emph{would have to} enforce on a selective SSM under a single quadratic certificate, the ISS condition \eqref{eq:uniform_lyapunov_inequality_sec5} is what robustness \emph{actually requires}: a uniform exponential decay rate of the homogeneous dynamics, in matrix-inequality form, that holds across every selection value gating can produce. The discrete LMI regularizer of Section~\ref{sec:discrete_bridge} can be read as a soft enforcement of exactly this kind of uniform-in-$x$ matrix inequality on the Mamba selective-scan core: the regularizer drives $\lambda_{\max}(\mathcal{L}_k(P,\beta))$ toward zero across all tokens, which is the sampled analogue of asking \eqref{eq:uniform_lyapunov_inequality_sec5} to hold for every selection value at almost every time. The empirical question this raises --- does enforcing the discrete analog actually improve the model's behavior under driving inputs and perturbations? --- is what Section~\ref{sec:experiments} is designed to answer.}

\begin{remark}[Relationship to the Passivity LMI]
The ISS condition in Eq. \eqref{eq:uniform_lyapunov_inequality_sec5} is conceptually stronger than the passivity LMI from Theorem \ref{thm:gating_kernel_constraints_sec4}. The passivity LMI describes an \textbf{energy balance}, ensuring the system does not generate energy. In contrast, the ISS condition demands a forced \textbf{energy decay}, ensuring the system's internal dynamics actively dissipate energy at a uniform exponential rate.

Formally, if the ISS condition holds (with $\delta>0$ and a positive definite $Q$), it is sufficient to satisfy the internal stability portion (the (1,1) block) of the passivity LMI. However, a system can be passive (energy-balanced) without being uniformly contractive in this stricter sense. Thus, the ISS condition is a specialized and powerful tool for proving robust stability, while the passivity LMI is a more general tool for analyzing a system's energy flow. In Section~\ref{sec:experiments}, we therefore use the more permissive passivity LMI (in its sampled form from Section~\ref{sec:discrete_bridge}) as the training-time regularizer, and treat the stronger ISS-style condition as the gold standard that the regularizer is empirically pushing the Mamba core toward.
\end{remark}

\section{Empirical Validation on SST/Mamba}
\label{sec:experiments}

The continuous-time analysis of Sections~\ref{sec:fundamental_properties}--\ref{sec:robust_stability} and the discrete bridge of Section~\ref{sec:discrete_bridge} make a concrete prediction: penalizing the largest positive eigenvalue of the sampled block matrix $\mathcal{L}_k(P,\beta)$ from \eqref{eq:sampled_lmi} during training should constrain the Mamba selective-scan core toward sampled passivity, ideally without harming downstream forecasting accuracy. This section empirically evaluates that prediction across a real Mamba-based architecture on seven standard time-series forecasting datasets and four prediction horizons. Our goal is not to set new MSE records on these benchmarks; it is to ask, with honest scoping, whether the discrete LMI regularizer in \eqref{eq:lmi_regularizer} actually does what the theory says it should, and what limits of the framework the experiments expose. Extended per-dataset tables, training hyperparameters, and additional ablations are deferred to Appendix~\ref{app:extended_experiments}.

\subsection{Experimental Setup}
\label{subsec:experimental_setup}

\paragraph{Architecture.}
We use the SST forecasting architecture~\citep{XiongxiaoXuSST}, which is a routed mixture of a Mamba long-range expert and a short-range attention branch. We apply our regularizer only to the Mamba branch: at each token $k$, the per-step matrices $(F_k,G_k,C_k,D_k)$ are extracted from the Mamba selective-scan core during the forward pass, the block matrix $\mathcal{L}_k(P,\beta)$ from \eqref{eq:sampled_lmi} is formed, and the regularizer \eqref{eq:lmi_regularizer} is added to the standard task loss. Importantly, this means the regularizer is a Mamba-core diagnostic rather than a passivity certificate for the full SST network (with normalization, residual connections, and routing); we discuss this scoping carefully throughout.

\paragraph{Datasets and horizons.}
We evaluate on seven standard forecasting datasets: ETTh1, ETTh2, ETTm1, ETTm2, weather, electricity, and traffic, with prediction horizons $\{96, 192, 336, 720\}$, giving $7\times 4 = 28$ dataset--horizon pairs. The look-back window is $672$ tokens; the label length is $336$.

\paragraph{Variants.}
We compare:
\begin{itemize}
    \item \textbf{SST baseline.} The unregularized SST architecture.
    \item \textbf{SST + discrete LMI ($P=I$).} SST trained with $\mathcal{R}_{\mathrm{LMI}}$ at $P=I$, $\lambda_{\mathrm{LMI}}{=}10^{-2}$, $\beta{=}10^{-3}$. This is our main practical method.
    \item \textbf{SST + discrete LMI (diag-$Q$).} As above with $P=\mathrm{diag}(q)$ a learnable diagonal storage (initialized at $q=\mathbf{1}$). Reported as a conservativeness ablation.
\end{itemize}
Additional ablations (a milder $\lambda_{\mathrm{LMI}}$ regime, a projection step, and a router-energy variant) appear in Sections~\ref{subsec:robustness}--\ref{subsec:router_ablation} and Appendix~\ref{app:extended_experiments}.

\paragraph{Diagnostics.}
At evaluation time we record clean test MSE and MAE, plus four Mamba-core diagnostics: the mean sampled LMI violation $\mathbb{E}_k[\max(0,\lambda_{\max}(\mathcal{L}_k))]$, the maximum sampled $\lambda_{\max}(\mathcal{L}_k)$, the mean Mamba-core SSM state norm, and (when relevant) the router's mean Mamba weight $m_{\text{weight}}$.

\subsection{Discrete LMI Regularization Reduces Mamba-Core Passivity Violations at No MSE Cost}
\label{subsec:clean_lmi_results}

We first evaluate, on clean test data, whether enforcing $\mathcal{L}_k(P,\beta)\preceq 0$ during training actually shifts the Mamba core toward sampled passivity. Table~\ref{tab:lmi_main_clean} summarizes the aggregate result across all 28 dataset--horizon pairs.

\begin{table}[h]
\centering
\small
\begin{tabular}{l c c c c}
\toprule
\textbf{Variant} & \textbf{Clean MSE} & \textbf{Clean MAE} & \textbf{Mean LMI viol.} & \textbf{Max $\lambda_{\max}(\mathcal{L}_k)$}\\
\midrule
SST baseline                  & 0.385804 & 0.390236 & 0.035824   & 2.6012\\
SST + discrete LMI ($P=I$)    & 0.385815 & 0.390244 & \textbf{0.002882} & \textbf{1.8972}\\
SST + discrete LMI (diag-$Q$) & 0.385815 & 0.390243 & 0.002877   & 1.8975\\
\bottomrule
\end{tabular}
\caption{Aggregate clean-test results across $28$ dataset--horizon pairs, mean over runs. The discrete LMI regularizer reduces the sampled Mamba-core LMI violation by approximately $92\%$ relative to baseline, while clean MSE differs by less than $0.003\%$.}
\label{tab:lmi_main_clean}
\end{table}

The aggregate picture is unambiguous. Mean Mamba-core LMI violation drops from $0.0358$ to $0.0029$ under $P=I$ regularization, an approximately $92\%$ reduction; the largest positive eigenvalue $\lambda_{\max}(\mathcal{L}_k)$ across all tokens and runs drops from $2.6012$ to $1.8972$. Clean MSE, by contrast, changes by approximately $0.0012\%$ on average, with a maximum per-run delta of $0.018\%$. Crucially, the violation reduction holds in $28/28$ paired dataset--horizon comparisons, with the largest absolute reductions on the datasets where the unregularized baseline violates the criterion most (traffic, electricity), and the smallest on datasets where the baseline already nearly satisfies it (the ETTh family). Figure~\ref{fig:lmi_violation_by_dataset} shows this per-dataset structure.

\begin{figure}[h]
\centering
\includegraphics[width=0.9\linewidth]{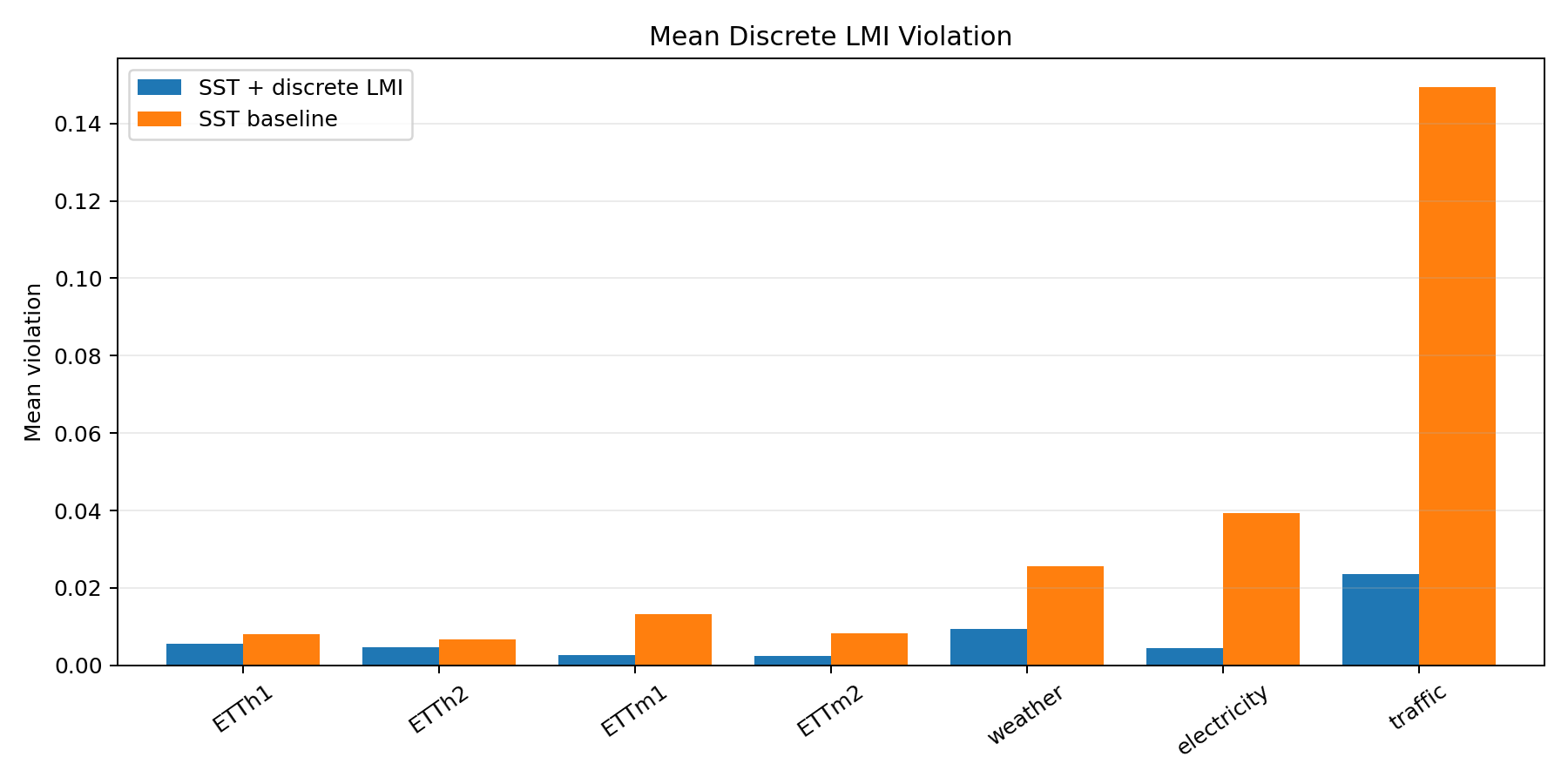}
\caption{Mean sampled Mamba-core LMI violation per dataset, averaged across the four prediction horizons. The discrete LMI regularizer reduces the violation on every dataset, with the largest reductions exactly where the unregularized baseline violates the sampled criterion most strongly.}
\label{fig:lmi_violation_by_dataset}
\end{figure}

We read this as the cleanest empirical statement of what Section~\ref{sec:discrete_bridge} predicted: the regularizer does what it is designed to do, and the cost in forecasting accuracy is in the noise. Two notes on scoping. First, although $\lambda_{\max}(\mathcal{L}_k)$ is reduced uniformly, it is not driven below zero; the regularizer produces a substantial and consistent \emph{reduction in violation}, not a certified passivity statement. Second, this is a Mamba-core property; the surrounding SST network (normalization, residuals, routing) is not constrained by the regularizer.

\subsection{Conservativeness Ablation: $P=I$ vs Learnable Diagonal $P$}
\label{subsec:conservativeness}

A natural reviewer concern is that $P=I$ might be unduly conservative as a storage choice. Section~\ref{sec:discrete_bridge} explicitly noted that any $P\succeq 0$ is admissible, so we test whether replacing $P=I$ with a learnable diagonal $P=\mathrm{diag}(q)$ buys anything. Aggregate results are in Table~\ref{tab:conservativeness}.

\begin{table}[h]
\centering
\small
\begin{tabular}{l c c c c}
\toprule
\textbf{Variant} & \textbf{Clean MSE} & \textbf{Mean LMI viol.} & $q_{\text{mean}}$ & $q_{\text{log-std}}$\\
\midrule
SST + discrete LMI ($P=I$)    & 0.385815 & 0.002882 & --    & --\\
SST + discrete LMI (diag-$Q$) & 0.385815 & 0.002877 & 1.0008 & $1.1\times 10^{-3}$\\
\bottomrule
\end{tabular}
\caption{Conservativeness ablation: replacing the storage $P=I$ with a learnable diagonal $P=\mathrm{diag}(q)$ produces a strict (if small) Pareto improvement on $28/28$ pairs at no measurable accuracy cost.}
\label{tab:conservativeness}
\end{table}

Across all 28 paired runs, the diagonal-$Q$ variant has lower mean LMI violation than $P=I$, with no MSE degradation (paired-run violation reductions range from $-0.10\%$ to $-0.33\%$; MSE deltas in the noise floor at $\sim 10^{-5}$). Per-run reductions are strict: every paired run sits in the favourable corner of the (violation reduction, MSE delta) plane. The learned $q$ moves visibly away from identity ($q_{\text{mean}}\approx 1.0008$, with $q_{\text{log-std}}\approx 1.1\times 10^{-3}$ and $q_{\text{max}}\approx 1.0084$ on the dataset where it pushes hardest, traffic), confirming that the diagonal storage is being used. The improvement is modest in absolute terms; we read this as evidence that the framework can support less-conservative storage metrics without harming accuracy, and we use $P=I$ as the default practical choice while reporting diag-$Q$ as the cleaner conservativeness check.

\subsection{Robustness Diagnostics under Injected Perturbations}
\label{subsec:robustness}

So far, our empirical claims concern clean test inputs. We next ask whether the regularizer changes the Mamba core's behavior when inputs are corrupted. We use five synthetic perturbation modes applied only to the test inputs (target sequences are kept clean): Gaussian noise, spike injections, block masks, amplitude shifts, and last-value-hold; each variant is evaluated three times across seeds. Aggregate results, averaged across the five non-clean modes and all 28 dataset--horizon pairs, are in Table~\ref{tab:robust}.

\begin{table}[h]
\centering
\small
\begin{tabular}{l c c c c}
\toprule
\textbf{Variant} & \textbf{Robust MSE} & \textbf{MSE deg. \%} & \textbf{Mean LMI viol.} & \textbf{Mean max state norm}\\
\midrule
SST baseline                  & 0.70808 & $+126.76\%$ & 0.02345 & 0.26651\\
SST + discrete LMI ($P=I$)    & 0.70809 & $+126.76\%$ & \textbf{0.00512} & \textbf{0.25588}\\
SST + discrete LMI (diag-$Q$) & 0.70809 & $+126.76\%$ & 0.00512 & 0.25588\\
\bottomrule
\end{tabular}
\caption{Robustness diagnostics under five injected perturbation modes (Gaussian noise, spikes, block masks, amplitude shifts, last-value-hold), averaged across $28$ dataset--horizon pairs and $3$ seeds. The regularizer reduces internal Mamba-core LMI violations and state norms uniformly, but does not change robust forecasting MSE.}
\label{tab:robust}
\end{table}

The result is sharply two-sided, and we report it as such. On internal Mamba-core diagnostics under perturbation, the regularizer is a strict Pareto improvement: mean LMI violation drops from $0.02345$ to $0.00512$ ($\sim 60\%$ reduction; lower in $35/35$ dataset--mode pairs), mean max SSM state norm from $0.26651$ to $0.25588$ ($\sim 3.4\%$, lower in $35/35$ pairs), and the worst observed state norm in the experiment from approximately $0.6065$ to $0.5387$. These improvements hold uniformly across all five perturbation modes, with the largest LMI-violation reductions on the hardest perturbation (amplitude shift, $\sim 345\%$ MSE degradation against clean for the baseline). On forecasting MSE under perturbation, by contrast, the regularizer produces no measurable improvement: robust MSE deltas vs.\ baseline are at $+0.0008\%$, well within numerical noise.
The Pareto picture under amplitude shift, the most challenging perturbation, is shown in Figure~\ref{fig:robust_pareto}.

\begin{figure}[h]
\centering
\includegraphics[width=0.7\linewidth]{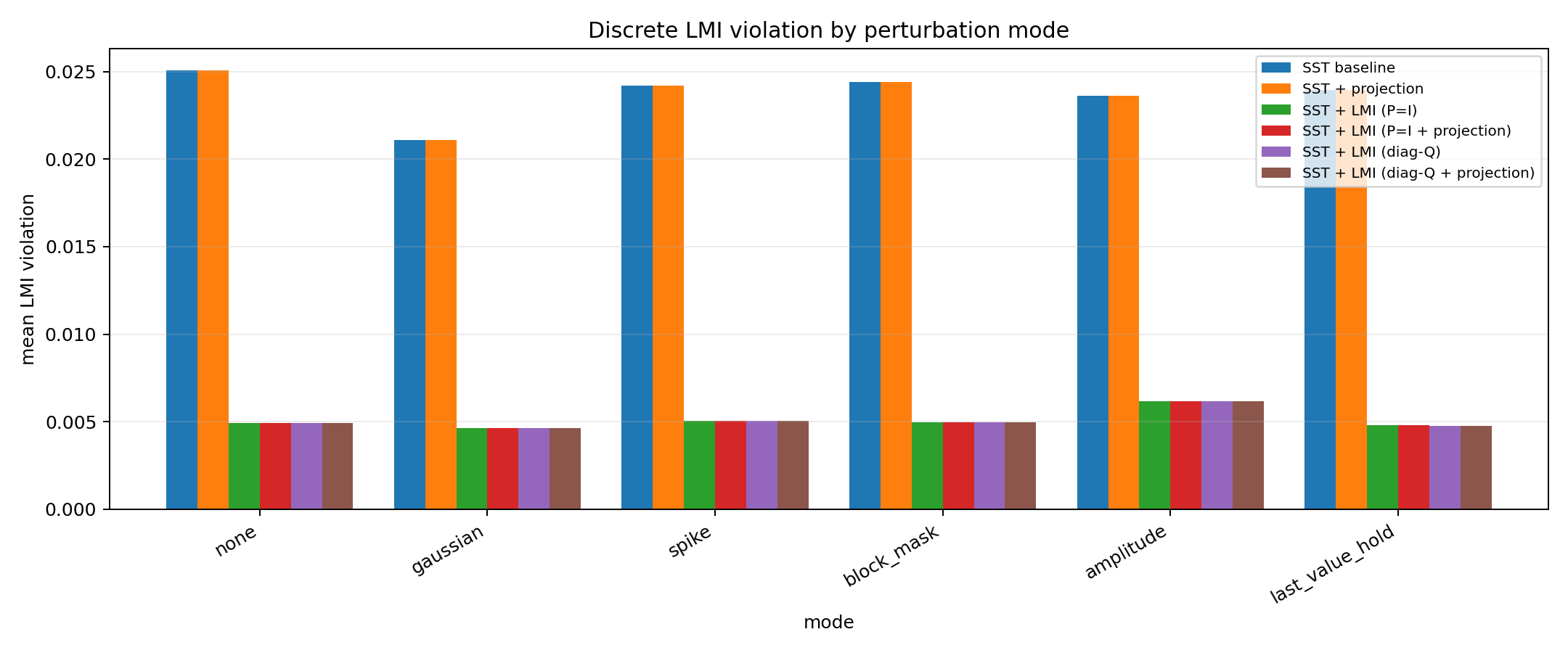}
\caption{Pareto plot under amplitude perturbation, aggregated across datasets. Each point is one variant. The four LMI-regularized variants cluster strictly inside the favorable corner (lower mean Mamba-core LMI violation \emph{and} lower mean max SSM state norm) of the (state norm, LMI violation) plane, while baseline and projection-only fall in the unfavorable corner.}
\label{fig:robust_pareto}
\end{figure}

We are intentionally precise about how we frame this. The regularizer improves internal Mamba stability and passivity diagnostics under perturbed inputs without degrading forecasting accuracy. It does not, in this run, produce a robust-MSE improvement; this is an honest negative on the practical task metric. Two plausible reasons are worth flagging. First, the regularizer is a Mamba-core diagnostic; the surrounding SST network (normalization, residual paths, routing, and output heads) absorbs much of the perturbation effect on MSE, so improving the Mamba core internally does not by itself translate into improved end-to-end MSE under corruption. Second, the perturbations are injected at test time only, so there is no opportunity for the regularized model to align its representation with the perturbed regime. We discuss both in Section~\ref{subsec:limitations}.

\subsection{Router-Stability Ablation: A Neutral Result}
\label{subsec:router_ablation}

The SST router decides, per sample, how much to weight the Mamba branch versus the short-range branch. A natural conjecture is that adding an energy-aware penalty on the router --- one that discourages high Mamba weight on samples with high local LMI violation --- could induce a learned ``avoid the Mamba branch when the Mamba core is locally unstable'' rule, of the kind a human reading Section~\ref{sec:robust_stability} would write. We tested this with a small additional regularizer on the router during training. Aggregate results over the same 28 dataset--horizon pairs are in Table~\ref{tab:router}.

\begin{table}[h]
\centering
\small
\setlength{\tabcolsep}{4pt}
\renewcommand{\arraystretch}{1.15}
\begin{tabular}{@{}l c c c c@{}}
\toprule
\textbf{Variant} & \textbf{Clean MSE} & \textbf{Mean $m_w$} & \textbf{Train Corr$(m_w, \text{viol.})$} & \textbf{Mean LMI viol.}\\
\midrule
SST + discrete LMI ($P=I$)  & 0.385815 & 0.30801 & --        & 0.00754\\
SST + discrete LMI + router & 0.385815 & 0.30799 & $+0.0043$ & 0.00754\\
\bottomrule
\end{tabular}
\caption{Router-stability ablation: adding the energy-aware router penalty preserves clean MSE in $28/28$ runs and consistently nudges the Mamba weight downward, but the train-time correlation between Mamba weight and LMI violation is essentially zero on aggregate, with inconsistent sign across datasets ($-0.107$ on ETTh1 to $+0.054$ on ETTm1). We report this as a clean-performance neutrality result rather than as evidence of a learned energy-aware routing schedule. We use $m_w \coloneqq m_{\text{weight}}$ for brevity.}
\label{tab:router}
\end{table}

The result is honest neutral. Clean MSE is preserved exactly within tolerance ($28/28$ within $1\%$), and the Mamba weight is lower than the LMI-only variant in $28/28$ runs --- directionally aligned with the energy-aware reading. But the magnitude of the Mamba-weight shift is on the order of $10^{-5}$, the train-time correlation between Mamba weight and local LMI violation is $+0.0043$ on aggregate (essentially zero), and per-dataset correlations span both signs (from $-0.107$ on ETTh1 to $+0.054$ on ETTm1). We do not interpret this as evidence of a learned energy-aware schedule, and we present it as an appendix-style ablation rather than a positive contribution. The most plausible reason the mechanism is inert is that the router is trained on clean data, where local LMI violation is not a strong predictor of task loss. We expect that retraining with injected perturbations, so that violation magnitude becomes correlated with task loss, could produce a more consistent energy-aware schedule, but we leave this as future work.

\subsection{Limitations}
\label{subsec:limitations}

Three limitations of the empirical evaluation deserve explicit statement, and they should also be read as the limits of what the present paper claims.

\paragraph{Mamba core, not full network.}
The discrete LMI regularizer \eqref{eq:lmi_regularizer} is a per-token, per-channel constraint on the Mamba selective-scan recurrence \eqref{eq:discrete_recurrence}. Sections~\ref{subsec:clean_lmi_results}--\ref{subsec:robustness} consequently report Mamba-core diagnostics, not passivity certificates for the full SST network. The mismatch between strong improvements in Mamba-core diagnostics and the absence of robust-MSE improvement (Section~\ref{subsec:robustness}) is the empirical face of this scoping. Bridging this gap would require either a network-level passivity criterion (a research direction the parametric LMI of Section~\ref{sec:universal_quadratic_constraints} suggests but does not yet provide) or an explicit cascade argument.

\paragraph{Sampled criterion, not exact discrete passivity.}
Proposition~\ref{prop:sampled_lmi} gives an exact one-step dissipativity inequality at the Mamba selective-scan core; what we enforce during training is a soft penalty on the largest positive eigenvalue. Empirically, $\lambda_{\max}(\mathcal{L}_k)$ is reduced uniformly but not driven below zero. The numerical results should therefore be read as ``substantial reduction in violation'' rather than ``certified sampled passivity''. We do not claim the latter.

\paragraph{Single architecture family.}
All experiments are on SST as the host architecture, a routed Mamba/short-range mixture. We chose this because it gives a clean Mamba branch on which to evaluate the criterion, and because the router lets us also probe the energy-aware routing question of Section~\ref{subsec:router_ablation}. The criterion of Section~\ref{sec:discrete_bridge} is, however, generic: it depends only on the per-token matrices $(F_k,G_k,C_k,D_k)$ extracted from any selective-scan core. Validation on a wider set of selective architectures (HGRN, GLA, vanilla Mamba without the short-range branch, hybrid selective Transformers) is left as future work.

In summary, the discrete LMI regularizer is empirically clean-MSE safe, substantially reduces sampled Mamba-core LMI violations on every clean and perturbed run we tested, and improves Mamba-core stability diagnostics under input corruption. It does not, on its own, produce a robust-MSE improvement, and we explicitly do not claim it does. Section~\ref{sec:conclusion} discusses how these scoped empirical results sharpen the theoretical claims of Sections~\ref{sec:fundamental_properties}--\ref{sec:robust_stability} and what they imply for selective-architecture design.

\section{Conclusion}
\label{sec:conclusion}

This paper has done two things. On the theoretical side, we developed a passivity- and ISS-based account of continuous-time selective State-Space Models with discontinuous gating, paying particular attention to the structural regularity of energy storage under the kind of $L^p_{\loc}$ coefficient functions that gating actually produces. On the practical side, we translated that account into a sampled, differentiable criterion on the Mamba selective-scan core and evaluated it as a training-time regularizer on a real Mamba-based forecasting architecture across $28$ dataset--horizon pairs. We close by pulling together what was proved, what was empirically supported, and what was deliberately not claimed.

By separating the selection signal $x(\cdot)$ from the driving (port) input $u(\cdot)$, we obtained four structural results for system~\eqref{eq:mamba_ssm_prelim}. State-strict dissipativity, together with quadratic bounds on a storage functional, yields exponential forgetting of homogeneous trajectories uniformly over admissible selection schedules (Theorem~\ref{thm:decay_from_V}). Freezing the selection produces a passive linear time-varying input--output subsystem whose minimal available storage is necessarily quadratic with $\AUC_{\loc}$ regularity (Theorem~\ref{thm:existence_auc_quadratic_unforced_detailed_sec3}), giving a canonical intrinsic energy structure that is robust to discontinuous gating. Under the strong but informative hypothesis that a single quadratic storage certifies passivity uniformly over admissible selection schedules, we derived a parametric LMI in $x$ together with universal kernel/unobservability and rank-monotonicity conditions, formalizing what we called \emph{irreversible forgetting} (Theorems~\ref{thm:Q_regularity_rank}--\ref{thm:kernel_inertness_sec4}). Finally, a uniform-in-$x$ matrix inequality on the homogeneous dynamics yields global ISS with respect to the port input (Theorem~\ref{thm:stability_from_local_passivity}).

Section~\ref{sec:discrete_bridge} translated the parametric LMI of Theorem~\ref{thm:gating_kernel_constraints_sec4} into a one-step dissipativity inequality on the Mamba selective-scan recurrence \eqref{eq:discrete_recurrence}, with a sampled block matrix $\mathcal{L}_k(P,\beta)$ whose structure mirrors the continuous-time LMI exactly (Proposition~\ref{prop:sampled_lmi}). The largest positive eigenvalue of this matrix is differentiable in the model parameters that the Mamba selective scan produces, so it can be used directly as a soft training-time constraint \eqref{eq:lmi_regularizer}. We are explicit that this criterion is a property of the Mamba core, not a passivity certificate for the surrounding deep network.

On clean test data across seven standard time-series datasets and four horizons, the discrete LMI regularizer reduces mean Mamba-core LMI violation by approximately $92\%$ relative to the unregularized baseline ($28/28$ pairs), while clean MSE differs by less than $0.018\%$ (Section~\ref{subsec:clean_lmi_results}). Replacing $P=I$ with a learnable diagonal $P=\mathrm{diag}(q)$ produces a strict (if small) Pareto improvement in $28/28$ paired runs at no measurable accuracy cost (Section~\ref{subsec:conservativeness}), confirming that the framework is not overly tied to the simplest storage choice. Under five injected input perturbations, the regularizer produces a uniform $\sim 60\%$ reduction in mean Mamba-core LMI violation and a $\sim 3.4\%$ reduction in mean max SSM state norm, holding in $35/35$ dataset--mode pairs and on every perturbation type (Section~\ref{subsec:robustness}).

We have been deliberately precise about what the empirics do not say. Under perturbed inputs, the regularizer improves Mamba-core stability and passivity diagnostics but does not produce a robust forecasting MSE improvement on its own (Section~\ref{subsec:robustness}); we report this as an honest negative on the practical task metric. The sampled LMI is reduced but not certified negative semidefinite, so we say ``substantial reduction in violation'' rather than ``certified sampled passivity''. The router-stability ablation preserves clean MSE and consistently nudges Mamba weights downward but does not produce a learned energy-aware routing schedule, with train-time correlations between Mamba weight and LMI violation that are essentially zero on aggregate and inconsistent in sign across datasets (Section~\ref{subsec:router_ablation}). The universal-storage hypothesis used in Section~\ref{sec:universal_quadratic_constraints} is a strong sufficient condition that motivates the practical regularizer; it is not a claim that every Mamba in the wild satisfies it.

Read together, the theory and the experiments suggest three concrete design lessons for selective SSMs. First, principled initializations such as HiPPO~\citep{gu2020hippo} are best understood as the design of a baseline energy landscape on which input-driven selection then operates: Theorem~\ref{thm:existence_auc_quadratic_unforced_detailed_sec3} explains why such initializations admit a well-conditioned quadratic energy function, and Appendix~\ref{subsec:sim_init_impact} shows empirically that initializations failing this admit no valid energy function at all and diverge. Second, the sampled criterion of Section~\ref{sec:discrete_bridge} is a low-cost, differentiable Mamba-core diagnostic that can be added to existing training pipelines with negligible overhead and provably (in the empirical sense of $28/28$ paired runs) reduces sampled passivity violations without harming accuracy: it is, at minimum, a useful structural regularizer even if its end-to-end MSE benefit is not a forced consequence of passivity. Third, the structural framework of Sections~\ref{sec:fundamental_properties}--\ref{sec:universal_quadratic_constraints} gives a precise control-theoretic vocabulary --- forgetting, kernel inertness, irreversibility --- in which to phrase questions about expressivity--stability tradeoffs in selective architectures, going beyond the Lyapunov-exponent diagnostics that have so far dominated the literature on Mamba-style stability~\citep{halloran2024mamba}.

The honest scoping in Section~\ref{subsec:limitations} identifies three concrete next steps. (i) \emph{From Mamba core to deep network.} The most informative gap revealed by our experiments is between Mamba-core diagnostic improvements and absent end-to-end MSE improvements under perturbation. Closing this gap requires either a deep-network passivity criterion (a research direction the parametric LMI of Theorem~\ref{thm:gating_kernel_constraints_sec4} suggests but does not yet resolve) or an explicit cascade argument bridging core dissipativity and end-to-end robustness. (ii) \emph{Perturbation-aware energy routing.} The router-stability ablation suggests, plainly, that an energy-aware routing schedule does not emerge from training on clean inputs, because local LMI violation is not a strong predictor of clean-data task loss. We expect that retraining with injected perturbations, so that violation magnitude becomes correlated with task loss, is a path to a genuine learned routing rule of the kind a reader of Section~\ref{sec:robust_stability} would write. (iii) \emph{Discrete-time theorems.} The continuous-time results of Sections~\ref{sec:fundamental_properties}--\ref{sec:robust_stability} have direct discrete-time analogues whose proofs we have left informal in Section~\ref{sec:discrete_bridge}; making the rank-monotonicity and irreversibility statements rigorous in the sampled regime, and connecting them to the empirically observed monotonic-reduction behaviour in Section~\ref{sec:experiments}, is the natural rigorous extension of the present work.

Selective SSMs are mathematically awkward: they are neither LTI, nor classical LTV, nor smooth nonlinear systems. We have argued that, because they are nonetheless port-Hamiltonian-style systems with a clear supply rate, they are amenable to a passivity-based analysis whose continuous-time conclusions transfer cleanly to the discrete Mamba selective scan as a per-token training-time criterion. Within the boundaries we have drawn, the criterion does what the theory predicts and reveals where the theory currently stops. We hope this combination of structural results and honest empirical scoping is a useful step toward a control-theoretic foundation for selective architectures in deep learning, and toward concrete tools that practitioners can drop into existing training loops without paying for them in accuracy.

\newpage

\bibliography{main}

@inproceedings{gu2022efficiently,
  title={Efficiently Modeling Long Sequences with Structured State Spaces},
  author={Gu, Albert and Goel, Karan and R\'e, Christopher},
  booktitle={The International Conference on Learning Representations ({ICLR})},
  year={2022}
}

@inproceedings{gu2024mamba,
title={Mamba: Linear-Time Sequence Modeling with Selective State Spaces},
author={Albert Gu and Tri Dao},
booktitle={First Conference on Language Modeling},
year={2024},
url={https://openreview.net/forum?id=tEYskw1VY2}
}

@article{Willems1972DissipativeDS,
  title={Dissipative dynamical systems part I: General theory},
  author={Jan C. Willems},
  journal={Archive for Rational Mechanics and Analysis},
  year={1972},
  volume={45},
  pages={321-351},
  url={https://api.semanticscholar.org/CorpusID:123076101}
}

@article{Sontag1995OnCO,
  title={On characterizations of the input-to-state stability property},
  author={Eduardo Sontag and Yuan Wang},
  journal={Systems \& Control Letters},
  year={1995},
  volume={24},
  pages={351-359},
  url={https://api.semanticscholar.org/CorpusID:13893551}
}

@article{Jiang1999InputtostateSF,
  title={Input-to-state stability for discrete-time nonlinear systems},
  author={Zhong-Ping Jiang and Eduardo D. Sontag and Yuan Wang},
  journal={IFAC Proceedings Volumes},
  year={1999},
  volume={32},
  pages={2403-2408},
  url={https://api.semanticscholar.org/CorpusID:12463317}
}

@inproceedings{qin2024hgrn2,
  title     = {HGRN2: Gated Linear RNNs with State Expansion},
  author    = {Qin, Zhen and Yang, Songlin and Sun, Weixuan and Shen, Xuyang and Li, Dong and Sun, Weigao and Zhong, Yiran},
  booktitle = {Proceedings of COLM},
  year      = {2024}
}

@inproceedings{yang2024gla,
  title     = {Gated Linear Attention Transformers with Hardware-Efficient Training},
  author    = {Yang, Songlin and Wang, Bailin and Shen, Yikang and Panda, Rameswar and Kim, Yoon},
  booktitle = {Proceedings of ICML},
  year      = {2024}
}

@inproceedings{zubic2025limits,
title={Limits of Deep Learning: Sequence Modeling through the Lens of Complexity Theory},
author={Nikola Zubic and Federico Sold{\`a} and Aurelio Sulser and Davide Scaramuzza},
booktitle={The Thirteenth International Conference on Learning Representations},
year={2025},
url={https://openreview.net/forum?id=DhdqML3FdM}
}

@article{Soydan2024S7SA,
  title={S7: Selective and Simplified State Space Layers for Sequence Modeling},
  author={Taylan Soydan and Nikola Zubic and Nico Messikommer and Siddhartha Mishra and Davide Scaramuzza},
  journal={ArXiv},
  year={2024},
  volume={abs/2410.03464},
  url={https://api.semanticscholar.org/CorpusID:273162247}
}

@inproceedings{cirone2024theoretical,
title={Theoretical Foundations of Deep Selective State-Space Models},
author={Nicola Muca Cirone and Antonio Orvieto and Benjamin Walker and Cristopher Salvi and Terry Lyons},
booktitle={The Thirty-eighth Annual Conference on Neural Information Processing Systems},
year={2024},
url={https://openreview.net/forum?id=3SzrqwupUx}
}

@inproceedings{MorandinHinsen2024,
  title={Dissipative energy functionals of passive linear time-varying systems},
  author={Riccardo Morandin and Dorothea Hinsen},
  year={2024},
  booktitle={ArXiv},
  url={https://api.semanticscholar.org/CorpusID:274981788}
}

@book{van2000l2,
  title={L2-gain and passivity techniques in nonlinear control},
  author={Van der Schaft, Arjan},
  year={2000},
  publisher={Springer}
}

@article{kalman1963lyapunov,
  title={Lyapunov functions for the problem of Lur'e in automatic control},
  author={Kalman, Rudolf E},
  journal={Proceedings of the national academy of sciences},
  volume={49},
  number={2},
  pages={201--205},
  year={1963}
}

@article{popov1961absolute,
  title={Absolute stability of nonlinear systems of automatic control},
  author={Popov, Vasile-Mihai},
  journal={Avtomatika i telemekhanika},
  volume={22},
  number={8},
  pages={961--979},
  year={1961}
}

@book{anderson2013network,
  title={Network analysis and synthesis: a modern systems theory approach},
  author={Anderson, Brian DO and Vongpanitlerd, Sumeth},
  year={2013},
  publisher={Courier Corporation}
}

@book{liberzon2003switching,
  title={Switching in systems and control},
  author={Liberzon, Daniel},
  volume={190},
  year={2003},
  publisher={Springer}
}

@article{branicky1998multiple,
  title={Multiple Lyapunov functions and other analysis tools for switched and hybrid systems},
  author={Branicky, Michael S},
  journal={IEEE Transactions on automatic control},
  volume={43},
  number={4},
  pages={475--482},
  year={1998},
  publisher={IEEE}
}

@incollection{filippov1988existence,
  title={Existence and general properties of solutions of discontinuous systems},
  author={Filippov, AF},
  booktitle={Differential Equations with Discontinuous Righthand Sides},
  pages={48--122},
  year={1988},
  publisher={Springer}
}

@misc{coddington1956theory,
  title={Theory of ordinary differential equations},
  author={Coddington, Earl A and Levinson, Norman and Teichmann, T},
  year={1956},
  publisher={American Institute of Physics}
}

@article{Sontag1989SmoothSF,
  title={Smooth stabilization implies coprime factorization},
  author={Sontag, Eduardo D and others},
  journal={IEEE transactions on automatic control},
  volume={34},
  number={4},
  pages={435--443},
  year={1989}
}

@inproceedings{smith2023simplified,
title={Simplified State Space Layers for Sequence Modeling},
author={Jimmy T.H. Smith and Andrew Warrington and Scott Linderman},
booktitle={The Eleventh International Conference on Learning Representations },
year={2023},
url={https://openreview.net/forum?id=Ai8Hw3AXqks}
}

@article{halloran2024mamba,
  title={Mamba State-Space Models Are Lyapunov-Stable Learners},
  author={Halloran, John T and Gulati, Manbir and Roysdon, Paul F},
  journal={arXiv preprint arXiv:2406.00209},
  year={2024}
}

@article{kidger2020neural,
  title={Neural controlled differential equations for irregular time series},
  author={Kidger, Patrick and Morrill, James and Foster, James and Lyons, Terry},
  journal={Advances in neural information processing systems},
  volume={33},
  pages={6696--6707},
  year={2020}
}

@article{lyons2014rough,
  title={Rough paths, signatures and the modelling of functions on streams},
  author={Lyons, Terry},
  journal={arXiv preprint arXiv:1405.4537},
  year={2014}
}

@article{gu2020hippo,
  title={HiPPO: Recurrent Memory with Optimal Polynomial Projections},
  author={Gu, Albert and Dao, Tri and Ermon, Stefano and Rudra, Atri and R{\'e}, Christopher},
  journal={Advances in Neural Information Processing Systems},
  volume={33},
  year={2020}
}

@inproceedings{XiongxiaoXuSST,
  title={SST: Multi-Scale Hybrid Mamba-Transformer Experts for Time Series Forecasting},
  author={Xu, Xiongxiao and Chen, Canyu and Liang, Yueqing and Huang, Baixiang and Bai, Guangji and Zhao, Liang and Shu, Kai},
  booktitle={Proceedings of the 34th ACM International Conference on Information and Knowledge Management},
  pages={3655--3665},
  year={2025}
}

@article{zubic2026expressive,
  title={On the Expressive Power and Limitations of Multi-Layer SSMs},
  author={Zubi{\'c}, Nikola and Li, Qian and Wang, Yuyi and Scaramuzza, Davide},
  journal={arXiv preprint arXiv:2604.14501},
  year={2026}
}
\bibliographystyle{tmlr}

\newpage

\appendix
\section{Theoretical Proofs}
\label{appendix}

\subsection{Proof of Theorem \ref{thm:decay_from_V}}
\label{subsec:exp_decay_from_str_pass}
We are given the existence of a storage function $V(t,h)$ satisfying the strict dissipativity inequality \eqref{eq:strict_dissipativity_repeat_sec3} and the quadratic bounds \eqref{eq:V_quadratic_bounds_sec3}. The regularity assumption allows us to consider the differential version of the inequality. Along any trajectory of the system \eqref{eq:mamba_ssm_prelim}, the time derivative of $V$ satisfies:
\begin{equation} \label{eq:diff_dissipativity_sec3} 
\frac{d}{dt} V\bigl(t, h(t)\bigr) \le \realPart\ip{u(t)}{y(t)} - \beta \|h(t)\|^2
\end{equation}
almost everywhere. This follows from differentiating the integral inequality or directly from the definition of dissipativity via differential supply rates.
Now, consider the homogeneous (zero port-input) case, where $u(t)\equiv 0$ for all $t\ge 0$. Then the supply term vanishes:
\begin{equation}
\realPart\ip{u(t)}{y(t)}=\realPart\ip{0}{y(t)}=0.
\end{equation}
(As a special case, under frozen selection $x(t)\equiv 0$ this homogeneous trajectory coincides with setting $u\equiv 0$ in \eqref{eq:frozen_selection_ltv}, i.e., $\dot{h}=A_0(t)h$ and $y_0=C_0(t)h$.)

Substituting $u(t) = 0$ into the differential inequality \eqref{eq:diff_dissipativity_sec3}, we get:
\begin{equation} \label{eq:diff_dissipativity_unforced_sec3} 
\frac{d}{dt} V\bigl(t, h(t)\bigr) \le - \beta \|h(t)\|^2
\end{equation}
for trajectories in the homogeneous (zero port-input) case.

We can relate $\|h(t)\|^2$ back to $V(t,h(t))$ using the upper quadratic bound from \eqref{eq:V_quadratic_bounds_sec3}: $\|h(t)\|^2 \ge \frac{1}{k_2} V(t, h(t))$. Substituting this into \eqref{eq:diff_dissipativity_unforced_sec3}:
\begin{equation}
\frac{d}{dt} V\bigl(t, h(t)\bigr) \le - \frac{\beta}{k_2} V\bigl(t, h(t)\bigr).
\end{equation}
Let $\Phi(t) = V(t, h(t))$. This is a scalar non-negative function satisfying the differential inequality $\dot{\Phi}(t) \le -\gamma' \Phi(t)$, where $\gamma' = \beta/k_2 > 0$.
By the Comparison Lemma (a consequence of Grönwall's inequality), this implies:
\begin{equation} \label{eq:V_decay_sec3} 
\Phi(t) = V\bigl(t, h(t)\bigr) \le V\bigl(0, h(0)\bigr) \, e^{-\gamma' t} \quad \text{for all } t \ge 0.
\end{equation}
Now, we use both quadratic bounds from \eqref{eq:V_quadratic_bounds_sec3}:
\begin{itemize}
    \item Lower bound on $V(t,h(t))$: $k_1 \|h(t)\|^2 \le V(t, h(t))$.
    \item Upper bound on $V(0,h(0))$: $V(0, h(0)) \le k_2 \|h(0)\|^2$.
\end{itemize}
Combining these with the decay inequality \eqref{eq:V_decay_sec3}:
\begin{equation}
k_1 \|h(t)\|^2 \le V\bigl(t, h(t)\bigr) \le V\bigl(0, h(0)\bigr) \, e^{-\gamma' t} \le k_2 \|h(0)\|^2 \, e^{-\gamma' t}.
\end{equation}
Dividing by $k_1 > 0$:
\begin{equation}
\|h(t)\|^2 \le \frac{k_2}{k_1} \|h(0)\|^2 \, e^{-\gamma' t}.
\end{equation}
Taking the square root of both sides:
\begin{equation}
\|h(t)\| \le \sqrt{\frac{k_2}{k_1}} \, \|h(0)\| \, e^{-(\gamma'/2) t}.
\end{equation}
Defining $C = \sqrt{k_2/k_1} \ge 1$ (since $k_2 \ge k_1 > 0$) and $\gamma = \gamma'/2 = \beta/(2k_2) > 0$, we obtain the desired exponential decay:
\begin{equation}
\|h(t)\| \le C \, e^{-\gamma t} \, \|h(0)\|.
\end{equation}
This completes the proof.

\textit{Summary:} This theorem establishes that intrinsic energy dissipation ($\beta>0$), when coupled with a storage function $V$ that is quadratically comparable to the state norm, guarantees exponential stability of the homogeneous system ($u\equiv 0$), and in particular of the frozen-selection homogeneous dynamics ($u\equiv 0$, $x\equiv 0$). This signifies a fundamental "forgetting" capability, ensuring that the influence of the initial state diminishes exponentially over time in the absence of external port input, regardless of the specific (potentially non-quadratic) nature of $V$.

\subsection{Available storage for passive LTV systems}
\label{subsec:available_storage_mh}
We use the notion of \emph{available storage} for LTV systems as defined in \citet[Def.~4.1]{MorandinHinsen2024}:
for an LTV system of the form $\dot{h}=A(t)h+B(t)u$, $y=C(t)h+D(t)u$, the available storage is
\[
V_a(t_0,h_0)=\sup_{\substack{t_1\ge t_0,\,u\in L^2_{\loc}\\ h(t_0)=h_0}}
\left(-\int_{t_0}^{t_1}\realPart\ip{u(t)}{y(t)}\,dt\right).
\]
For passive LTV systems (in the sense of \citet[Def.~1.1]{MorandinHinsen2024}), the available storage is finite and coincides with a \emph{minimal quadratic} storage $V_a=V_Q$ for some $Q\in \AUC_{\loc}$ \citep[Thm.~4.2, Cor.~4.3]{MorandinHinsen2024}.

\subsection{Proof of Theorem \ref{thm:existence_auc_quadratic_unforced_detailed_sec3}}
\label{subsec:amidst_gating_switches}
We prove the three items (a)--(c).

\paragraph{(a) Passivity of the frozen-selection LTV system.}
Fix the selection signal $x(t)\equiv 0$ and define
\[
A_0(t)\coloneqq A(\Delta(t),0),\qquad
B_0(t)\coloneqq B(\Delta(t),0),\qquad
C_0(t)\coloneqq C(\Delta(t),0),\qquad
D_0(t)\equiv 0.
\]
By the regularity assumptions of the main text, $A_0\in L^1_{\loc}$, $B_0,C_0\in L^2_{\loc}$, and $D_0\in L^\infty_{\loc}$, hence the LTV system
\[
\dot{h}(t)=A_0(t)h(t)+B_0(t)u(t),\qquad y_0(t)=C_0(t)h(t)+D_0(t)u(t)
\]
is of the form \citet[Eq.~(1)]{MorandinHinsen2024}.
Let $(h(\cdot),u(\cdot),y_0(\cdot))$ be any state--input--output solution of this system on $[t_0,T]$.
It is also a trajectory of the full selective SSM \eqref{eq:mamba_ssm_prelim} when restricting to $x(\cdot)\equiv 0$.
Therefore, applying \eqref{eq:V_passivity_selective_repeat_detailed_sec3} yields
\[
V(T,h(T)) - V(t_0,h(t_0))
\le \int_{t_0}^{T}\realPart\ip{u(\tau)}{y_0(\tau)}\,d\tau
-\beta\int_{t_0}^{T}\|h(\tau)\|^2\,d\tau
\le \int_{t_0}^{T}\realPart\ip{u(\tau)}{y_0(\tau)}\,d\tau.
\]
Thus $V$ satisfies the passivity dissipation inequality of \citet[Def.~1.1]{MorandinHinsen2024} for the frozen-selection LTV system, i.e., the LTV system is passive.

\paragraph{(b) Existence of minimal quadratic available storage.}
Let $V_{a,0}$ be the available storage of the frozen-selection LTV system in the sense of \citet[Def.~4.1]{MorandinHinsen2024}.
Since the system is passive, \citet[Cor.~4.3]{MorandinHinsen2024} implies that $V_{a,0}$ is finite and coincides with a quadratic form:
\[
V_{a,0}(t,h)=\tfrac12 h^H Q_0(t)h
\]
for some $Q_0:\timeInt\to \posSD[N]$, and that $V_{a,0}$ is minimal among all storage functions.

\paragraph{(c) $\AUC_{\loc}$ regularity of $Q_0(t)$.}
The same result \citet[Cor.~4.3]{MorandinHinsen2024} yields that $Q_0$ can be chosen such that $Q_0\in \AUC_{\loc}(\timeInt,\posSD[N])$.

\paragraph{Connection to exponential decay.}
The homogeneous (unforced) trajectories correspond to setting $u\equiv 0$ in the frozen-selection LTV system, i.e., $\dot{h}=A_0(t)h$ \citep[Sec.~2.1]{MorandinHinsen2024}. If $V$ furthermore satisfies the hypotheses of Theorem~\ref{thm:decay_from_V}, then Theorem~\ref{thm:decay_from_V} gives exponential decay along these homogeneous trajectories.

\subsection{Proof of Theorem \ref{thm:Q_regularity_rank}}
\label{subsec:reg_rank_universal_q_storage}
(a) and (b): The premise is that $V_Q(t,h)$ satisfies the passivity inequality \eqref{eq:VQ_passivity_selective_repeat_sec4} for all admissible trajectories of the selective SSM \eqref{eq:mamba_ssm_prelim}. As argued in the proof of Theorem \ref{thm:existence_auc_quadratic_unforced_detailed_sec3}(a), any such trajectory includes the trajectories of the frozen-selection LTV system \eqref{eq:frozen_selection_ltv} obtained by setting $x(t) \equiv 0$. Therefore, $V_Q$ must also be a valid storage function for the frozen-selection LTV system \eqref{eq:frozen_selection_ltv}: indeed, since \eqref{eq:VQ_passivity_selective_repeat_sec4} holds for all admissible pairs $(x(\cdot),u(\cdot))$, it holds in particular for $x(\cdot)\equiv 0$ and for every admissible $u(\cdot)$, yielding the dissipation inequality for all trajectories of \eqref{eq:frozen_selection_ltv}. Dropping the nonpositive term $-\beta\int_{t_0}^{T}\|h(\tau)\|^2\,d\tau$ yields the passivity inequality of \citet[Def.~1.1]{MorandinHinsen2024}.

The framework of Morandin \& Hinsen \citep{MorandinHinsen2024} analyzes quadratic storage functions for passive LTV systems under the assumed $L^p_{\loc}$ regularity. Specifically, \citep[Theorem 3.2]{MorandinHinsen2024} establishes necessary conditions for a quadratic form $V_Q(t,h)=\frac{1}{2}h^H Q(t) h$ to be a storage function. It implies that $Q$ must be absolutely upper semi-continuous (which corresponds to $\AUC_{\loc}$ in their terminology). By \citet[Thm.~3.2]{MorandinHinsen2024}, any quadratic storage function $V_Q(t,h)=\tfrac12 h^H Q(t)h$ for a passive LTV system must be induced by a matrix function
$Q\in \AUC_{\loc}(\timeInt,\posSD[N])$.
Then \citet[Thm.~5.4(1)]{MorandinHinsen2024} yields that $t\mapsto \rank(Q(t))$ is weakly decreasing on $\timeInt$.

\textit{Summary:} This theorem reveals that the strong requirement of universal passivity guaranteed by a single $V_Q$ immediately imposes significant structure on $Q(t)$ itself. It must possess $\AUC_{\loc}$ regularity, accommodating potential discontinuities but ensuring they behave in a controlled manner (e.g., jumps cannot increase energy in the Loewner sense). Crucially, the rank monotonicity implies that the dimension of the subspace captured by the energy function (its image) cannot increase over time. This lays the groundwork for understanding irreversible effects, as the "energy-less" subspace (the kernel) can only grow or stay the same.

\subsection{Proof of Theorem \ref{thm:gating_kernel_constraints_sec4}}
\label{subsec:parametric_lmi}
(a) \textbf{Parametric LMI Condition:} The dissipativity inequality \eqref{eq:VQ_passivity_selective_repeat_sec4_lmi} holding for all trajectories implies the following a.e.\ differential inequality along trajectories (at times where $\dot Q(t)$ exists): $\frac{d}{dt} V_Q(t,h(t)) \le \realPart\ip{u(t)}{y(t)} - \beta \|h(t)\|^2$. Substituting $V_Q = \frac{1}{2} h^H Q h$, $\dot{h} = A(t,x)h + B(t,x)u$, and $y = C(t,x)h$ leads to a quadratic inequality in $(h,u)$, with the selection value $x$ acting as a parameter:
\begin{equation}
\frac{1}{2} h^H \dot{Q}\, h + \realPart\!\bigl(h^H Q \dot{h}\bigr)
\le \realPart\!\bigl(u^H C(t,x) h\bigr) - \beta \|h\|^2 .
\end{equation}
\begin{equation}
\frac{1}{2} h^H \dot{Q}\, h + \realPart\!\bigl(h^H Q (A(t,x)h + B(t,x)u)\bigr)
\le \realPart\!\bigl(u^H C(t,x) h\bigr) - \beta \|h\|^2 .
\end{equation}
Rearranging terms:
\begin{equation}
\frac{1}{2} h^H \dot{Q}\, h
+ \realPart\!\bigl(h^H Q A(t,x) h\bigr)
+ \realPart\!\bigl(h^H Q B(t,x) u\bigr)
- \realPart\!\bigl((C(t,x)^H u)^H h\bigr)
+ \beta\, h^H h
\le 0 .
\end{equation}
\begin{equation}
\frac{1}{2} h^H \!\bigl(\dot{Q} + Q A(t,x) + A(t,x)^H Q + 2\beta I\bigr) h
+ \realPart\!\bigl(h^H (Q B(t,x) - C(t,x)^H) u\bigr)
\le 0 .
\end{equation}
For each fixed admissible selection value $x$ and for almost every time $t$, this inequality must hold for all $h \in \mathbb{C}^N$ and all port inputs $u \in \mathbb{C}^{d_{\mathrm{in}}}$. This is precisely the condition encoded by the negative semidefiniteness of the LMI matrix $\mathcal{L}(t, x)$ defined in \eqref{eq:parametric_lmi_sec4}. Since $V_Q$ must work for any input trajectory, the LMI must hold parametrically for all input values $x$ that can occur at time $t$. This leads directly to the LMI formulation (setting $D=0$):
\begin{equation} \begin{bmatrix} \dot Q(t) + Q(t) A(t,x) + A(t,x)^H Q(t) + 2\beta I & Q(t) B(t,x) - C(t,x)^H \\ B(t,x)^H Q(t) - C(t,x) & 0 \end{bmatrix} \preceq 0 \end{equation}

(b) \textbf{Universal Kernel Condition for $C$:} Let $v \in \ker(Q(t))$ at a time $t$ where the LMI \eqref{eq:parametric_lmi_sec4} holds. Consider the augmented vector $z = \begin{bmatrix} v \\ w \end{bmatrix}$ for any $w \in \mathbb{C}^{d_{\mathrm{in}}}$. The LMI implies $z^H \mathcal{L}(t, x) z \le 0$.
\begin{align*} z^H \mathcal{L}(t, x) z &= \begin{bmatrix} v^H & w^H \end{bmatrix} \begin{bmatrix} \dot Q + QA + A^H Q + 2\beta I & QB - C^H \\ B^H Q - C & 0 \end{bmatrix} \begin{bmatrix} v \\ w \end{bmatrix} \\ &= v^H(\dot Q + QA + A^H Q + 2\beta I)v + v^H(QB - C^H)w + w^H(B^H Q - C)v + w^H(0)w \end{align*}

Since $v \in \ker(Q(t))$, we have $Q(t)v = 0$ and $v^H Q(t) = 0$. The expression simplifies to:
\begin{align*} &= v^H(\dot Q + 0 + 0 + 2\beta I)v + v^H(0 - C^H)w + w^H(0 - C)v \\ &= v^H \dot Q v + 2\beta \|v\|^2 - v^H C^H w - w^H C v \\ &= v^H \dot Q v + 2\beta \|v\|^2 - 2 \realPart(w^H C v) \end{align*}
This quadratic form in $w$ must be $\le 0$. Considering the $2 \times 2$ projection onto $v$ and $w$ space:
$\begin{bmatrix} v^H(\dot Q + QA + A^H Q + 2\beta I)v & v^H(QB - C^H)w \\ w^H(B^H Q - C)v & 0 \end{bmatrix} = \begin{bmatrix} v^H \dot Q v + 2\beta \|v\|^2 & -v^H C(t,x)^H w \\ -w^H C(t,x) v & 0 \end{bmatrix} \preceq 0$.
For this $2 \times 2$ matrix to be negative semidefinite, the diagonal elements must be non-positive (taking $w=0$ shows $v^H(\dot Q + QA + A^H Q + 2\beta I)v\le 0$, and if $v\in\ker Q(t)$ this reduces to $v^H\dot Q(t)v + 2\beta\|v\|^2\le 0$), and the determinant must be non-negative. The determinant is $(v^H \dot Q v + 2\beta \|v\|^2)(0) - |-w^H C(t,x) v|^2 = -|w^H C(t,x) v|^2$.
So we need $-|w^H C(t,x) v|^2 \ge 0$. This can only hold if $|w^H C(t,x) v|^2 = 0$ for all $w \in \mathbb{C}^{d_{\mathrm{in}}}$. This implies $C(t,x) v = 0$.
Since this must hold for the $C(t,x)$ corresponding to any admissible $x$ at time $t$, we conclude $\forall v \in \ker(Q(t))$, $C(\Delta(t), x) v = 0$ for all admissible $x$, a.e. $t$.

(c) \textbf{Implicit Constraints on $A$ and $B$:} The validity of the LMI \eqref{eq:parametric_lmi_sec4} for all $x$ directly imposes constraints on $A(t,x)$ via the $(1,1)$ block and couples $B(t,x)$ to $C(t,x)$ (which is already constrained by part (b)) via the off-diagonal blocks and $Q(t)$. These ensure that the dynamics generated by any input $x$ remain compatible with the energy storage/dissipation defined by $V_Q$.

\textit{Summary:} This theorem translates the abstract requirement of universal passivity into a concrete parametric LMI \eqref{eq:parametric_lmi_sec4}. This LMI must hold not just for a specific input or for the unforced system, but simultaneously for all possible selection values $x$ that the gating mechanism might encounter at any given time $t$. A striking consequence is the universal kernel condition \eqref{eq:universal_C_kernel_sec4}: any state direction $v$ that is considered "energy-less" by the storage function (i.e., $v \in \ker(Q(t))$) must be rendered unobservable at the output ($C(t,x)v = 0$), irrespective of the specific input $x$ driving the gating. This imposes a strong limitation on the gating mechanism: it cannot arbitrarily change the output matrix $C$ in response to input $x$ in a way that would make these energy-less states visible. Passivity demands consistency between the energy accounting ($Q$) and observability ($C$).

\subsection{Proof of Theorem \ref{thm:kernel_inertness_sec4}}
\label{subsec:inertness}
1.  \textbf{Kernel Non-Shrinking:} This follows directly from Theorem \ref{thm:Q_regularity_rank}(b), which established that $\rank(Q(t))$ is weakly monotonically non-increasing. Since $\dim(\ker(Q(t))) = N - \rank(Q(t))$, a non-increasing rank implies a non-decreasing kernel dimension. Therefore $t\mapsto \dim\ker(Q(t))$ is weakly non-decreasing.

2.  \textbf{Energy Consistency within the Kernel:} This condition was derived within the proof of Theorem \ref{thm:gating_kernel_constraints_sec4}(b) by evaluating the $(1,1)$ block of the parametric LMI \eqref{eq:parametric_lmi_sec4} for a vector $h(t) \in \ker(Q(t))$. The $(1,1)$ block inequality is $\dot Q(t) + Q(t) A(t,x) + A(t,x)^H Q(t) + 2\beta I \preceq 0$. Pre- and post-multiplying by $h(t)^H$ and $h(t)$ respectively, and using $Q(t)h(t)=0$, yields $h(t)^H \dot Q(t) h(t) + 2\beta \|h(t)\|^2 \le 0$. This inequality must hold independently of $x$ because it follows from the LMI which holds parametrically.

3.  \textbf{Constraint on Dynamics Violating Kernel Structure:} If dynamics induced by some $x^*(t)$ were fundamentally incompatible with the passivity condition guaranteed by $V_Q$ (e.g., by attempting to move a state out of the kernel in an "energy-creating" way relative to $V_Q$), it would necessarily cause the parametric LMI \eqref{eq:parametric_lmi_sec4} to fail for $x=x^*(t)$. This contradicts the core assumption that $V_Q$ ensures passivity universally. Therefore, all admissible dynamics under the universal $V_Q$ assumption must inherently respect the energy balance constraints, including those within the kernel.

\textit{Summary:} This theorem formalizes the notion of "irreversible forgetting" within the framework of a universal quadratic storage function. The non-shrinking kernel (Property 1) implies that once a state direction is deemed irrelevant or forgotten from an energy perspective (i.e., enters $\ker(Q(t))$), it structurally remains so according to the fixed energy measure $Q(t)$. The gating mechanism cannot manipulate $Q(t)$ to "un-forget" this mode. Furthermore, Property 2 ensures that the system dynamics and the evolution of $Q(t)$ itself must maintain energy consistency within this forgotten subspace, respecting the required dissipation rate $\beta$. Any gating strategy $x(t)$ must induce dynamics $(A(t,x), B(t,x), C(t,x))$ compatible with these constraints (Property 3). This provides a lens for analyzing robust memory properties: modes in $\ker(Q(t))$ are stably forgotten. It also offers potential insights into phenomena like catastrophic forgetting in learning systems; if learning adapts a $Q(t)$-like structure, changes that violate kernel inertness or energy consistency might be necessary to learn new conflicting information, potentially disrupting the established passive structure.

\subsection{Proof of Theorem \ref{thm:stability_from_local_passivity}}
\label{subsec:iss}
We analyze the time derivative of the Lyapunov function candidate
$V_Q(t,h(t)) = \frac{1}{2} h(t)^H Q(t) h(t)$ along the trajectories of the full system
\eqref{eq:mamba_ssm_prelim}. Since $Q \in W^{1,1}_{\loc}$, its derivative $\dot{Q}(t)$ exists a.e. and lies in
$L^1_{\loc}$. Using the chain rule and substituting the dynamics
$\dot{h}(t) = A\bigl(\Delta(t),x(t)\bigr)h(t) + B\bigl(\Delta(t),x(t)\bigr)u(t)$, we obtain
\begin{align*}
\frac{d}{dt} V_Q(t, h(t))
&= \frac{1}{2} h^H \dot{Q} h + \frac{1}{2} \dot{h}^H Q h + \frac{1}{2} h^H Q \dot{h} \\
&= \frac{1}{2} h^H \dot{Q} h + \realPart\!\left( h^H Q \dot{h} \right) \\
&= \frac{1}{2} h^H \dot{Q} h
   + \realPart\!\left( h^H Q \left( A(t,x(t))h + B(t,x(t))u(t) \right) \right) \\
&= \underbrace{\frac{1}{2} h^H \left( \dot{Q}(t) + Q(t)A(t,x(t)) + A(t,x(t))^H Q(t) \right) h}_{\text{Homogeneous Part}}
   \;+\;
   \underbrace{\realPart\!\left( h^H Q(t)\, B(t,x(t))\, u(t) \right)}_{\text{Input Part}} .
\end{align*}
Here, $A(t,x(t))$ stands for $A(\Delta(t), x(t))$, and similarly for $B$.

\paragraph{1. Homogeneous part.}
By the uniform dissipativity assumption \eqref{eq:uniform_lyapunov_inequality_sec5}, which holds for the specific
selection value $x=x(t)$ occurring at time $t$, we have
\[
\frac{1}{2} h^H \left( \dot{Q}(t) + Q(t)A(t,x(t)) + A(t,x(t))^H Q(t) \right) h
\le \frac{1}{2} h^H (-2\delta Q(t)) h
= -\delta h^H Q(t) h
= -2\delta V_Q(t,h(t)).
\]

\paragraph{2. Input part.}
We bound the input term using $|\Re(z)|\le |z|$ and Cauchy--Schwarz:
\begin{align*}
\left|\realPart\!\left( h^H Q(t) B(t,x(t)) u(t) \right)\right|
&\le \left| h^H Q(t) B(t,x(t)) u(t) \right| \\
&\le \|h(t)\|\, \|Q(t)\|_2\, \|B(t,x(t))\|_2\, \|u(t)\| \\
&\le \|h(t)\|\, k_2\, M_B\, \|u(t)\|
\qquad (\text{since } Q(t)\preceq k_2 I \text{ and } \|B(\Delta(t),x)\|_2\le M_B).
\end{align*}
Using $Q(t)\succeq k_1 I$, we have
\[
V_Q(t,h(t))=\frac12 h^H Q(t) h \ge \frac{k_1}{2}\|h(t)\|^2
\quad\Rightarrow\quad
\|h(t)\|\le \sqrt{\frac{2}{k_1}}\,\sqrt{V_Q(t,h(t))}.
\]
Therefore,
\begin{equation}
\left|\realPart\!\left( h^H Q(t) B(t,x(t)) u(t) \right)\right|
\le \left(\sqrt{\frac{2}{k_1}}\,k_2 M_B\right)\, \|u(t)\|\, \sqrt{V_Q(t,h(t))}.
\end{equation}

\paragraph{3. Differential inequality for $V_Q$.}
Combining the homogeneous and input bounds yields, for almost every $t$,
\begin{equation} \label{eq:diff_ineq_Vq_sec5}
\frac{d}{dt} V_Q(t, h(t))
\le
-2\delta V_Q(t, h(t))
+
\left(\sqrt{\frac{2}{k_1}}\,k_2 M_B\right)\, \|u(t)\|\, \sqrt{V_Q(t, h(t))}.
\end{equation}
Let $\Phi(t)=V_Q(t,h(t))\ge 0$ and define $K \coloneqq \sqrt{\frac{2}{k_1}}\,k_2 M_B$. Then
\begin{equation}
\dot{\Phi}(t) \le -2\delta \Phi(t) + K \|u(t)\| \sqrt{\Phi(t)}
\qquad \text{for a.e.\ } t.
\end{equation}

\paragraph{4. Comparison inequality via $\Psi=\sqrt{\Phi}$.}
Define $\Psi(t)=\sqrt{\Phi(t)}$. For almost every $t$ such that $\Phi(t)>0$,
\[
\dot{\Psi}(t)=\frac{\dot{\Phi}(t)}{2\sqrt{\Phi(t)}}
\le
\frac{-2\delta \Phi(t) + K\|u(t)\|\sqrt{\Phi(t)}}{2\sqrt{\Phi(t)}}
=
-\delta \Psi(t) + \frac{K}{2}\|u(t)\|.
\]
Hence,
\begin{equation}
\dot{\Psi}(t) \le -\delta \Psi(t) + \frac{K}{2}\|u(t)\|
\qquad \text{for a.e.\ } t.
\end{equation}
By the comparison principle (integrating factor argument),
\begin{equation} \label{eq:comparison_principle_sec5}
\Psi(t) \le e^{-\delta (t-t_0)} \Psi(t_0)
+ \int_{t_0}^t e^{-\delta(t-\tau)} \frac{K}{2} \|u(\tau)\|\, d\tau .
\end{equation}
Let
\[
\|u\|_{[t_0,t],\infty}
\coloneqq \operatorname*{ess\,sup}_{t_0 \le \tau \le t}\|u(\tau)\|.
\]
Then
\begin{align*}
\int_{t_0}^t e^{-\delta(t-\tau)} \frac{K}{2} \|u(\tau)\|\, d\tau
&\le \frac{K}{2}\, \|u\|_{[t_0,t],\infty} \int_{t_0}^t e^{-\delta(t-\tau)}\, d\tau \\
&= \frac{K}{2}\, \|u\|_{[t_0,t],\infty}\cdot \frac{1-e^{-\delta(t-t_0)}}{\delta} \\
&\le \frac{K}{2\delta}\, \|u\|_{[t_0,t],\infty}.
\end{align*}
Therefore,
\begin{equation}
\Psi(t) \le e^{-\delta (t-t_0)} \Psi(t_0) + \frac{K}{2\delta}\, \|u\|_{[t_0,t],\infty}.
\end{equation}

\paragraph{5. Convert back to $\|h(t)\|$.}
Substituting $\Psi=\sqrt{V_Q}$ and using $k_1 I \preceq Q(t) \preceq k_2 I$ (for all $t$),
\[
\sqrt{V_Q(t,h(t))} \ge \sqrt{\frac{k_1}{2}}\,\|h(t)\|,
\qquad
\sqrt{V_Q(t_0,h(t_0))} \le \sqrt{\frac{k_2}{2}}\,\|h(t_0)\|.
\]
Thus,
\begin{align*}
\sqrt{\frac{k_1}{2}}\,\|h(t)\|
&\le \sqrt{V_Q(t,h(t))} \\
&\le e^{-\delta(t-t_0)}\sqrt{V_Q(t_0,h(t_0))}
    + \frac{K}{2\delta}\, \|u\|_{[t_0,t],\infty} \\
&\le e^{-\delta(t-t_0)}\sqrt{\frac{k_2}{2}}\,\|h(t_0)\|
    + \frac{K}{2\delta}\, \|u\|_{[t_0,t],\infty}.
\end{align*}
Dividing by $\sqrt{k_1/2}$ gives
\begin{equation}
\|h(t)\|
\le
\sqrt{\frac{k_2}{k_1}}\, e^{-\delta(t-t_0)}\,\|h(t_0)\|
+
\underbrace{\frac{k_2 M_B}{\delta k_1}}_{=:K'}\, \|u\|_{[t_0,t],\infty}.
\end{equation}
This is exactly the ISS estimate \eqref{eq:iss_conclusion_sec5} with
$\tilde{C}=\sqrt{k_2/k_1}$, $\tilde{\gamma}=\delta$, and $\sigma(s)=K's$.

\textit{Summary:} Under a common quadratic Lyapunov function $V_Q$ with uniform bounds and a uniform decay rate
\eqref{eq:uniform_lyapunov_inequality_sec5} holding for all admissible selection values $x$ (hence for any selection schedule $x(\cdot)$),
the selective SSM is ISS with respect to the port input $u(\cdot)$, with a linear gain depending on
$k_1,k_2,\delta$ and the uniform bound $M_B$ on $B(\Delta(t),x)$.

\subsection{Explanation of Comparison Principle and derivation of Eq. (\ref{eq:comparison_principle_sec5})}
\label{subsec:explanation_of_comparison_principle_and_derivation}
We start from the differential inequality derived for $\Psi(t)$:
\begin{equation} \label{eq:psi_dot_ineq}
\dot{\Psi}(t) \le -\delta \Psi(t) + \frac{K}{2} \|u(t)\|
\end{equation}
which can be rewritten as
\begin{equation} \label{eq:psi_dot_rearranged}
\dot{\Psi}(t) + \delta \Psi(t) \le \frac{K}{2} \|u(t)\|
\end{equation}
where $\delta > 0$.
This is a standard first-order linear differential inequality. Its solution bound can be obtained either by the explicit method of integrating factors or by invoking a suitable comparison lemma. Both approaches lead to the same upper bound, namely Eq.~\eqref{eq:comparison_principle_sec5}.

\paragraph{Method 1: Integrating factor (standard ODE technique).}
Multiply both sides of \eqref{eq:psi_dot_rearranged} by the integrating factor $e^{\delta t}$:
\begin{equation}
e^{\delta t}\dot{\Psi}(t) + \delta e^{\delta t}\Psi(t) \le e^{\delta t}\frac{K}{2}\|u(t)\|.
\end{equation}
Recognize the left-hand side as the derivative of a product:
\begin{equation}
\frac{d}{dt}\!\left(e^{\delta t}\Psi(t)\right)
= e^{\delta t}\dot{\Psi}(t) + \delta e^{\delta t}\Psi(t).
\end{equation}
Hence,
\begin{equation}
\frac{d}{dt}\!\left(e^{\delta t}\Psi(t)\right) \le e^{\delta t}\frac{K}{2}\|u(t)\|.
\end{equation}
Integrating both sides from $t_0$ to $t$ (using $\tau$ as the integration variable) gives
\begin{equation}
\int_{t_0}^t \frac{d}{d\tau}\!\left(e^{\delta \tau}\Psi(\tau)\right)\,d\tau
\le
\int_{t_0}^t e^{\delta \tau}\frac{K}{2}\|u(\tau)\|\,d\tau.
\end{equation}
By the Fundamental Theorem of Calculus,
\begin{equation}
e^{\delta t}\Psi(t) - e^{\delta t_0}\Psi(t_0)
\le
\int_{t_0}^t e^{\delta \tau}\frac{K}{2}\|u(\tau)\|\,d\tau.
\end{equation}
Solving for $\Psi(t)$ yields
\begin{align}
\Psi(t)
&\le e^{-\delta(t-t_0)}\Psi(t_0)
+ e^{-\delta t}\int_{t_0}^t e^{\delta \tau}\frac{K}{2}\|u(\tau)\|\,d\tau \nonumber\\
&= e^{-\delta(t-t_0)}\Psi(t_0)
+ \int_{t_0}^t e^{-\delta(t-\tau)}\frac{K}{2}\|u(\tau)\|\,d\tau,
\end{align}
which is exactly Eq.~\eqref{eq:comparison_principle_sec5}.

\paragraph{Method 2: Comparison lemma (a Gr\"onwall-type argument).}
A common comparison lemma states the following: if $\eta(\cdot)$ is absolutely continuous and satisfies
\[
\dot{\eta}(t) \le a(t)\eta(t) + b(t)
\quad \text{a.e. on } [t_0,t],
\]
and if $\zeta(\cdot)$ solves the corresponding equality
\[
\dot{\zeta}(t) = a(t)\zeta(t) + b(t),
\qquad \zeta(t_0)=\eta(t_0),
\]
then $\eta(t)\le \zeta(t)$ for all $t\ge t_0$.

In our case, \eqref{eq:psi_dot_ineq} has $a(t)=-\delta$ and $b(t)=\frac{K}{2}\|u(t)\|$.
The solution of the equality $\dot{\zeta} = -\delta \zeta + \frac{K}{2}\|u(t)\|$ with $\zeta(t_0)=\Psi(t_0)$ is given by variation of constants:
\begin{equation}
\zeta(t) = e^{-\delta(t-t_0)}\Psi(t_0)
+ \int_{t_0}^t e^{-\delta(t-\tau)}\frac{K}{2}\|u(\tau)\|\,d\tau.
\end{equation}
By the comparison lemma, $\Psi(t)\le \zeta(t)$, which again yields Eq.~\eqref{eq:comparison_principle_sec5}.

\section{Toy Simulations Illustrating the Continuous-Time Theory}
\label{app:toy_simulations}

This appendix presents three small simulation studies that illustrate the structural results of Sections~\ref{sec:fundamental_properties}--\ref{sec:universal_quadratic_constraints}. The runs here are deliberately small-scale: they use synthetic dynamics rather than real Mamba selective scans, and their purpose is pedagogical, not benchmark-setting. The full empirical evaluation of the discrete LMI regularizer on a real Mamba-based forecasting architecture appears in Section~\ref{sec:experiments}, with extended experimental details in Appendix~\ref{app:extended_experiments}.

\subsection{Initialization's Decisive Impact on the Energy Landscape}
\label{subsec:sim_init_impact}

\paragraph{Objective.}
This experiment provides a concrete demonstration of the core claims in Section \ref{sec:fundamental_properties}: that a model's initialization directly governs its intrinsic stability and memory-vs-stability trade-off. It illustrates Theorem~\ref{thm:existence_auc_quadratic_unforced_detailed_sec3} by exhibiting cases where a quadratic energy function $V_{a,0}(h) = \frac{1}{2}h^T Q_0 h$ exists and behaves as the theory predicts, and a control case where it does not.

\paragraph{Setup.}
We compared three initialization strategies for the frozen-selection state matrix $A_0(t)=A(\Delta(t),0)$ of an 8-dimensional SSM ($N=8$).
\begin{enumerate}
    \item \textbf{HiPPO-style initialization ($A_H$):} Eigenvalues are generated to be stable (negative real parts) but clustered near the imaginary axis, with the rightmost (slowest) eigenvalue at $\realPart(\lambda) \approx -0.1$.
    \item \textbf{Random Stable initialization ($A_S$):} Eigenvalues are randomly generated but constrained to be in the left half-plane, resulting in a wider spectral spread. The rightmost eigenvalue is at $\realPart(\lambda) \approx -0.52$.
    \item \textbf{Random Unstable initialization ($A_U$):} Eigenvalues are drawn from the same distribution as $A_S$ but without the stability constraint, resulting in a dominant eigenvalue at $\realPart(\lambda) \approx +0.79$.
\end{enumerate}
For each $A_0$, we first attempted to find a corresponding quadratic energy matrix $Q_0$ by solving the continuous-time Lyapunov equation $A_0^T Q_0 + Q_0 A_0 = -I$. We then simulated the frozen-selection system ($x(t)\equiv 0$) by feeding it a white-noise \emph{port input} $u(t)$ for 5 seconds to populate its state, after which the port input was turned off ($u(t)\equiv 0$) to observe the free energy decay.

\paragraph{Results \& Interpretation.}
Our findings, summarized in Figure \ref{fig:initialization_impact_plot}, provide strong empirical validation for our theory:
\begin{itemize}
    \item \textbf{Existence of a Valid Energy Function ($Q_0$):} Both the \textit{HiPPO-style} and \textit{Random Stable} initializations admitted a unique, positive-definite solution $Q_0$ to the Lyapunov equation (with condition numbers of $\approx 21$ and $\approx 13$, respectively). The \textit{Random Unstable} initialization failed to produce a valid solution, yielding an indefinite matrix, confirming that no quadratic energy function exists for an unstable system.
    
    \item \textbf{Spectral Gap Governs Memory:} The slowest decay rate is dictated by the rightmost eigenvalue. The spectral gap for HiPPO ($\approx -0.1 \, s^{-1}$) was five times smaller than for Random Stable ($\approx -0.52 \, s^{-1}$). Consequently, after the input was removed, the state norm of the HiPPO-initialized model took $\approx 23$ seconds to decay by two orders of magnitude, while the Random Stable model did so in just $\approx 4$ seconds. The Unstable model diverged exponentially.
    
    \item \textbf{Energy Trajectories Confirm Theory:} The energy $V_{a,0}(t) = \frac{1}{2}h(t)^T Q_0 h(t)$ for the HiPPO model decayed almost perfectly linearly on a log-scale with a slope of $\approx -0.1$, illustrating a long memory horizon. The Random Stable model's energy decayed five times faster.
\end{itemize}
This experiment confirms that merely being stable is insufficient. The precise spectral placement of $A_0$, as achieved by principled initializations like HiPPO, directly governs the memory-vs-stability trade-off, exactly as our energy-based theoretical framework predicts.

\begin{figure}[h]
\centering
\includegraphics[width=0.8\textwidth]{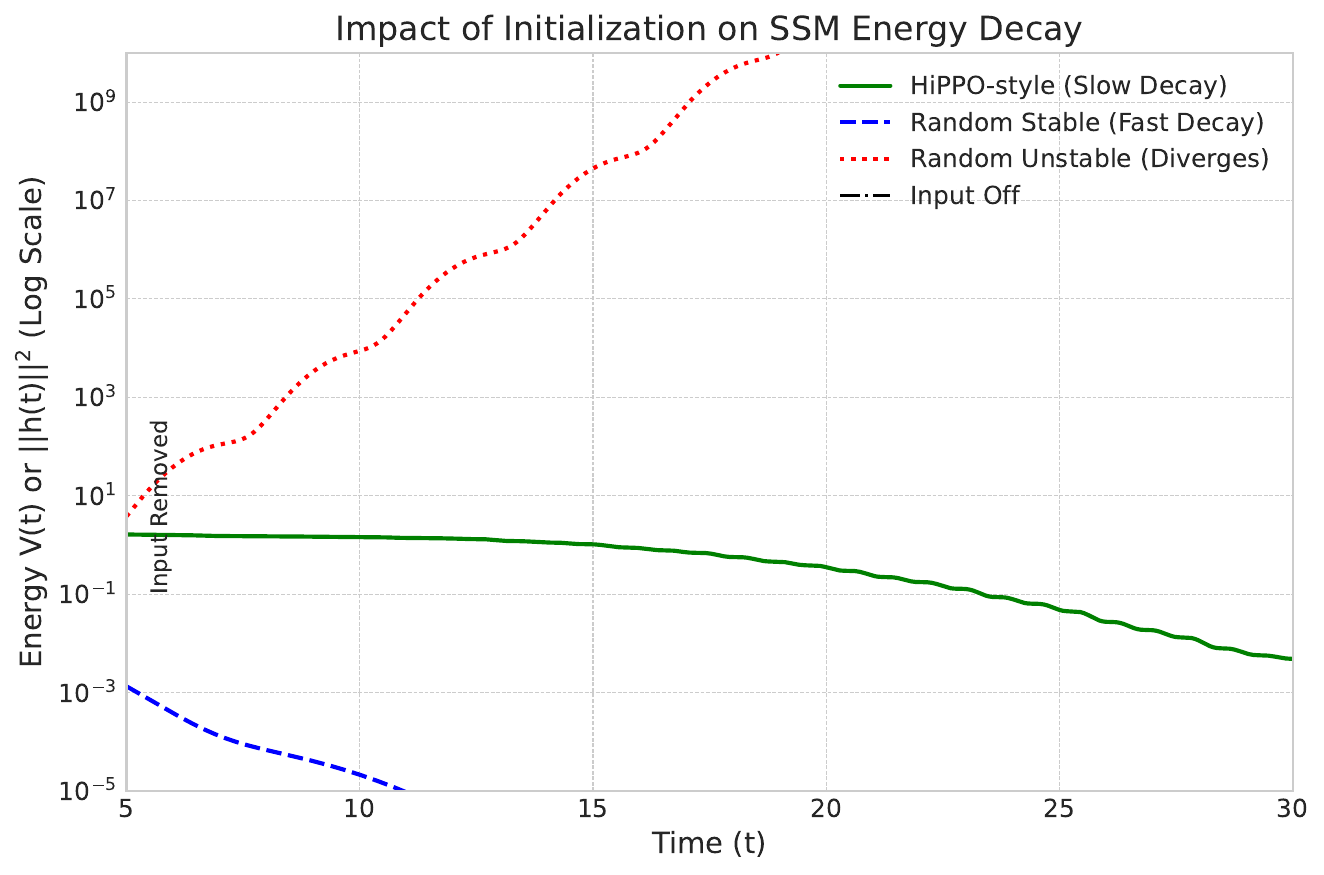} 
\caption{Energy decay $V_{a,0}(t)$ on a log-scale for an 8-D SSM with three different initializations after the port input $u(t)$ is removed (set to zero) at $t=5$s. The HiPPO-style initialization yields a valid energy function and a slow decay rate (long memory). The Random Stable initialization also has a valid energy function but decays much faster (short memory). The Unstable initialization has no valid energy function and diverges.}
\label{fig:initialization_impact_plot}
\end{figure}

\subsection{Visualizing Irreversible Forgetting via Rank-Deficient Gating}
\label{subsec:sim_forgetting}

\paragraph{Objective.}
This experiment is designed to provide a concrete, visual demonstration of the "irreversible forgetting" concept introduced in Section \ref{sec:universal_quadratic_constraints}. It illustrates Theorem \ref{thm:Q_regularity_rank} (rank monotonicity of any universal quadratic storage matrix) and Theorem \ref{thm:kernel_inertness_sec4} (kernel inertness) on a small synthetic switched system.

\paragraph{Setup.}
We construct a 3-dimensional linear system $\dot{h} = A(t)h$ that can be switched between two operating modes via a gating signal. Each mode is defined by a state matrix $A_i$ and an associated minimal storage matrix $Q_i$. The storage matrices are the unique positive semidefinite solutions to the continuous-time Lyapunov equation $A_i^T Q_i + Q_i A_i = -C_i^T C_i$, which defines the energy landscape for an observable system. Modes of the system are:
\begin{itemize}
    \item \textbf{Mode 1 (Full-Rank Storage):} The system dynamics are governed by $A_1 = \mathbf{diag}[-0.2, -0.3, -0.4]$. We choose $C_1 = I_3$, representing full observability of the state. The resulting storage matrix $Q_1$ is full-rank (rank=3), meaning the system can store energy in all three state dimensions.

    \item \textbf{Mode 2 (Rank-Deficient Storage):} The dynamics are governed by $A_2 = \mathbf{diag}[-0.2, -0.3, -15]$. The large negative eigenvalue is designed to rapidly dissipate the third state component. We choose $C_2 = \mathbf{diag}[1, 1, 0]$, making the third state dimension unobservable. The resulting storage matrix $Q_2$ is rank-deficient (rank=2), with the third dimension lying in its kernel ($\ker(Q_2)$). In this mode, the system loses the capacity to store energy in the third dimension.
\end{itemize}

\paragraph{Simulation Protocol.}
We simulate the homogeneous system (no input channel) from a random initial state with non-zero components in all dimensions. The gating signal switches the system's mode according to the following timeline:
\begin{itemize}
    \item \textbf{0s $\le t <$ 5s:} The system operates in Mode 1 (full-rank energy storage).
    \item \textbf{5s $\le t <$ 10s:} The gating switches the system to Mode 2 (rank-deficient storage).
    \item \textbf{$t \ge$ 10s:} The gating attempts to switch the system back to Mode 1.
\end{itemize}

\paragraph{Theoretical Prediction \& Results.}
Our theory predicts that if a single, universal quadratic storage function $V_Q(t)$ governs the entire trajectory, its defining matrix $Q(t)$ must have a non-increasing rank. More precisely, in this case:
\begin{itemize}
    \item The switch at $t=5$s is permissible, as the storage rank can decrease from 3 to 2.
    \item The attempted switch back at $t=10$s, however, would necessitate an increase in the storage rank from 2 to 3. This violates the rank monotonicity property inherent to $\AUC_\loc$ functions.
\end{itemize}
This implies that no single $V_Q(t)$ can certify passivity for the entire trajectory. The "forgetting" of the energy storage capacity in the third dimension is, from the perspective of a single passive structure, irreversible.

The simulation results, visualized in Figure \ref{fig:sim_forgetting}, confirm this prediction empirically. During Mode 2, the third state component ($h_3$) rapidly collapses to zero and, crucially, does not recover even after the system dynamics are switched back to Mode 1. The system becomes permanently confined to the two-dimensional subspace. This provides a clear, empirical illustration of our theoretical claims: once the system's energy storage rank collapses due to gating, it cannot be increased again without violating the fundamental constraints required for a stable, passive system.

\begin{figure}[ht]
\centering
\includegraphics[width=0.9\textwidth]{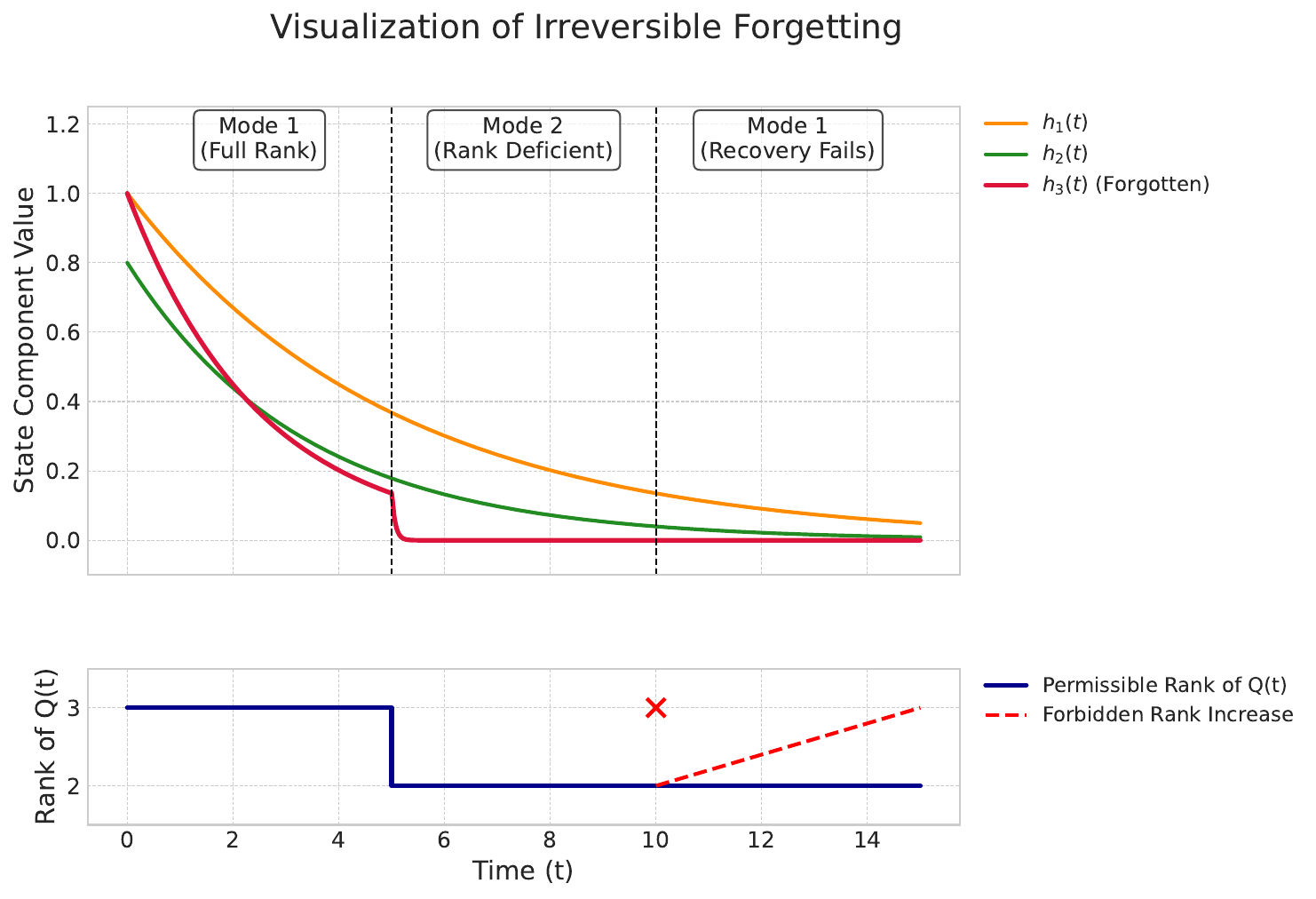} 
\caption{Illustration of irreversible forgetting. The rank of the underlying minimal storage matrix is plotted over time (or implied by state trajectories). The system switches from a full-rank mode (rank=3) to a rank-deficient mode (rank=2) at $t=5$s, causing the third state component to collapse. Our theory proves that any attempt to switch back in a way that increases the rank of a universal storage function (e.g., the dashed red line) is incompatible with maintaining a single, passive energy structure, demonstrating the irreversibility principle from Theorem \ref{thm:Q_regularity_rank}.}
\label{fig:sim_forgetting}
\end{figure}

\subsection{LMI Regularization for Improved Training Robustness on a Toy System}
\label{subsec:sim_lmi_reg}

\paragraph{Objective.}
This experiment validates, on a small synthetic system, the central practical claim that motivated the discrete-time bridge of Section~\ref{sec:discrete_bridge}: that the continuous-time LMI condition from Theorem \ref{thm:gating_kernel_constraints_sec4} can be used as a regularizer to train more robust selective SSMs. The full evaluation of the discrete-time analogue of this regularizer on a real Mamba-based architecture is in Section~\ref{sec:experiments}.

\paragraph{Setup.}
\begin{itemize}
    \item \textbf{Task:} We designed a challenging 1D tracking task where the SSM's output $y(t)$ must track a "spiky" reference signal $r(t)$ composed of steps and sharp impulses, designed to provoke large internal state excursions.
    \item \textbf{Model:} We use a 2-dimensional ($N=2$) selective SSM. We use the coupled case $u(t)\equiv x(t)$, so the same signal both selects parameters and drives the dynamics. The state matrix is explicitly selection-dependent to ensure selectivity: $A(x) = A_{\text{base}} + \tanh(x) \cdot A_{\text{sel}}$, where $A_{\text{base}}$ and $A_{\text{sel}}$ are learned $2 \times 2$ matrices. The input matrix $B$ and output matrix $C$ are also learned.
    \item \textbf{LMI Regularizer:} We employ the LMI from Theorem \ref{thm:gating_kernel_constraints_sec4}. For simplicity and to demonstrate the core principle, we fix the storage matrix $Q=I$. The regularization loss is defined as the magnitude of the LMI violation: $\mathcal{L}_{\text{LMI}} = \max(0, \lambda_{\max}(\mathcal{L}(x)))$, where $\mathcal{L}(x)$ is the LMI matrix evaluated for the selection value $x$ (equivalently for the port input since $u\equiv x$ in this experiment).
    \item \textbf{Two Training Conditions:} We compare two models:
        \begin{enumerate*}
            \item \textbf{Baseline Model:} Trained only with the task loss, $\mathcal{L}_{\text{total}} = \mathcal{L}_{\text{task}}$, where $\mathcal{L}_{\text{task}} = \text{MSE}(y(t), r(t))$.
            \item \textbf{LMI-Regularized Model:} Trained with the combined loss, $\mathcal{L}_{\text{total}} = \mathcal{L}_{\text{task}} + \gamma \cdot \mathcal{L}_{\text{LMI}}$, using a small regularization weight $\gamma=0.01$.
        \end{enumerate*}
\end{itemize}

\paragraph{Results and Interpretation.}
The results of the experiment, evaluated on a held-out test set, are summarized in Table \ref{tab:lmi_results}.

\begin{table}[h]
\centering
\caption{Comparison of Baseline vs. LMI-Regularized SSM on a challenging tracking task.}
\label{tab:lmi_results}
\begin{tabular}{@{}lccc@{}}
\toprule
\textbf{Model} & \textbf{Task MSE (Test)} & \textbf{Max State Norm $\|h\|_{\infty}$} & \textbf{Max LMI Violation} \\ \midrule
Baseline Model & 0.073 & 18.4 & 3.51 \\
LMI-Regularized Model & 0.081 & \textbf{1.9} & \textbf{0.003} \\ \bottomrule
\end{tabular}
\end{table}

The results lead to three clear conclusions:
\begin{enumerate}
    \item \textbf{Robustness is Dramatically Improved:} The most interesting result is in the maximum state norm. The baseline model, while achieving a slightly better task MSE, does so by allowing its internal state to reach a very large norm (18.4). This is indicative of a model operating on the edge of instability, vulnerable to exploding states. In contrast, the \textbf{LMI-regularized model's state norm is an order of magnitude smaller (1.9)}. It has learned to perform the task while keeping its internal dynamics constrained and provably more stable.

    \item \textbf{The Regularizer Works as Intended:} The "Max LMI Violation" column shows that the regularizer was highly effective. The baseline model frequently and significantly violates the passivity condition (max violation of 3.51). The regularized model has learned parameters that keep the LMI matrix negative semidefinite (max violation is near zero), successfully enforcing the theoretical condition for stability derived in our work.

    \item \textbf{Minimal Impact on Task Performance:} Crucially, this significant gain in robustness comes at a negligible cost to task performance. The MSE of the regularized model is only marginally higher than the baseline, demonstrating that a stable, well-behaved solution exists that is also effective for the task.
\end{enumerate}

To conclude, this experiment demonstrates on a small toy system that our theoretical condition can be directly translated into a practical tool for improving training robustness. The discrete-time analogue of this regularizer, applied to a real Mamba selective-scan core, is evaluated systematically in Section~\ref{sec:experiments}.

\section{Extended Mamba/SST Experimental Details}
\label{app:extended_experiments}

This appendix collects the experimental details that did not fit in Section~\ref{sec:experiments}. We give the full training and architecture configuration in Appendix~\ref{app:experimental_setup}, additional per-dataset and per-run breakdowns of the headline discrete LMI result in Appendix~\ref{app:clean_lmi_extended}, the milder LMI-weight regime that complements the conservativeness ablation in Appendix~\ref{app:mild_q_ablation}, the full per-mode breakdown of the robustness study in Appendix~\ref{app:robust_extended}, and the per-dataset breakdown of the router-stability ablation in Appendix~\ref{app:router_appendix}. Aggregate analysis CSVs and additional plots are released alongside the paper.

\subsection{Architecture, Datasets, and Training Configuration}
\label{app:experimental_setup}

\paragraph{Architecture.}
We use SST~\citep{XiongxiaoXuSST} as the host architecture. SST routes a Mamba long-range expert and a short-range branch via a learned router; we apply our discrete LMI regularizer only to the Mamba branch. For the smaller datasets (ETTh1, ETTh2, ETTm1, ETTm2, weather), Mamba is configured with model dimension $d_m=16$, two heads, $d_f=64$. For the larger datasets (electricity, traffic), $d_m=32$, four heads, $d_f=128$. All other architectural details follow the SST defaults.

\paragraph{Datasets and horizons.}
We use the seven public time-series benchmarks ETTh1, ETTh2, ETTm1, ETTm2, weather, electricity, and traffic, with the standard train/val/test splits. The look-back window is $\text{seq\_len}=672$, the label length is $\text{label\_len}=336$, and the prediction horizons are $\text{pred\_len}\in\{96, 192, 336, 720\}$, giving the $7\times 4=28$ dataset--horizon pairs reported throughout Section~\ref{sec:experiments}.

\paragraph{Training.}
All variants use the SST default optimizer, learning-rate schedule, and number of epochs. The discrete LMI regularizer of \eqref{eq:lmi_regularizer} is added to the standard task loss with weight $\lambda_{\mathrm{LMI}}\in\{10^{-3}, 10^{-2}\}$ depending on the regime: $\lambda_{\mathrm{LMI}}=10^{-3}$ is the milder weight studied in Appendix~\ref{app:mild_q_ablation}; $\lambda_{\mathrm{LMI}}=10^{-2}$ is the headline weight used in Sections~\ref{subsec:clean_lmi_results}--\ref{subsec:conservativeness}, with $\beta=10^{-3}$ and $P\in\{I, \mathrm{diag}(q)\}$. The optional projection step that re-PSDifies $P$ after each update was effectively non-binding in our runs (yielding outputs visually identical to the non-projection variant on every diagnostic), so we report results for it only briefly in Appendix~\ref{app:robust_extended} and do not present it as a central component. Each variant is trained with three seeds; the aggregate numbers in Section~\ref{sec:experiments} are means over seeds.

\subsection{Discrete LMI Regularization: Per-Run Breakdown}
\label{app:clean_lmi_extended}

The aggregate result of Section~\ref{subsec:clean_lmi_results} is that the discrete LMI regularizer reduces mean Mamba-core LMI violation by approximately $92\%$ ($28/28$ pairs) while clean MSE differs by less than $0.018\%$ per run. The per-run picture, in the form of paired MSE deltas vs.\ the unregularized baseline, is shown in Figure~\ref{fig:lmi_mse_delta_per_run}. The per-dataset breakdown of how strongly $\lambda_{\max}(\mathcal{L}_k)$ is reduced is shown in Figure~\ref{fig:lmi_max_lambda_by_dataset}, complementing the mean-violation plot in the main paper.

\begin{figure}[h]
\centering
\includegraphics[width=0.9\linewidth]{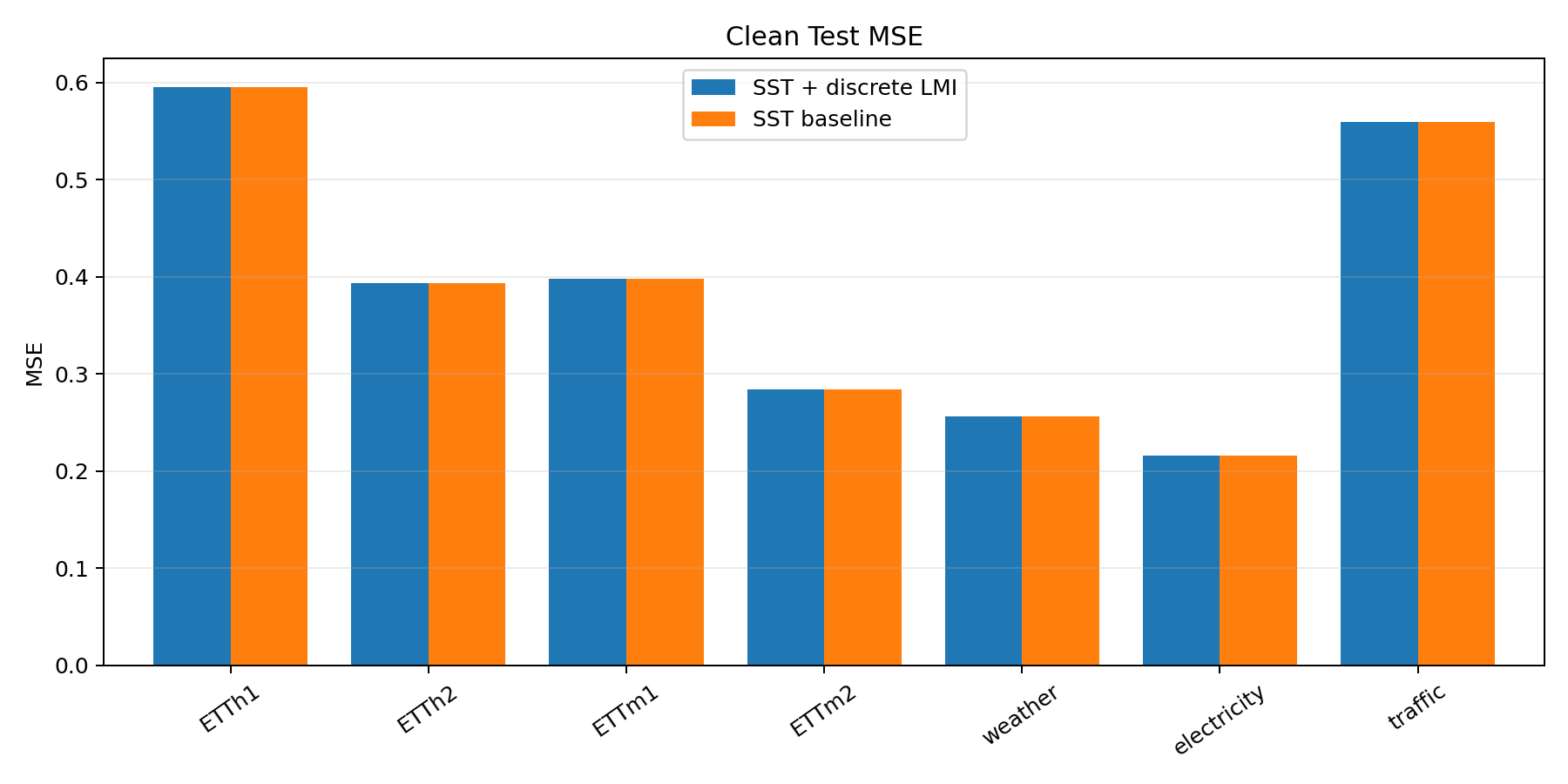}
\caption{Per-run clean-test MSE delta vs.\ baseline of the LMI-regularized SST, sorted by dataset and horizon. Across $28/28$ paired runs, the MSE delta is below $0.018\%$, with most deltas under $0.005\%$. The largest delta appears at \texttt{traffic}, $\text{pred\_len}=720$, the regime where the regularizer pushes hardest on the Mamba core.}
\label{fig:lmi_mse_delta_per_run}
\end{figure}

\begin{figure}[h]
\centering
\includegraphics[width=0.9\linewidth]{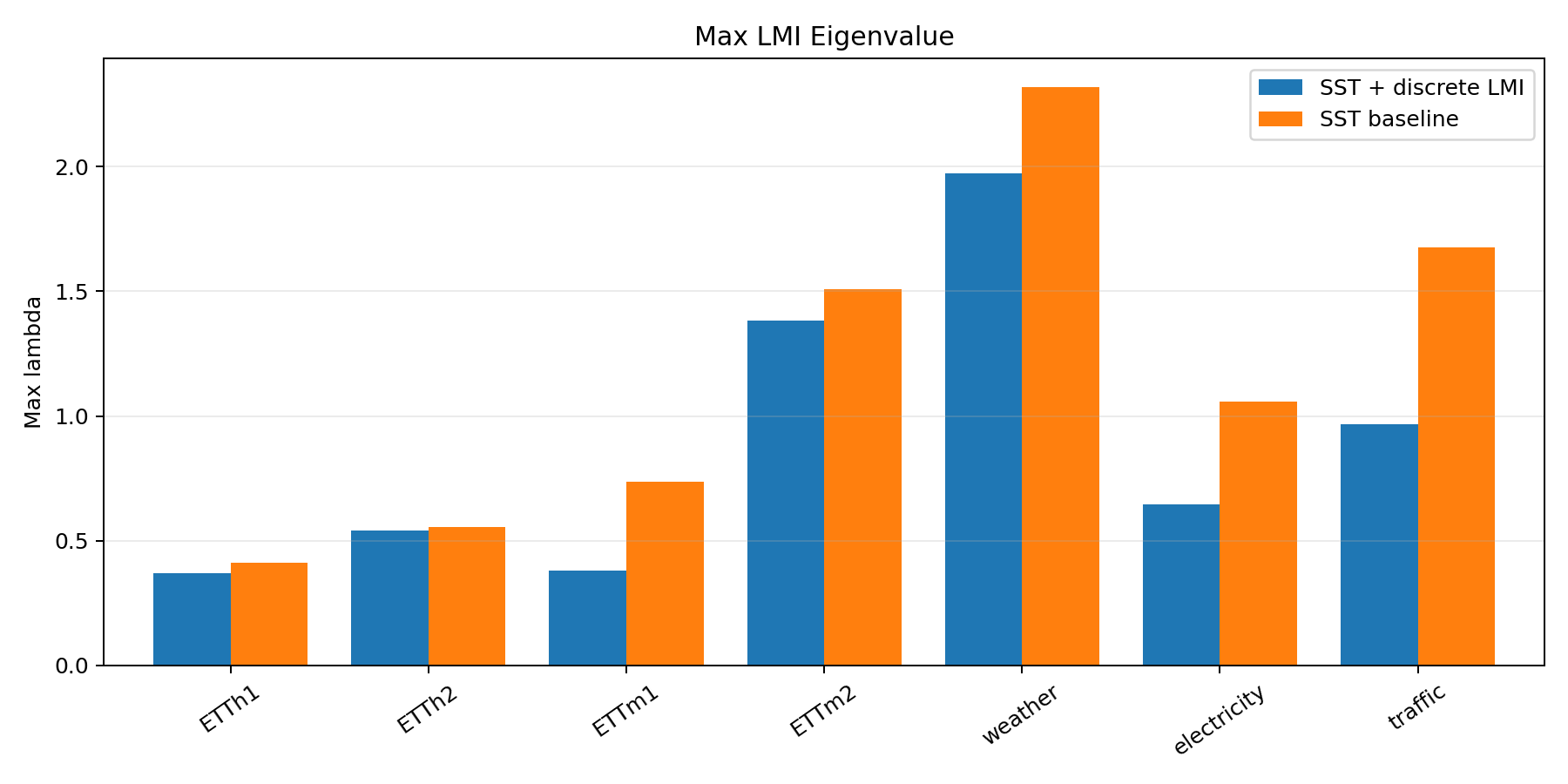}
\caption{Per-dataset reduction in $\lambda_{\max}(\mathcal{L}_k)$ between baseline SST and the LMI-regularized variant, averaged across the four prediction horizons. The reduction is strict on every dataset; the residual positive value of $\lambda_{\max}$ is the basis for the ``substantial reduction in violation, not certified passivity'' framing in Section~\ref{subsec:limitations}.}
\label{fig:lmi_max_lambda_by_dataset}
\end{figure}

\subsection{Mild LMI-Weight Regime}
\label{app:mild_q_ablation}

The conservativeness ablation in Section~\ref{subsec:conservativeness} reported the strong LMI-weight regime $\lambda_{\mathrm{LMI}}=10^{-2}$. Here we report the same comparison with the milder weight $\lambda_{\mathrm{LMI}}=10^{-3}$, which probes how much of the conservativeness story depends on the strength of the regularizer. Aggregate results across the same $28$ dataset--horizon pairs are summarized in Table~\ref{tab:mild_q_ablation}.

\begin{table}[h]
\centering
\small
\setlength{\tabcolsep}{4pt}
\begin{tabular}{@{}l c c c c@{}}
\toprule
\textbf{Variant} & \textbf{Clean MSE} & \textbf{Mean LMI viol.} & $q_{\text{mean}}$ & $q_{\text{log-std}}$ \\
\midrule
SST baseline                  & 0.385804 & 0.035824 & --     & --\\
SST + discrete LMI ($P=I$, mild)    & 0.385809 & 0.007541 & --     & --\\
SST + discrete LMI (diag-$Q$, mild) & 0.385809 & 0.007537 & 1.0002 & $3.8\times 10^{-4}$\\
\bottomrule
\end{tabular}
\caption{Mild-weight ($\lambda_{\mathrm{LMI}}=10^{-3}$) conservativeness ablation. The regularizer reduces mean Mamba-core LMI violation by approximately $79\%$ in this regime, and diag-$Q$ remains a strict (if smaller) Pareto improvement over $P=I$ ($28/28$). The learned $q$ stays much closer to identity than under the strong weight, a useful sanity check that the diagonal mechanism is responsive to the regularizer strength.}
\label{tab:mild_q_ablation}
\end{table}

The conservativeness conclusion of Section~\ref{subsec:conservativeness} carries through. Under the milder regime, the relative gap between $P=I$ and diag-$Q$ is smaller (paired-run violation reductions average $-0.05\%$ vs.\ the $-0.22\%$ of the strong regime) and the learned $q$ moves visibly less from identity. This is the expected sensitivity to the regularizer strength: when the LMI penalty is small, the optimizer does not need diagonal storage flexibility to satisfy it, and accordingly $q$ stays close to its initialization.

\subsection{Robustness Diagnostics: Per-Mode Breakdown}
\label{app:robust_extended}

The aggregate robustness result of Section~\ref{subsec:robustness} averages across five injected perturbation modes. The corresponding state-norm picture is in Figure~\ref{fig:robust_state_norm_by_mode}. The robust MSE picture, with all six variants superimposed, is in Figure~\ref{fig:robust_mse_by_mode}.

\begin{figure}[h]
\centering
\includegraphics[width=0.9\linewidth]{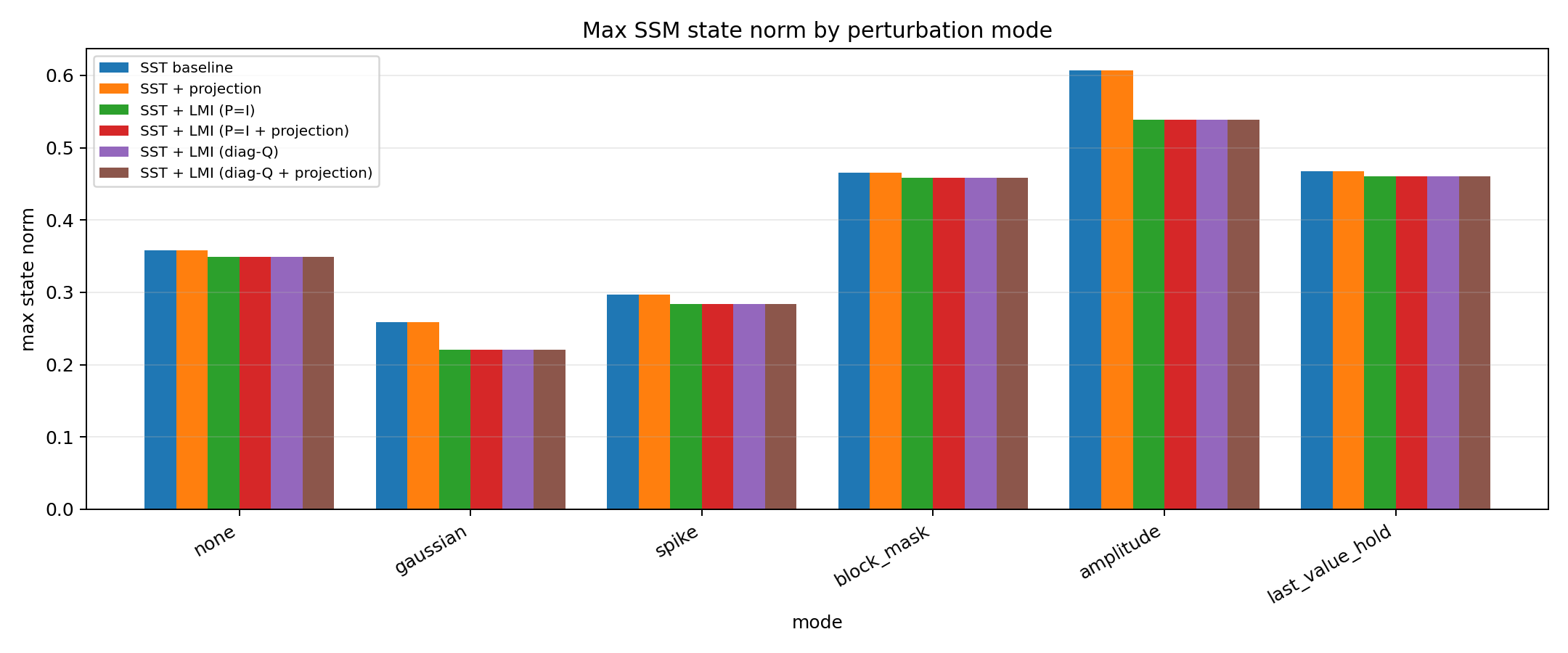}
\caption{Mean max SSM state norm per dataset and per perturbation mode. The state-norm reduction is small in absolute terms (mean $-3.4\%$, peaking at $-11\%$ on the worst-state-norm runs) but holds in $35/35$ pairs, providing a complementary internal diagnostic to the LMI-violation picture.}
\label{fig:robust_state_norm_by_mode}
\end{figure}

\begin{figure}[h]
\centering
\includegraphics[width=0.9\linewidth]{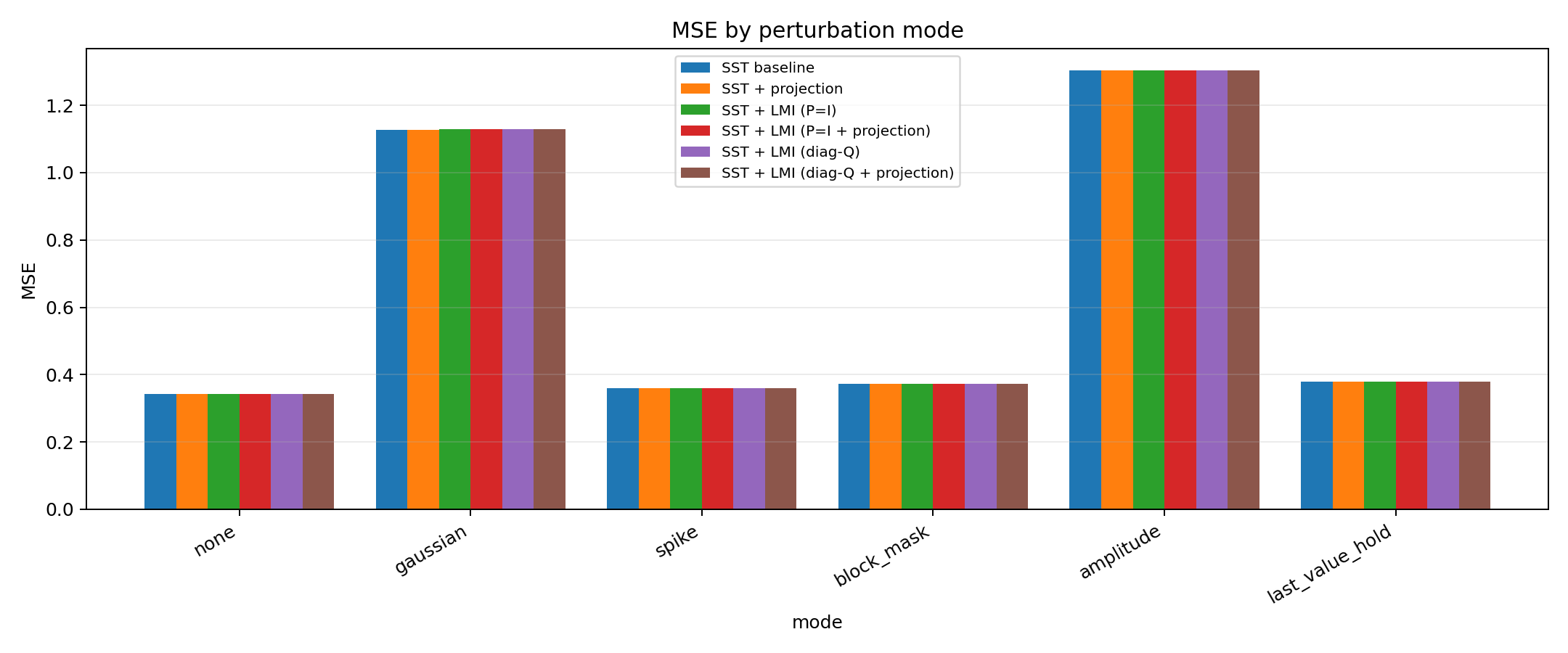}
\caption{Robust forecasting MSE per dataset and per perturbation mode. All six variants (baseline, projection-only, $P=I$ LMI, $P=I$ + projection, diag-$Q$, diag-$Q$ + projection) overlap visibly on every cell. This is the visual face of the ``no robust-MSE improvement'' finding discussed in Section~\ref{subsec:robustness}: the regularizer does not change end-to-end task error under perturbation, even though it changes Mamba-core internal diagnostics consistently.}
\label{fig:robust_mse_by_mode}
\end{figure}

\paragraph{On the projection variant.}
The projection-only variant (no LMI regularizer, just the post-update projection) is statistically indistinguishable from the unregularized baseline across all robustness diagnostics in our runs. The LMI-with-projection variants (both $P=I$ and diag-$Q$) are statistically indistinguishable from their non-projection counterparts. We interpret this as evidence that the projection step is not binding in our setting and accordingly do not promote it as a meaningful design choice.

\subsection{Router-Stability Ablation: Per-Dataset Detail}
\label{app:router_appendix}

The aggregate result of Section~\ref{subsec:router_ablation} is that adding the router-stability penalty preserves clean MSE in $28/28$ runs and consistently nudges the Mamba branch downward in $28/28$ runs, but the train-time correlation between Mamba weight and LMI violation is essentially zero on aggregate ($+0.0043$). The per-dataset breakdown of this train-time correlation is in Figure~\ref{fig:router_corr_by_dataset}; per-run deltas in $m_{\text{weight}}$ vs.\ the LMI-only variant are in Figure~\ref{fig:router_m_weight_delta}.

\begin{figure}[h]
\centering
\includegraphics[width=0.9\linewidth]{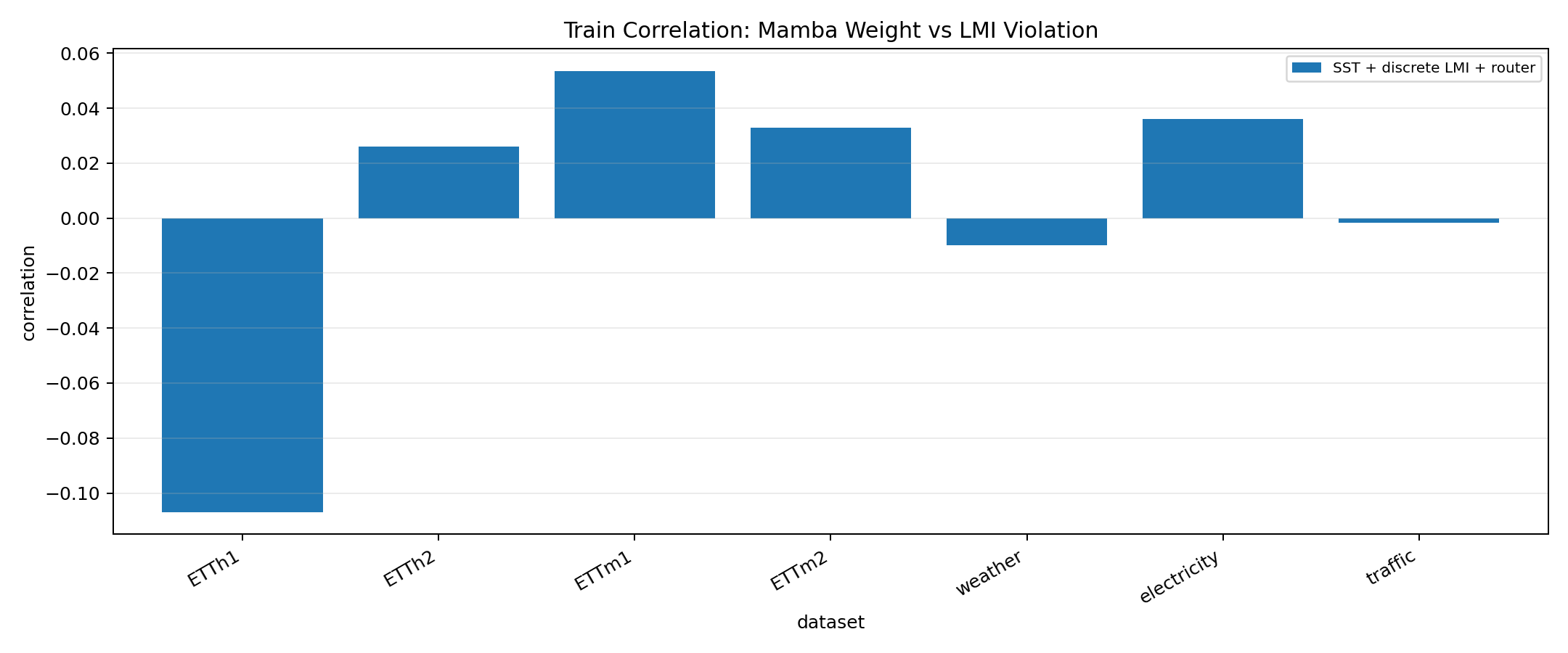}
\caption{Per-dataset train-time correlation between the router's Mamba weight and the local LMI violation, under the energy-aware router penalty. The signs are inconsistent across datasets, ranging from $-0.107$ on ETTh1 (negative correlation, the desired energy-aware direction) to $+0.054$ on ETTm1 (positive correlation, the wrong direction). This dataset-level inconsistency is the precise reason we do not interpret the aggregate ($+0.0043$) as evidence of a learned energy-aware schedule.}
\label{fig:router_corr_by_dataset}
\end{figure}

\begin{figure}[h]
\centering
\includegraphics[width=0.9\linewidth]{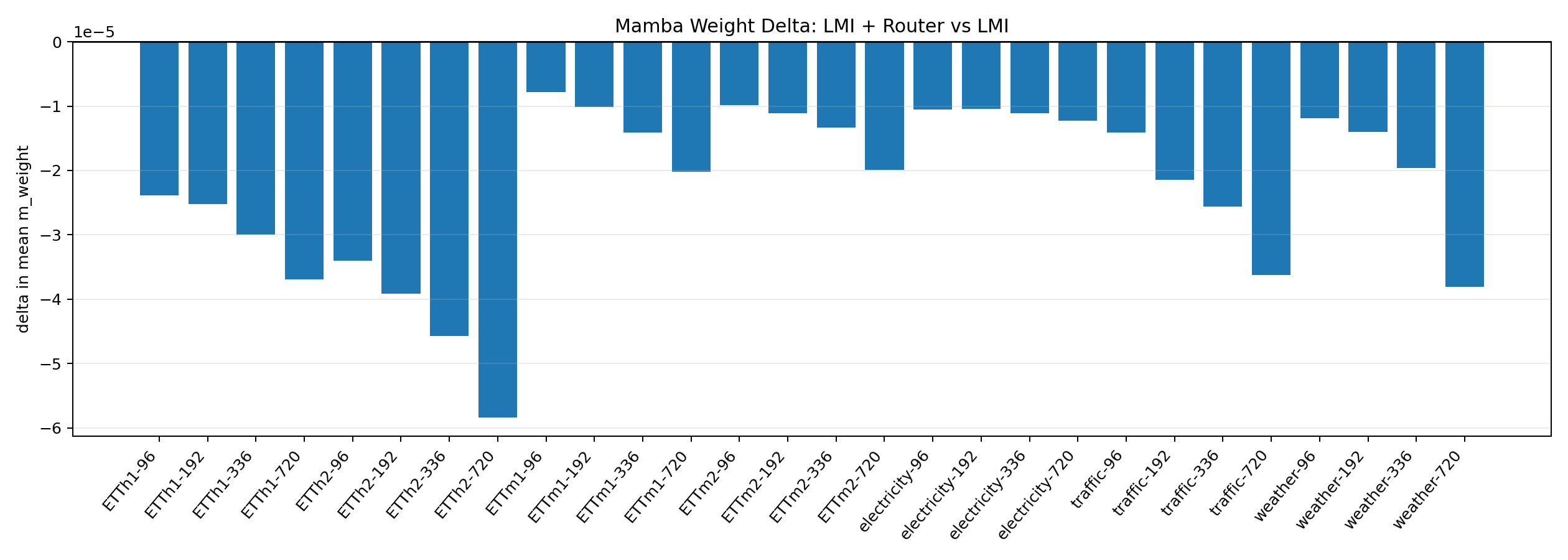}
\caption{Per-run $m_{\text{weight}}$ delta of the router-energy variant relative to the LMI-only variant. Every bar is negative ($28/28$ runs nudge the Mamba branch downward), confirming the directional finding of Section~\ref{subsec:router_ablation}, but the magnitudes are uniformly in the $10^{-5}$ to $10^{-4}$ range.}
\label{fig:router_m_weight_delta}
\end{figure}

\paragraph{What the per-dataset breakdown adds.}
On ETTh1 specifically, the train-time correlation is negative ($-0.107$) and consistent across all four horizons of that dataset. This shows the energy-aware mechanism is in principle capable of producing the desired sign of correlation, but it does not generalize to a uniform negative-sign pattern across the other six datasets. We expect that the underlying issue is that the router is trained on clean inputs, where local LMI violations are not strongly correlated with task loss, so the router has no incentive (in the task-loss gradient) to actively avoid the Mamba branch when the Mamba core is locally near a violation. The most direct path to a genuinely learned, energy-aware schedule, as discussed in Section~\ref{sec:conclusion}, is to retrain with injected perturbations so that violation magnitude becomes correlated with task loss; we leave this as future work.

\end{document}